\documentclass[11pt]{article}
\usepackage[utf8]{inputenc}
\usepackage[margin=1in]{geometry}
\linespread{1} 
\usepackage{graphicx}
\usepackage{gensymb}
\usepackage{amsfonts}
\usepackage{amsmath}
\usepackage{bigints}
\usepackage{mathtools}
\usepackage{enumerate}
\usepackage[shortlabels]{enumitem}
\usepackage{multirow}
\usepackage{float}
\usepackage{algorithm}
\usepackage{algpseudocode}

\usepackage{setspace}
\usepackage{hyperref}
\usepackage{tikz}
\usepackage{amsmath}
\DeclareMathOperator*{\argmin}{argmin}
\AtBeginEnvironment{algorithm}{
  \singlespacing
}
\usepackage{bbm}
\usepackage{subfigure}
\usepackage{array}
\newcolumntype{P}[1]{>{\centering\arraybackslash}m{#1}}
\usepackage[usestackEOL]{stackengine}
\newcommand{\RomanNumeral}[1]{\MakeUppercase{\romannumeral #1}}
\usepackage{amsthm}
\theoremstyle{definition}
\newtheorem{remark}{\textbf{Remark}}
\newtheorem{theorem}{Theorem}[section]
\newtheorem{assumption}{Assumption}

\newtheorem*{problem}{Problem}
\makeatletter
\def\th@plain{
  \thm@notefont{}
  \itshape 
}
\def\th@definition{
  \thm@notefont{}
  \normalfont 
}
\makeatother
\newcounter{proof}

\usepackage{placeins}

\title{Regularized Random Fourier Features and Finite Element Reconstruction for Operator Learning in Sobolev Space}
\author{Xinyue Yu and Hayden Schaeffer}
\date{}

\begin{document}

\maketitle

\vspace{-1cm}
\begin{center}
    Department of Mathematics\\
    University of California, Los Angeles\\
    Los Angeles, California 90095, USA
\end{center}

\begin{abstract}
Operator learning is a data-driven approximation of mappings between infinite-dimensional function spaces, such as the solution operators of partial differential equations. Kernel-based operator learning can offer accurate, theoretically justified approximations that require less training than standard methods. However, they can become computationally prohibitive for large training sets and can be sensitive to noise. We propose a regularized random Fourier feature (RRFF) approach, coupled with a finite element reconstruction map (RRFF-FEM), for learning operators from noisy data. The method uses random features drawn from multivariate Student's $t$ distributions, together with frequency-weighted Tikhonov regularization that suppresses high-frequency noise. We establish high-probability bounds on the extreme singular values of the associated random feature matrix and show that when the number of features $N$ scales like $m \log m$ with the number of training samples $m$, the system is well-conditioned, which yields estimation and generalization guarantees. Detailed numerical experiments on benchmark PDE problems, including advection, Burgers', Darcy flow, Helmholtz, Navier-Stokes, and structural mechanics, demonstrate that RRFF and RRFF-FEM are robust to noise and achieve improved performance with reduced training time compared to the unregularized random feature model, while maintaining competitive accuracy relative to kernel and neural operator tests.
\end{abstract}

\noindent\textbf{Keywords:} operator learning, random feature method, partial differential equations

\section{Introduction}
The development of efficient numerical methods to approximate solutions of partial differential equations (PDEs) is an important computational task in scientific and economic disciplines, including physics, engineering, and finance. For example, classical methods, such as finite difference methods, finite element methods (FEM), finite volume methods, and spectral methods, are fundamental approaches for designing numerical solvers to PDEs. In practice, all of these methods require knowledge of the governing equation of the PDE in order to construct the solver, which may not be accessible when the model or underlying dynamics are unknown. Additionally, these methods can be sensitive to noisy inputs or boundary data, since they assume the solution has some degree of smoothness. As a result, deep learning techniques have gained popularity as surrogate solvers for PDEs in the settings of incomplete models and noisy data. An example is physics-informed neural networks (PINNs) \cite{raissi2019pinn}, which train a deep neural network to approximate solutions of a PDE by minimizing a mean squared error loss corresponding to the training data and the PDE at a finite set of collocation points. One potential drawback, however, is that PINNs are designed to encode one PDE and initial or boundary condition at a time and thus must be retrained from scratch for any problem changes. This can incur high computational costs for multi-query applications. Operator learning (OL) trains a neural network to approximate the mapping between two function spaces\cite{chen1995universal,lu2021learning, li2021fourier}. In the context of approximating the solution of a PDE, OL is used to directly learn the solution operator between the input functions (initial or boundary data) and the output functions (the solution to the PDE). This eliminates the need for retraining the neural network for new data. 

Using neural networks to approximate solution operators was initially proposed in \cite{chen1993approximations, chen1995universal}, where they approximated functionals and operators using shallow neural networks formed from a linear combination of the product of subnetworks. For high-dimensional problems related to solving PDEs, popular neural operator architectures utilize deeper networks; examples include the deep operator network (DeepONet) \cite{lu2021learning}, Fourier neural operator (FNO) \cite{li2021fourier}, BelNet \cite{zhang2023belnet}, DeepGreen \cite{gin2021deepgreen}, and many more. Theoretical guarantees for neural operators \cite{chen1995universal} rely on the universal approximation theorem for functions, which states that a (shallow) neural network can accurately approximate any nonlinear continuous function. Although this property shows the existence of a network of sufficient size that approximates an operator up to a specified error tolerance, it does not guarantee that the network is computable in practice \cite{chen1995universal, deng2022approximationratesdeeponet, kovachki2021universalapproximationerrorboundsfno}. Additionally, training can be computationally expensive, in terms of time and computing resources. Recent work on scaling laws for operators  \cite{liu2024neural,weihs2025deep} have quantified the approximation rates for general Lipschitz operators and may provide an approach for estimating the accuracy for specific tasks.

Other methods for operator learning were introduced to overcome some of the shortcomings of neural operators. In \cite{mora2025operatorlearninggaussianprocesses}, a hybrid Gaussian process (GP) and neural network (NN) based framework was developed. It was shown that zero-mean GPs obtained accurate results with proper initialization of kernel parameters and no training, yielding a zero-shot model for operator learning. A kernel-based framework was developed in \cite{batlle2023kernelmethodscompetitiveoperator} with convergence results and a priori error bounds. An advantage is that the training of kernel-based methods are direct. Numerical results showed that kernel-based operator learning achieved comparable or superior performance to neural operators in both test accuracy and computational efficiency across a series of benchmark PDE examples. However, directly applying kernel methods to large datasets can be computationally expensive, since the method operates on the kernel matrix of size $m \times m$, where $m$ is the number of training samples.

The random feature method (RFM) can be viewed as an approximation to a kernel method, where randomization helps to reduce the computing cost. Instead of computing the kernel matrix directly, RFM maps the data to a low-dimensional feature space using a randomized feature map \cite{rahimi2007advances, rahimi2008uniformapproximationfunctionsrandombases}. Consequently, evaluations during training and testing  are faster than in kernel method. Random features were utilized in several approaches for PDE or high-dimensional systems including \cite{jingrun2022bridging,nelsen2021randomfeaturemodel,nelsen2024operatorlearningrandomfeatures,liu2023random,liao2025cauchyrandomfeaturesoperator}. In \cite{jingrun2022bridging}, random feature functions were employed as a randomized basis in a Galerkin-type method, after which collocation and penalty methods were used to enforce the PDE and boundary conditions. The works \cite{nelsen2021randomfeaturemodel, nelsen2024operatorlearningrandomfeatures, liao2025cauchyrandomfeaturesoperator} separately demonstrated that random features can be applied in an operator learning approach for generating the solutions to PDEs. In \cite{nelsen2021randomfeaturemodel, nelsen2024operatorlearningrandomfeatures}, various numerical experiments were conducted that showed RFM produces accurate results. A high-probability non-asymptotic error bound was also established under the framework of vector-valued reproducing kernel Hilbert spaces (RKHS) as a result of \cite{lanthaler2023errorbounds}. This result, however, requires that the target operator lies within an RKHS with an operator-valued kernel, which may not necessarily be the case in practice. Additionally, to train this model, samples in the frequency domain are needed, which may not always be accessible. The authors of \cite{liu2023random} used random features for learning the interaction kernels of multi-agent systems. 
In \cite{liao2025cauchyrandomfeaturesoperator}, generalization error bounds for operator learning with Cauchy random features were derived using the RFM error for function approximation. In some ways, this is similar to the operator learning theoretical approaches.  In the RFM setting, the analysis involves the condition number of the random feature matrix, similar to prior risk and generalization bounds from \cite{chen2024conditioning, chen2022concentration, liao2024differentially, saha2023harfe, hashemi2023generalization, liao2025cauchyrandomfeaturesoperator}. Since the condition number can be used to bound the error, obtaining theoretical properties of the singular values are often sufficient. In numerical experiments from \cite{liao2025cauchyrandomfeaturesoperator}, it was shown that the RFM can reduce training time while maintaining competitive test accuracy as compared to kernel method and neural operator for smooth problems. However, since the approximation is obtained as a mininum-norm interpolation, it is sensitive to noise and outliers. 

We propose a regularized random Fourier feature method for operator learning in the regime of noisy data. The approach utilizes regularization in the RFM to handle noise and a finite element interpolation to allow for evaluation at new spatial points. We provide theoretical guarantees on the conditioning of the random feature matrix and extensive numerical results for this approach on a variety of benchmark PDEs used in \cite{lu2022comprehensive, dehoop2022costaccuracy,batlle2023kernelmethodscompetitiveoperator,liao2025cauchyrandomfeaturesoperator}. We summarize the contributions as follows:
\begin{itemize}
    \item Following the operator learning framework utilized in \cite{mhaskar2023localapproximation,batlle2023kernelmethodscompetitiveoperator,liu2025generalizationerror,liao2025cauchyrandomfeaturesoperator}, we propose RRFF-FEM, a regularized random Fourier feature (RRFF) method paired with a finite element recovery map, for operator learning in the noisy data setting. 

    \item We derive high-probability bounds on the condition number of the random feature matrix with weights sampled from the Student's $t$ distribution, similar to the theoretical results in \cite{chen2024conditioning, liao2025cauchyrandomfeaturesoperator}. This covers the range of potential sampling from Gaussian to Cauchy distributions and can be seen as a generalization of prior results. We show that if the complexity ratio $\frac{N}{m}$ scales like $\log m$, where $N$ is the number of random features and $m$ is the number of training samples, then the random feature matrix is well-conditioned with high probability. Estimation and generalization bounds follow from \cite{liao2025cauchyrandomfeaturesoperator}.

    \item We present extensive numerical results on benchmark examples found in \cite{lu2022comprehensive, dehoop2022costaccuracy,batlle2023kernelmethodscompetitiveoperator,liao2025cauchyrandomfeaturesoperator}. Our results show that for function approximation with noisy data, RRFF outperforms the unregularized random Fourier feature (RFF) method and achieves faster training times and lower test errors. Furthermore, for operator learning in the regime of noisy data, RRFF-FEM attains lower test errors than unregularized RFF coupled with FEM, which we denote by RFF-FEM. The implementation of our RFM-based operator learning algorithm is also simple and does not require costly computational resources, such as GPU.
\end{itemize}

We organize our paper as follows. In Section \ref{problem_statement}, we define our proposed method for operator learning in the regime of noisy data, which combines a regularized random feature model and a finite element recovery map. Then in Section \ref{conditioning}, we prove a high-probability bound on the conditioning of the random feature matrix associated with the Student's $t$ distribution. Section \ref{numerical_experiments} provides our numerical results and comparisons to the unregularized random feature model. The discussion and concluding remarks are in Section \ref{discussion}.

\section{Problem Statement}
\label{problem_statement}
We follow a construction of the operator learning problem similar to that in \cite{ mhaskar2023localapproximation,bartolucci2023representationequivalentneuraloperators,batlle2023kernelmethodscompetitiveoperator,liu2025generalizationerror,liao2025cauchyrandomfeaturesoperator}; however, in this work, we consider the setting where the data is corrupted by random noise. We assume that the functions live on compact subspaces of $L^2$. Specifically, the input and output function spaces are denoted by:
\begin{align*}
    \mathcal{U} &= C^0\big(D_\mathcal{U},\mathbb{R}^{d_1}\big), \hspace{0.25cm}D_\mathcal{U}\subset \mathbb{R}^{d_x}\\
    \mathcal{V} &= C^0\big(D_\mathcal{V},\mathbb{R}^{d_2}\big), \hspace{0.25cm}D_\mathcal{V}\subset \mathbb{R}^{d_y},
\end{align*}
where $D_\mathcal{U}$ and $D_\mathcal{V}$ are the corresponding spatial domains for the input and output functions, respectively. In the noise-free setting, given a set of training samples $\{(u_j, v_j)\}_{j \in [M]} \subset \mathcal{U} \times \mathcal{V}$, the goal is to learn an operator $G: \mathcal{U} \to \mathcal{V}$ such that $G(u_j)=v_j$ for all $j\in[M]$. We assume $G$ is a Lipschitz continuous operator, i.e.,
there exists a constant $L_G > 0$ such that
\begin{equation*}
    \|G(u_1)-G(u_2) \|_{L^2(D_\mathcal{V})} \leq L_G\|u_2-u_2\|_{L^2(D_\mathcal{U})}
\end{equation*}
for all $u_1, u_2 \in \mathcal{U}$.
To make the problem tractable, we consider a finite dimensional representation of $G$ and assume our training data is observed on a finite number of function evaluations at prescribed collocation points. Given collocation points $\{\mathbf{x}_j\}_{j\in[n]} \subset D_\mathcal{U}$ and $\{\mathbf{y}_j\}_{j\in[m]}\subset D_\mathcal{V}$, define the sampling operators $S_\mathcal{U} : \mathcal{U}\to\mathbb{R}^n$ and $S_\mathcal{V}:\mathcal{V}\to\mathbb{R}^m$:
\begin{align*}
    S_\mathcal{U}(u) &= [u(\mathbf{x}_1),\dots,u(\mathbf{x}_n)]^\top \in \mathbb{R}^n\\
    S_\mathcal{V}(v) &= [v(\mathbf{y}_1),\dots,v(\mathbf{y}_m)]^\top \in \mathbb{R}^m,
\end{align*}
i.e., the evaluation of the functions on the collocation points. 

In the noisy case, given ``true'' input-output pairs $\{(\mathbf{u}_j,\mathbf{v}_j)\}_{j\in[M]} = \{(S_\mathcal{U}(u_j),S_\mathcal{V}(v_j))\}_{j\in[M]}$, we observe noisy data:
\begin{equation*}
    \tilde{\mathbf{u}}_j = \mathbf{u}_j + \boldsymbol{\epsilon}_{\mathbf{u}_j} \ \ \text{ and } \ \ \tilde{\mathbf{v}}_j = \mathbf{v}_j + \boldsymbol{\epsilon}_{\mathbf{v}_j},
\end{equation*}
where $\boldsymbol{\epsilon}_{\mathbf{u}_j} \sim N(0, \sigma_{u_j} \mathbf{I}_n)$ and $\boldsymbol{\epsilon}_{\mathbf{v}_j} \sim N(0, \sigma_{v_j} \mathbf{I}_m)$ for all $j \in [M]$. The noise is assumed to be i.i.d. and isotropic in space (i.e., for each fixed $j$) but can vary between functions.\footnote{See Appendix \hyperref[appendix:noise]{A} for additional details on how noise is added for the numerical examples in Section \ref{numerical_experiments}.} The aim is to recover the operator $G$ from the training data $\{(\tilde{\mathbf{u}}_j,\tilde{\mathbf{v}}_j)\}_{j\in[M]}$, but since the data is noisy, the condition $G(u_j)=v_j$ for all $j\in[M]$ is relaxed to $G(u_j)\approx v_j$ for all $j\in[M]$.
Formally, our problem statement is defined below.
\begin{problem}
    We aim to learn an operator $G$ from the noisy training data $\{\tilde{\mathbf{u}}_j, \tilde{\mathbf{v}}_j\}_{j\in[M]}$, such that the functions $u_j\in\mathcal{U},v_j\in\mathcal{V}$ satisfy $G(u_j)\approx v_j$ for all $j\in[M]$.
\end{problem}

This problem can be expressed in a commutative diagram, as depicted in Figure \ref{fig:diagram}. The map $f: \mathbb{R}^n \to \mathbb{R}^m$, which is the finite dimensional representation of the operator $G$, is defined explicitly as:
\begin{equation*}
    f = S_\mathcal{V} \circ G \circ R_\mathcal{U},
\end{equation*}
where $R_\mathcal{U}: \mathbb{R}^n \to \mathcal{U}$ is a recovery map that takes vectorized function values in $\mathbb{R}^n$ as input and outputs a function in $\mathcal{U}$. We assume $f$ has the form $f(\mathbf{u}) = [f_1(\mathbf{u}),\dots,f_m(\mathbf{u})]^\top \in \mathbb{R}^m$ for every $\mathbf{u} \in \mathbb{R}^n$ and each component $f_j: \mathbb{R}^n \to \mathbb{R}$ is Lipschitz, i.e., there exists a constant $L_j >0$ such that 
\begin{equation*}
    |f_j(\mathbf{u}_1) - f_j(\mathbf{u}_2)|\leq L_j \|\mathbf{u}_1 - \mathbf{u}_2\|_2
\end{equation*}
for all $\mathbf{u}_1$, $\mathbf{u}_2 \in \mathbb{R}^n$. After learning the recovery map $R_\mathcal{V}: \mathbb{R}^m \to \mathcal{V}$, which maps a vector of function evaluations in $\mathbb{R}^m$ to a function in $\mathcal{V}$, and the approximation $\hat{f}:\mathbb{R}^n \to \mathbb{R}^m$ of $f$, we obtain an estimator: 
\begin{equation*}
    \hat{G} = R_\mathcal{V} \circ \hat{f} \circ S_\mathcal{U}
\end{equation*}
for the operator $G$. In the noise-free setting, the recovery map $R_\mathcal{V}$ can be viewed as an interpolation operator.

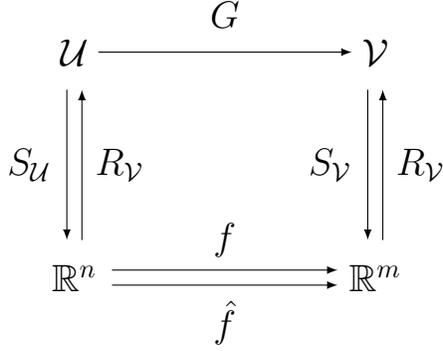
\begin{figure}[!t]
\centering
\begin{tikzpicture}
\tikzset{edge/.style = {->,> = latex}}
\tikzstyle{every node}=[font=\Large]
\node [font=\Large] at (8,15)(A) {$\mathcal{U}$};
\node [font=\Large] at (12,15)(B) {$\mathcal{V}$};
\node [font=\Large] at (8,12)(C) {$\mathbb{R}^n$};
\node [font=\Large] at (12,12)(D) {$\mathbb{R}^m$};
\draw [edge] (A) -- (B);
\draw [edge] (7.9,14.5) -- (7.9,12.5);
\draw [edge] (8.1,12.5) -- (8.1,14.5);
\draw [edge] (11.9,14.5) -- (11.9,12.5);
\draw [edge] (12.1,12.5) -- (12.1,14.5);
\draw [edge] (8.5,12.1) -- (11.5,12.1);
\draw [edge] (8.5,11.9) -- (11.5,11.9);
\node [font=\Large] at (10,15.5) {$G$};
\node [font=\Large] at (7.4,13.5) {$S_\mathcal{U}$};
\node [font=\Large] at (8.6,13.5) {$R_\mathcal{V}$};
\node [font=\Large] at (10,12.5) {$f$};
\node [font=\Large] at (10,11.4) {$\hat{f}$};
\node [font=\Large] at (12.6,13.5) {$R_\mathcal{V}$};
\node [font=\Large] at (11.4,13.5) {$S_\mathcal{V}$};
\end{tikzpicture}
\caption{Commutative diagram of the operator learning framework following \cite{ mhaskar2023localapproximation,bartolucci2023representationequivalentneuraloperators,batlle2023kernelmethodscompetitiveoperator,liu2025generalizationerror,liao2025cauchyrandomfeaturesoperator}.}
\label{fig:diagram}
\end{figure}

In \cite{bartolucci2023representationequivalentneuraloperators,batlle2023kernelmethodscompetitiveoperator, liu2025generalizationerror, liao2025cauchyrandomfeaturesoperator}, this operator learning framework was proposed alongside various function approximation techniques. The works \cite{bartolucci2023representationequivalentneuraloperators, liu2025generalizationerror} both approximated $f$ using neural networks, with \cite{liu2025generalizationerror} using an auto-encoder-based approach. In \cite{batlle2023kernelmethodscompetitiveoperator}, $\mathcal{U}$ and $\mathcal{V}$ were endowed with RKHS structures, and optimal recovery maps in the RKHS were employed to construct $R_\mathcal{U}$ and $R_\mathcal{V}$. Furthermore, the optimal recovery map in a vector-valued RKHS served as an approximation of $f$. Applying the standard representer formulae in the theory of optimal recovery, these maps admit closed-form expressions for kernel interpolation \cite{scholkopf2001generalizedrepresentertheorem, foucart2022learningnonrandomdata}. Lastly, the authors in \cite{liao2025cauchyrandomfeaturesoperator} constructed a RFM-based approach to approximate $f$ and provided theoretical guarantees for the error. In the prior work, the measurements were assumed to be noise-free, and thus, minimum-norm interpolation could be utilized within each component of the commutative diagram. In this work, we construct a random feature approximation $\hat{f}$ of $f$ with regularized least squares (accounting for noise) and a finite element interpolation for the recovery maps $R_\mathcal{U}$ and $R_\mathcal{V}$ to allow for flexibility in meshes and geometry.

\subsection{Random Feature Model} \label{rfm}
\noindent To construct a random Fourier feature approximation of some function $g$, we draw N i.i.d. samples $\{\boldsymbol{\omega}_k\}_{k\in[N]} \subset \mathbb{R}^d$ from a probability distribution $\rho(\boldsymbol{\omega})$ and define the random feature map:
\begin{equation*}
    g^{\#}(\mathbf{x})=\sum_{k=1}^N c_k^{\#} \exp{(i\langle \boldsymbol{\omega}_k,\mathbf{x})\rangle}.
\end{equation*}
The probability distribution is chosen by the user and can be utilized to encode additional regularity into the function approximation. From the commutative diagram, the map $f: \mathbb{R}^n \to \mathbb{R}^m$ has the form $f(\mathbf{u}) = [f_1(\mathbf{u}),\dots,f_m(\mathbf{u})]^\top \in \mathbb{R}^m$. Let $\hat{f_j}: \mathbb{R}^n \to \mathbb{R}$ be a random feature approximation of $f_j:\mathbb{R}^n \to \mathbb{R}$:
\begin{equation}\label{f_j}
    \hat{f}_j(\mathbf{u}) = \sum\limits_{k=1}^N c_k^{(j)} \exp(i\langle \boldsymbol{\omega}_k, \mathbf{u}\rangle)
\end{equation}
so that the vector-valued random feature map of $f$ is: 
\begin{equation}\label{f_hat}
    \hat{f}(\mathbf{u}) = \big[\hat{f}_1(\mathbf{u}),\dots,\hat{f}_m(\mathbf{u})\big]^\top \in \mathbb{R}^m.
\end{equation}

\noindent We compute $\mathbf{c}^{(j)} \in \mathbb{R}^N$ for all $j\in[m]$ by solving the minimization problem:
    \begin{equation}\label{min_f}
    \mathbf{c}^{(j)}= \argmin\limits_{\mathbf{x}\in \mathbb{R}^N} \left\|\mathbf{A} \mathbf{x} - \mathbf{v}^{(j)}\right\|_2^2 +\alpha\sum\limits_{k=1}^N \left\|\boldsymbol{\omega}_k\right\|_2^p \left|\mathbf{x}_k\right|^2
    \end{equation}
where $\mathbf{A} \in \mathbb{C}^{M\times N}$ is defined component-wise by $\mathbf{A}_{\ell,k} = \exp(i\langle \boldsymbol{\omega}_k, \mathbf{u}_\ell\rangle )$ and 
\begin{equation*}
    \mathbf{v}^{(j)} =[v_1(\mathbf{y}_j),\dots,v_M(\mathbf{y}_j)]^\top \in \mathbb{R}^M.
\end{equation*}
The second term in (\ref{min_f}) is a regularization term that handles noisy data by penalizing the coefficients $c_k^{(j)}$ associated with the high frequency random feature weights $\boldsymbol{\omega}_k$. 

We denote the unregularized random Fourier feature model with $\alpha = 0$ as RFF and the regularized random Fourier feature model with $\alpha> 0$ as RRFF. In our numerical examples for RFF and RRFF in Section \ref{numerical_experiments}, we use feature weights generated from the multivariate Student's $t$ distribution with varying degrees of freedom $\nu$ and refer to the RFF method as RFF-$\nu$ and the RRFF method as RRFF-$\nu$, where $\nu$ is replaced by the chosen degree of freedom.

\subsection{Finite Element Recovery Map} \label{fem}
We define the recovery maps $R_\mathcal{U}: \mathbb{R}^n \to \mathcal{U}$, which maps a vector $\mathbf{u} \in \mathbb{R}^n$ to a function $R_\mathcal{U}[\mathbf{u}]\in \mathcal{U}$, and $R_\mathcal{V}: \mathbb{R}^m \to \mathcal{V}$, which maps a vector $\mathbf{v} \in \mathbb{R}^m$ to a function $R_\mathcal{V}[\mathbf{v}]\in \mathcal{V}$. We construct $R_\mathcal{U}$ and $R_\mathcal{V}$ using finite element interpolation. For a given $\mathbf{u} = [u_1,\dots,u_n]^\top\in\mathbb{R}^n$, we form the function $R_\mathcal{U}[\mathbf{u}]\in\mathcal{U}$ that interpolates the values $\{u_j\}_{j\in[n]}$ on the grid points $\{\mathbf{x}_j\}_{j\in[n]} \subset D_\mathcal{U}$ as follows. Let $\mathcal{T}_\mathcal{U}$ be a triangulation of $D_\mathcal{U}$ with nodes $\{\mathbf{x}_j\}_{j\in[n]}$. Define the approximation space using Lagrange finite elements, which will be utilized in the examples:
\begin{equation*}
    V_{\mathcal{T}_\mathcal{U}} := \{v\in C^0(\overline{D_\mathcal{U}}): v|_K\in\mathcal{P}_k(K) \text{ for all } K\in\mathcal{T}_\mathcal{U}\} \subset \mathcal{U}.
\end{equation*}
Let $\{\phi_j\}_{j\in[n]}$ be a basis of $V_{\mathcal{T}_\mathcal{U}}$ such that: \begin{equation*}
    \phi_j(\mathbf{x}_i) = \delta_{ij} = \begin{cases}
    0 \text{ if } i\neq j\\
    1 \text{ if } i = j
\end{cases}
\end{equation*}
The finite element interpolant $R_\mathcal{U}[\mathbf{u}]\in V_{\mathcal{T}_\mathcal{U}} \subset \mathcal{U}$ is thus defined as:
\begin{equation*}
    R_\mathcal{U}[\mathbf{u}](\mathbf{x}) = \sum_{j=1}^n u_j\phi_j(\mathbf{x}).
\end{equation*}
Similarly, for any vector $\mathbf{v}=[v_1,\dots,v_m]^\top\in\mathbb{R}^m$, we define $R_\mathcal{V}[\mathbf{v}]\in\mathcal{V}$ as the function  that interpolates the values $\{v_j\}_{j\in[m]}$ at the grid points $\{\mathbf{y}_j\}_{j\in[m]} \subset D_\mathcal{V}$. Let $\mathcal{T}_\mathcal{V}$ denote a triangulation of $D_\mathcal{V}$ with nodes $\{\mathbf{y}_j\}_{j\in[m]}$. As before, we use Lagrange finite elements and define the approximation space: 
\begin{equation*}
    V_{\mathcal{T}_\mathcal{V}} := \{v\in C^0(\overline{D_\mathcal{V}}): v|_K\in\mathcal{P}_k(K) \text{ for all } K\in\mathcal{T}_\mathcal{V}\} \subset \mathcal{V}.
\end{equation*}
Let $\{\psi_j\}_{j\in[m]}$ be a basis of $V_{\mathcal{T}_\mathcal{V}}$ satisfying: \begin{equation*}
    \psi_j(\mathbf{y}_i) = \delta_{ij} = \begin{cases}
    0 \text{ if } i\neq j\\
    1 \text{ if } i = j 
\end{cases}
\end{equation*}
The finite element interpolant $R_\mathcal{V}[\mathbf{v}]\in V_{\mathcal{T}_\mathcal{V}} \subset \mathcal{V}$ is given by:
\begin{equation}\label{R_v}
    R_\mathcal{V}[\mathbf{v}](\mathbf{y}) = \sum_{j=1}^m v_j\psi_j(\mathbf{y}).
\end{equation}

For operator learning, we combine RFF and RRFF with the finite element recovery map and refer to these methods as RFF-FEM and RRFF-FEM, respectively. This enables the output to be a function (through interpolation), which can be evaluated throughout the spatial domain. In particular, we use the RFF and RRFF methods for $\hat{f}$ and the finite element recovery map for $R_\mathcal{V}$. The numerical experiments for RFF-FEM and RRFF-FEM in Section \ref{numerical_experiments} use feature weights drawn from the multivariate Student's $t$ distribution with different degrees of freedom $\nu$. In these cases, we denote the RFF-FEM method as RFF-FEM-$\nu$ and the RRFF-FEM method as RRFF-FEM-$\nu$, where $\nu$ is substituted with the value we chose for the degree of freedom.

\begin{algorithm}[!htbp]
    \caption{RRFF - Training of $\hat{f}$}
    \label{f_hat_algorithm}
    \begin{algorithmic}[1]
        \Require Noisy training data $\{(\tilde{\mathbf{u}}_j,\tilde{\mathbf{v}}_j)\}_{j\in[M]}$, number of random features $N$, and probability distribution $\rho(\boldsymbol{\omega})$
        \Ensure Random feature approximation $\hat{f}$
        \State Sample $N$ i.i.d. random feature weights $\{\boldsymbol{\omega}_k\}_{k\in[N]}$ from $\rho(\boldsymbol{\omega})$.
        \For{$j =1:m$} \State Compute $\mathbf{c}^{(j)}$ by solving (\ref{min_f}) using Cholesky decomposition.
        \EndFor
        \State Formulate each component $\hat{f}_j$ of  $\hat{f}$ by substituting trained coefficient vectors $\mathbf{c}^{(j)}$ and random feature weights $\{\boldsymbol{\omega}_k\}_{k\in[N]}$ into (\ref{f_j}).
        \State Return $\hat{f}$ by concatenating the $\hat{f}_j$'s as in (\ref{f_hat}).
    \end{algorithmic}
\end{algorithm}

\begin{algorithm}[!htbp]
    \caption{RRFF-FEM - Inference}
    \label{RRFF-FEM-inference}
    \begin{algorithmic}[1]
        \Require Test function $u\in\mathcal{U}$
        \Ensure Random feature approximation $\hat{G}(u)$ of $G(u)$
        \State Use sampling operator $S_\mathcal{U}$ to get function values of $u$ at collocation points $\{\mathbf{x}_j\}_{j\in[n]}$ and let $\mathbf{u} = S_\mathcal{U}(u)$.
        \State Obtain random feature map $\hat{f}$ from Algorithm \ref{f_hat_algorithm}.
        \State Substitute $\mathbf{u}$ into $\hat{f}$ and let $\mathbf{v} = \hat{f}(\mathbf{u})$.
        \State Return finite element interpolant $R_\mathcal{V}[\mathbf{v}]$ of the form (\ref{R_v}).
    \end{algorithmic}
\end{algorithm}

\section{Conditioning of Random Feature Matrix}
\label{conditioning}
In this section, we discuss bounds on the extreme singular values of the random feature matrix associated with this approach. We consider the overparameterized setting, where the number of random features $N$ is greater than the number of training samples $m$. We consider random Fourier features with feature weights drawn from the multivariate Student's $t$ distribution in $\mathbb{R}^d$ centered at $\mathbf{0}$ with scale parameter $\sigma > 0$ and degrees of freedom $\nu>0$, whose probability density function is given by:
\begin{equation*}
    \rho(\boldsymbol{\omega}) = \frac{\Gamma\left(\frac{\nu+d}{2}\right)}{\sigma\pi^{d/2}\nu^{d/2}\Gamma\left(\frac{\nu}{2}\right)}\left(1+\frac{1}{\sigma^2 \nu}\|\boldsymbol{\omega}\|_2^2\right)^{-\tfrac{\nu+d}{2}}.
\end{equation*}
This can be considered as an extension and generalization of the prior works \cite{chen2024conditioning, chen2022concentration, liao2024differentially, saha2023harfe, hashemi2023generalization, liao2025cauchyrandomfeaturesoperator}, since the Cauchy and Gaussian cases are subsets of the Student's $t$ distribution. This allows for greater flexibility for different applications.

\begin{remark}
    For $\nu=1$, the multivariate Student's $t$ distribution is the Cauchy distribution. As $\nu\to\infty$, the multivariate Student's $t$ distribution limits to the standard Gaussian distribution $\mathcal{N}(0,1)$.
\end{remark}

\subsection{Background} \label{background}
Consider a compact domain $D \subset \mathbb{R}^d$. Define the function space on D:
\begin{equation*}
    \mathcal{F}(\rho):=\left\{f(\textbf{x})=\int_{\mathbb{R}^d}\hat{f}(\boldsymbol{\omega})\exp(i\langle\boldsymbol{\omega},\textbf{x}\rangle)d\boldsymbol{\omega}:\| f\|_\rho^2 = \mathbb{E}_{\boldsymbol{\omega}} \left|\frac{\hat{f}(\boldsymbol{\omega})}{\rho(\boldsymbol{\omega})}\right|^2<\infty\right\},
\end{equation*}
where $\hat{f}(\boldsymbol{\omega})$ is the Fourier transform of the function $f$. Proposition 4.1 in \cite{rahimi2008uniformapproximationfunctionsrandombases} shows that the function space $\mathcal{F}(\rho)$ defines an RKHS with the kernel function:
\begin{equation*}
    k(\textbf{x},\textbf{y}) = \int_{\mathbb{R}^d} \exp(i\langle\boldsymbol{\omega},\textbf{x}-\textbf{y}\rangle)d\rho(\boldsymbol{\omega}).
\end{equation*}
For the multivariate Student's $t$ distribution, this is the Mat\'ern kernel:
\begin{equation*}
    k(\mathbf{x},\mathbf{y}) = \frac{\left(\sigma\sqrt{\nu}\|\mathbf{x}-\mathbf{y}\|_2\right)^{\nu/2}}{2^{\nu/2-1}\Gamma\left(\frac{\nu} {2}\right)} K_{\nu/2}\left(\sigma\sqrt{\nu}\|\mathbf{x}-\mathbf{y}\|_2\right),
\end{equation*}
which generates the classical Sobolev space:
\begin{align*}
    H^{\frac{\nu+d}{2}}\left(\mathbb{R}^d\right) &= W^{\frac{\nu+d}{2},2}\left(\mathbb{R}^d\right)\\ &= \left\{u\in L^2\left(\mathbb{R}^d\right):D^pu\in L^2\left(\mathbb{R}^d\right) \text{ for all } |p|\leq\tfrac{\nu+d}{2}\right\},
\end{align*}
and is a natural function space for studying PDEs \cite{porcu2024matern}. The Sobolev embedding theorem ensures that all functions in this space are well-defined and continuous provided $\nu>0$ \cite{evans2010pde}.

\subsection{Main Result} \label{main_result}
Before we show this section's main result, we state the assumption on the training samples and feature weights that is used. 
\begin{assumption}\label{rf_assumption}
    Let $D \subset \mathbb{R}^d$ be a compact domain. For the data points $\{\mathbf{x}_j\}_{j\in[m]}\subset D$, there exists a constant $\kappa>0$ such that $\|\mathbf{x}_j-\mathbf{x}_{j'}\|_2 \geq \kappa$ for all $j,j'\in[m]$ with $j\neq j'$. The feature weights $\{\boldsymbol{\omega}_k\}_{k\in N}\subset\mathbb{R}^d$ are sampled from the multivariate Student's $t$ distribution centered at $\mathbf{0}$ with scale parameter $\sigma$ and degrees of freedom $\nu > 0$. The random feature matrix $\mathbf{A} \in \mathbb{C}^{m\times N}$ is defined component-wise by $\mathbf{A}_{j,k}=\exp(i\langle\boldsymbol{\omega}_k,\mathbf{x}_j\rangle)$.  
\end{assumption}

Using this assumption, we prove the following result on the concentration of the singular values of the associated random feature matrix.

\begin{theorem}[Concentration Property of Random Feature Matrix]\label{thm:concentration}
Let $D \subset \mathbb{R}^d$ be a compact domain. Assume the data $\{\mathbf{x}_j\}_{j\in[m]}\subset D$, the random feature weights $\{\boldsymbol{\omega}_k\}_{k\in[N]}$, and the random feature matrix $\mathbf{A}\in\mathbb{C}^{m\times N}$ satisfy Assumption \ref{rf_assumption}. Suppose the following conditions hold: 
\begin{align}
    &N \geq C\eta^{-2}m\log\left(\frac{2m}{\delta}\right)\label{cond1}\\
    &\frac{(\sigma\sqrt{\nu}\kappa)^{\nu/2}}{2^{\nu/2-1}\Gamma\left(\frac{\nu}{2}\right)} K_{\nu/2}(\sigma\sqrt{\nu}\kappa)\leq \frac{\eta}{m}\label{cond2}
\end{align}
for some $\delta,\eta \in(0,1)$, where $C>0$ is a universal constant and $K_{\nu/2}(x)$ is the modified Bessel function of the second kind with order $\nu/2$. Then with probability at least $1-\delta$, we have:
\begin{equation*}
    \left\|\frac{1}{N}\mathbf{A}\mathbf{A}^*-\mathbf{I}_m\right\|_2\leq 2\eta.
\end{equation*}
\end{theorem}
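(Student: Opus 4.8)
The plan is to decompose $\frac{1}{N}\mathbf{A}\mathbf{A}^* - \mathbf{I}_m$ into a \emph{stochastic} fluctuation term and a deterministic \emph{bias} term and bound each by $\eta$. First I would compute the mean. Writing $\mathbf{z}_k = [\exp(i\langle\boldsymbol{\omega}_k,\mathbf{x}_1\rangle),\dots,\exp(i\langle\boldsymbol{\omega}_k,\mathbf{x}_m\rangle)]^\top \in \mathbb{C}^m$, we have $\mathbf{A}\mathbf{A}^* = \sum_{k=1}^N \mathbf{z}_k\mathbf{z}_k^*$, a sum of i.i.d.\ rank-one matrices. The $(j,j')$ entry of $\mathbb{E}[\mathbf{z}_k\mathbf{z}_k^*]$ is $\mathbb{E}_{\boldsymbol{\omega}}\exp(i\langle\boldsymbol{\omega},\mathbf{x}_j-\mathbf{x}_{j'}\rangle)$, the characteristic function of the Student's $t$ distribution, which is exactly the Mat\'ern kernel $k$ identified in Section \ref{background}. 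Since $k(\mathbf{x},\mathbf{x})=1$, this gives $\mathbb{E}[\frac{1}{N}\mathbf{A}\mathbf{A}^*] = \mathbf{I}_m + \mathbf{K}$, where $\mathbf{K}$ is the Hermitian matrix with zero diagonal and off-diagonal entries $k(\mathbf{x}_j,\mathbf{x}_{j'})$. By the triangle inequality, $\|\frac{1}{N}\mathbf{A}\mathbf{A}^* - \mathbf{I}_m\|_2 \leq \|\frac{1}{N}\mathbf{A}\mathbf{A}^* - \mathbb{E}[\frac{1}{N}\mathbf{A}\mathbf{A}^*]\|_2 + \|\mathbf{K}\|_2$, so the goal reduces to bounding each summand by $\eta$.

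For the deterministic term, I would bound $\|\mathbf{K}\|_2$ by its maximum absolute row sum (valid for Hermitian $\mathbf{K}$ via Gershgorin). The key structural input is that the Mat\'ern kernel depends only on $\|\mathbf{x}-\mathbf{y}\|_2$ and is monotonically decreasing in that distance, a property of $r\mapsto r^{\nu/2}K_{\nu/2}(r)$. Combined with the separation hypothesis $\|\mathbf{x}_j - \mathbf{x}_{j'}\|_2 \geq \kappa$ from Assumption \ref{rf_assumption}, every off-diagonal entry is at most $k$ evaluated at distance $\kappa$, which is precisely the left-hand side of (\ref{cond2}). Condition (\ref{cond2}) then forces each entry to be at most $\eta/m$, so each row sum is at most $(m-1)\eta/m < \eta$, giving $\|\mathbf{K}\|_2 \leq \eta$ deterministically.

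For the stochastic term, I would apply the matrix Bernstein inequality to $\mathbf{S} = \sum_{k=1}^N \mathbf{X}_k$ with $\mathbf{X}_k = \frac{1}{N}(\mathbf{z}_k\mathbf{z}_k^* - \mathbb{E}[\mathbf{z}_k\mathbf{z}_k^*])$, which are independent, Hermitian, and mean-zero. Since every entry of $\mathbf{z}_k$ has modulus one, $\|\mathbf{z}_k\|_2^2 = m$, and hence $\|\mathbf{X}_k\|_2 \leq \frac{1}{N}(m + \|\mathbf{I}_m+\mathbf{K}\|_2) \leq \frac{m+2}{N}$. For the variance, the rank-one identity $(\mathbf{z}_k\mathbf{z}_k^*)^2 = \|\mathbf{z}_k\|_2^2\,\mathbf{z}_k\mathbf{z}_k^* = m\,\mathbf{z}_k\mathbf{z}_k^*$ gives $\mathbb{E}[\mathbf{X}_k^2] = \frac{1}{N^2}\big(m(\mathbf{I}_m+\mathbf{K}) - (\mathbf{I}_m+\mathbf{K})^2\big)$, a positive semidefinite matrix of norm at most $2m/N^2$; summing over $k$ yields a variance proxy $v \leq 2m/N$. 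Matrix Bernstein then gives $\mathbb{P}(\|\mathbf{S}\|_2 \geq \eta) \leq 2m\exp(-c\,\eta^2 N/m)$ for a universal constant $c$, where $\eta<1$ is used to absorb the linear-in-$\eta$ term of the Bernstein denominator. Requiring the right-hand side to be at most $\delta$ is equivalent to $N \geq C\eta^{-2}m\log(2m/\delta)$, which is exactly condition (\ref{cond1}). On this event $\|\mathbf{S}\|_2 \leq \eta$, and adding the deterministic bound $\|\mathbf{K}\|_2 \leq \eta$ gives $\|\frac{1}{N}\mathbf{A}\mathbf{A}^*-\mathbf{I}_m\|_2 \leq 2\eta$ with probability at least $1-\delta$.

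The main obstacle I anticipate is making the two $\eta$-budgets line up cleanly. On the deterministic side, one must justify that the Mat\'ern kernel is monotonically decreasing in the separation distance, so that a single evaluation at $\kappa$ controls all $\binom{m}{2}$ off-diagonal entries uniformly; this rests on known monotonicity properties of the modified Bessel function $K_{\nu/2}$. On the stochastic side, the delicate point is tracking the constants in matrix Bernstein so that $v$ and the uniform bound $L$, both of order $m/N$, produce precisely the $m\log(2m/\delta)$ sample threshold of (\ref{cond1}) rather than a worse power of $m$; this favorable (linear, not quadratic) dependence on $m$ hinges on the rank-one variance identity above.
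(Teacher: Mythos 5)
Your proposal is correct and follows essentially the same route as the paper's proof: the identical triangle-inequality split into a fluctuation term and a mean-bias term, Gershgorin plus the monotonicity of $x^{\nu}K_{\nu}(x)$ and the $\kappa$-separation with condition \eqref{cond2} to bound the bias by $\eta$, and matrix Bernstein with the rank-one identity $\|\mathbf{z}_k\|_2^2=m$ driving the $O(m/N)$ variance proxy so that condition \eqref{cond1} yields failure probability $\delta$. The only (harmless) deviations are cosmetic: you normalize the summands by $1/N$ and compute $\mathbb{E}[\mathbf{X}_k^2]$ exactly as a positive semidefinite matrix, giving marginally cleaner constants than the paper's triangle-inequality bound $N\left(m(1+\eta)+(1+\eta)^2\right)$.
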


This result characterizes the behavior of the condition number of the random feature matrix as a function of the complexity ratio $\frac{N}{m}$. If $m\ll N$, then $\eta$ can be small so the eigenvalues of $\frac{1}{N}\mathbf{A}\mathbf{A}^*$ concentrate near 1 and the condition number of $\mathbf{A}$ is small with probability at least $1-\delta$. We also have as a result that the matrix $\mathbf{A}\mathbf{A}^*$ is invertible with high probability so the pseudoinverse $\mathbf{A}^\dagger = \mathbf{A}^*(\mathbf{A}\mathbf{A}^*)^{-1}$ is well-defined.

\begin{proof}[Proof of Theorem~\ref{thm:concentration}]
By triangle's inequality,
\begin{equation}
    \left\|\frac{1}{N}\mathbf{A}\mathbf{A}^*-\mathbf{I}_m\right\|_2 \leq \left\|\frac{1}{N}\mathbf{A}\mathbf{A}^*-\mathbb{E}_{\boldsymbol{\omega}}\left[\frac{1}{N}\mathbf{A}\mathbf{A}^*\right]\right\|_2 + \left\|\mathbb{E}_{\boldsymbol{\omega}}\left[\frac{1}{N}\mathbf{A}\mathbf{A}^*\right] - \mathbf{I}_m\right\|_2.
    \label{triangle_inequal}
\end{equation}
 To bound the first term in \eqref{triangle_inequal}, let $\mathbf{W}_\ell = \mathbf{A}_{:,\ell}$, the $\ell$-th column of $\mathbf{A}$. Define the matrices $\{\mathbf{Y}_\ell\}_{\ell\in [N]} \subset \mathbb{C}^{m\times m}$ as:
\begin{equation*}
    \mathbf{Y}_\ell = \mathbf{W}_\ell\mathbf{W}_\ell^* - \mathbb{E}_{\boldsymbol{\omega}}\left[\mathbf{W}_\ell\mathbf{W}_\ell^*\right].
\end{equation*}

\noindent We have $(\mathbf{Y}_\ell)_{j,j} = 0$ and 
\begin{align*}
    (\mathbf{Y}_\ell)_{j,k} &= \exp(i\langle \boldsymbol{\omega}_\ell, \mathbf{x}_j-\mathbf{x}_k\rangle)-\mathbb{E}_{\boldsymbol{\omega}}\left[\exp(i\langle \boldsymbol{\omega}_\ell, \mathbf{x}_j-\mathbf{x}_k\rangle)\right]\\
    &= \exp(i\langle \boldsymbol{\omega}_\ell, \mathbf{x}_j-\mathbf{x}_k\rangle) - \phi_{\boldsymbol{\omega}}(\mathbf{x}_j-\mathbf{x}_k),
\end{align*}
for $j,k\in[m]$, where:
\begin{equation*}
    \phi_{\boldsymbol{\omega}}(\mathbf{t}) = \mathbb{E}_{\boldsymbol{\omega}}\left[\exp\left(i\boldsymbol{\omega}^\top\mathbf{t}\right)\right] = \frac{\left(\sigma\sqrt{\nu}\|\mathbf{t}\|_2\right)^{\nu/2}}{2^{\nu/2-1}\Gamma\left(\frac{\nu} {2}\right)} K_{\nu/2}\left(\sigma\sqrt{\nu}\|\mathbf{t}\|_2\right),
\end{equation*}
is the characteristic function of the multivariate Student's $t$ distribution. Note that $K_{\nu/2}(x) > 0$ for $x > 0$ and $\nu \in \mathbb{R}$ and $\frac{d}{dx}\left[x^\nu K_\nu(x)\right] = -x^\nu K_{\nu-1}(x) < 0$ for $x > 0$, so $\phi_{\boldsymbol{\omega}}(\mathbf{t}) > 0$ for all $\mathbf{t}\neq \mathbf{0}$ and $\phi_{\boldsymbol{\omega}}(\mathbf{t}) \leq \phi_{\boldsymbol{\omega}}(\mathbf{t}')$ for all $\mathbf{t},\mathbf{t}'\neq 0$ such that $\|\mathbf{t}\|_2 \geq \|\mathbf{t}'\|_2$. Also, $\mathbf{Y}_\ell = \mathbf{Y}_\ell^*$ so the singular values of $\mathbf{Y}_\ell$ are the absolute values of its eigenvalues. Using these facts and applying Gershgorin's disc theorem, $\|\mathbf{x}_j-\mathbf{x}_k\|_2 \geq \kappa$ for $j,k\in[m]$ with $j\neq k$, and condition (\ref{cond2}) yields:
\begin{align*}
\begin{split}
    \|\mathbf{Y}_\ell\|_2 &\leq \max\limits_{j\in[m]} \sum\limits_{j\neq k} |\exp(i\langle\boldsymbol{\omega}_\ell,\mathbf{x}_j-\mathbf{x}_k\rangle)-\phi_{\boldsymbol{\omega}}(\mathbf{x}_j-\mathbf{x}_k)|\\
    &\leq \max\limits_{j\in[m]} \sum\limits_{j\neq k} 1 + \phi_{\boldsymbol{\omega}}(\mathbf{x}_j-\mathbf{x}_k)\\
    &\leq \max\limits_{j\in[m]} m\left(1+\frac{(\sigma\sqrt{\nu}\kappa)^{\nu/2}}{2^{\nu/2-1}\Gamma\left(\frac{\nu}{2}\right)} K_{\nu/2}(\sigma\sqrt{\nu}\kappa)\right)\\
    &\leq m + \eta.
\end{split}
\end{align*}

 Note that $\mathbb{E}_{\boldsymbol{\omega}}\left[\mathbf{W}_\ell\mathbf{W}_\ell^*\right]$ is Hermitian so its singular values are the absolute values of its eigenvalues. Also, $\left(\mathbb{E}_{\boldsymbol{\omega}}\left[\mathbf{W}_\ell\mathbf{W}_\ell^*\right]\right)_{j,j}=1$ and $\left(\mathbb{E}_{\boldsymbol{\omega}}\left[\mathbf{W}_\ell\mathbf{W}_\ell^*\right]\right)_{j,k} = \phi_{\boldsymbol{\omega}}\left(\mathbf{x}_j-\mathbf{x}_k\right)$ for $j,k\in[m]$, hence
\begin{align}
\begin{split}\label{exp_ww*}
\left\|\mathbb{E}_{\boldsymbol{\omega}}\left[\mathbf{W}_\ell\mathbf{W}_\ell^*\right]\right\|_2 & \leq 1 + \max\limits_{j\in[m]} \sum\limits_{j\neq k} |\phi_{\boldsymbol{\omega}}\left(\mathbf{x}_j-\mathbf{x}_k\right)| \\
&\leq 1 + m\frac{(\sigma\sqrt{\nu}\kappa)^{\nu/2}}{2^{\nu/2-1}\Gamma\left(\frac{\nu}{2}\right)} K_{\nu/2}(\sigma\sqrt{\nu}\kappa)\\
&\leq 1+ \eta,
\end{split}
\end{align}
by Gershgorin's disc theorem, $\|\mathbf{x}_j-\mathbf{x}_k\|_2\geq \kappa$ for $j,k\in[m]$ with $j\neq k$, and condition (\ref{cond2}). Therefore, the norm of the variance of the sum of $\mathbf{Y}_\ell$'s is bounded by: 

\begin{align*}
\begin{split}
    \left\|\sum\limits_{\ell=1}^N \mathbb{E}_{\boldsymbol{\omega}}\left[\mathbf{Y}_\ell^2\right]\right\|_2 &\leq \sum\limits_{\ell=1}^N \left\|\mathbb{E}_{\boldsymbol{\omega}}\left[\mathbf{Y}_\ell^2\right]\right\|_2 \\
    &\leq \sum\limits_{\ell=1}^N \left\|\mathbb{E}_{\boldsymbol{\omega}}\left[\left(\mathbf{W}_\ell\mathbf{W}_\ell^*-\mathbb{E}_{\boldsymbol{\omega}}\left[\mathbf{W}_\ell\mathbf{W}_\ell^*\right]\right)^2\right]\right\|_2\\
    &\leq \sum\limits_{\ell=1}^N \left\|\mathbb{E}_{\boldsymbol{\omega}}\left[\mathbf{W}_\ell\mathbf{W}_\ell^*\mathbf{W}_\ell\mathbf{W}_\ell^*\right]-\left(\mathbb{E}_{\boldsymbol{\omega}}\left[\mathbf{W}_\ell\mathbf{W}_\ell^*\right]\right)^2\right\|_2\\
    &= \sum\limits_{\ell=1}^N m\left\|\mathbb{E}_{\boldsymbol{\omega}}\left[\mathbf{W}_\ell\mathbf{W}_\ell^*\right]\right\|_2+\left\|\mathbb{E}_{\boldsymbol{\omega}}\left[\mathbf{W}_\ell\mathbf{W}_\ell^*\right]\right\|_2^2 \\
    &\leq N\left(m(1+\eta)+(1+\eta)^2\right),
\end{split}
\end{align*}
where we used the fact that: $\mathbf{W}_\ell^*\mathbf{W}_\ell = \|\mathbf{W}_\ell\|_2^2 = m$ and (\ref{exp_ww*}).

 Since $\left\{\mathbf{Y}_\ell\right\}_{\ell\in\left[N\right]}$ are independent, mean-zero, Hermitian matrices, we can apply Matrix Bernstein's inequality to obtain:

\begin{align*}
\begin{split}
    \mathbb{P}\left(\left\|\frac{1}{N}\mathbf{A}\mathbf{A}^*-\mathbb{E}_{\boldsymbol{\omega}}\left[\frac{1}{N}\mathbf{A}\mathbf{A}^*\right]\right\|_2 \geq \eta\right) &= \mathbb{P} \left(\left\|\sum_{\ell=1}^N \mathbf{Y}_\ell\right\|_2 \geq N\eta\right) \\
    &\leq 2m\exp\left(-\frac{N\eta^2}{2\left[m(1+\eta)+(1+\eta)^2+(m+\eta)\eta/3\right]}\right)\\
    &\leq 2m \exp\left(-\frac{N\eta^2}{5m+9}\right) \\
    &\leq 2m \exp\left(-\frac{Cm\log\left(\frac{2m}{\delta}\right)}{5m+9}\right)\\
    &\leq 2m\exp\left(-\log\left(\frac{2m}{\delta}\right)\right)\\
    &=\delta,
\end{split}
\end{align*}
provided that $\eta < 1$ and condition \eqref{cond1} is satisfied with $C=6$ and $m\geq 9$.

For the second term in \eqref{triangle_inequal}, define the matrix $\mathbf{B} = \mathbb{E}_{\boldsymbol{\omega}} \left[\frac{1}{N}\mathbf{A}\mathbf{A}^*\right]-\mathbf{I}_m$. We have that $\mathbf{B}$ is Hermitian with $\mathbf{B}_{j,j} = 0$ and 
\begin{align*}
    \mathbf{B}_{j,k} &= \frac{1}{N} \sum_{\ell=1}^N \left(\mathbb{E}_{\boldsymbol{\omega}}\left[\mathbf{W}_\ell\mathbf{W}_\ell^*\right]\right)_{j,k} \\
    &= \frac{1}{N} \sum_{\ell=1}^N \phi_{\boldsymbol{\omega}}\left(\mathbf{x}_j-\mathbf{x}_k\right)\\
&=\phi_{\boldsymbol{\omega}}\left(\mathbf{x}_j-\mathbf{x}_k\right),
\end{align*}
for all $j,k\in[m]$ such that $j\neq k$. Using that $\mathbf{B}$ is Hermitian and applying Gershgorin's disc theorem, $\left\|\mathbf{x}_j-\mathbf{x}_k\right\|_2 \geq \kappa$ for $j,k \in [m]$ with $j\neq k$, and condition \eqref{cond2}, we conclude:
\begin{align*}
\begin{split}
    \left\|\mathbf{B}\right\|_2 &\leq \max_{j\in[m]} \sum_{k\neq j} \left|\phi_{\boldsymbol{\omega}}\left(\mathbf{x}_j - \mathbf{x}_k\right)\right|\\
    &\leq \max_{j\in[m]} m \left( \frac{(\sigma\sqrt{\nu}\kappa)^{\nu/2}}{2^{\nu/2-1}\Gamma\left(\frac{\nu}{2}\right)} K_{\nu/2}(\sigma\sqrt{\nu}\kappa) \right)\\
    &\leq m \left(\frac{\eta}{m}\right)\\
    &=\eta.
\end{split}
\end{align*}\end{proof}

We can use the concentration property of the random feature matrix $\mathbf{A}\in \mathbb{C}^{m\times N}$ to derive estimation and generalization bounds similar to \cite{liao2025cauchyrandomfeaturesoperator}.

\section{Numerical Experiments}
\label{numerical_experiments}
We performed numerical experiments on the following PDEs: advection equation, Burgers' equation, Darcy flow problem, Helmholtz equation, Navier-Stokes equation, and structural mechanics problem\footnote{The code is available at: \url{https://github.com/tracyyu13}}. The examples and datasets are from \cite{lu2022comprehensive,dehoop2022costaccuracy} and used as benchmark cases in \cite{batlle2023kernelmethodscompetitiveoperator,liao2025cauchyrandomfeaturesoperator}. 

Unless otherwise stated, the data is corrupted by 5\% relative Gaussian noise to the input and output functions during the training process, and in the testing phase, the input function evaluations are corrupted by 5\% relative Gaussian noise. We compare RFF and RRFF approximations (see Section \ref{rfm} for definitions) for $\hat{f}$ with feature weights sampled from the tensor-product Gaussian and multivariate Student's $t$ ($\nu=2,3$) distributions. Note that the Gaussian case is the Student's $t$ distribution with $\nu = \infty$. In this section, we refer to these approximations as RFF-2, RFF-3, RFF-$\infty$ when Student's $t$ RFF is used with $\nu= 2, 3, \infty$, respectively, and RRFF-2, RRFF-3, RRFF-$\infty$ when Student's $t$ RRFF is used with $\nu = 2, 3, \infty$, respectively, following the notation we defined in Section \ref{rfm}. Additionally, we compare RFF-FEM and RRFF-FEM approximations (see Section \ref{fem} for details) for $R_v \circ \hat{f}$, where $\hat{f}$ is a random feature approximation of f as before and $R_v$ is a finite element recovery map. Feature weights are drawn from the Student's $t$ distribution with $\nu = 2, 3, \infty$, and following the notation introduced in Section \ref{fem}, we denote the RFF-FEM predictions as RFF-FEM-2, RFF-FEM-3, RFF-FEM-$\infty$ for $\nu=2,3,\infty$, respectively, and the RRFF-FEM predictions as RRFF-FEM-2, RRFF-FEM-3, RRFF-FEM-$\infty$ for $\nu = 2,3,\infty$, respectively. We split the spatial grid and learn $\hat{f}$ on two-thirds of the grid points using Algorithm \ref{f_hat_algorithm} and use the remaining grid points to validate $R_v \circ \hat{f}$ with Algorithm \ref{RRFF-FEM-inference}. To measure generalization performance, we define the average relative test error for an estimator $\hat{G}:\mathcal{U}\to\mathcal{V}$ to be:
\begin{equation*}
    \text{Error}(\hat{G}) = \dfrac{1}{|\textbf{Test}|}\sum_{k\in\textbf{Test}}\dfrac{||G(u_k)-\hat{G}(u_k)||_{L^2(D_\mathcal{V})}}{||\hat{G}(u_k)||_{L^2(D_\mathcal{V})}},
\end{equation*}
where $G:\mathcal{U}\to\mathcal{V}$ is the true operator. On the full grid, we measure the performance of $\hat{f}$. On the split grid, we measure the performance of $R_\mathcal{V}\circ\hat{f}$, which is equivalent to measuring the generalization performance of $\hat{G}$. 

\subsection{Advection Equations \RomanNumeral{1}, \RomanNumeral{2}, and \RomanNumeral{3}}
Consider the one-dimensional advection equation:
\begin{equation*}
\begin{aligned}
    \dfrac{\partial v}{\partial t} + \dfrac{\partial v}{\partial x} &= 0, \hspace{0.5cm} &&(x,t) \in (0,1)\times(0,1] \\
    v(x,0) &= u(x), \hspace{0.5cm} &&x\in[0,1]
\end{aligned}
\end{equation*}
with periodic boundary conditions. The goal is to learn the operator $G: u(x) \mapsto v(x,0.5)$, i.e., the map from the initial condition $u(x)$ to the solution at time $t = 0.5$. The domains of the input and output function spaces $\mathcal{U}$ and $\mathcal{V}$ are $D_\mathcal{U} = D_\mathcal{V} = (0,1)$. Various distributions for the initial condition $u(x)$ were considered for this problem in \cite{lu2022comprehensive,dehoop2022costaccuracy}. In \cite{lu2022comprehensive}, the first initial condition, which we denote as Advection \RomanNumeral{1}, is:
\begin{equation*}
    u(x) = h\mathbbm{1}_{\{c-\frac{b}{2},c+\frac{b}{2}\}},
\end{equation*}
i.e., a square wave centered at $x=c$ with width $b$ and height $h$. The parameters $(c,b,h)$ are drawn uniformly from $[0.3,0.7]\times[0.3,0.6]\times[1,2]$. 

For RFF-FEM and RRFF-FEM, a grid is nonuniformly discretized with 27 grid points. We train $\hat{f}$ with 1000 samples and test its performance with 800 samples. We evaluate the performance of $R_\mathcal{V}\circ\hat{f}$ on 200 samples. Table \ref{advection_param_table} lists the parameters used in the numerical experiments on the advection equations. Figure \ref{Fig:AD1_train_recovery_Gaussian} shows an example of training input and output with 5\% noise, test function and approximations of $R_\mathcal{V}\circ \hat{f}$ using RFF-FEM-$\infty$ and RRFF-FEM-$\infty$, along with the corresponding pointwise errors. Figure \ref{Fig:AD1_recovery_Student} shows a test example and predictions of $R_\mathcal{V}\circ \hat{f}$ using RFF-FEM-2, RRFF-FEM-2, RFF-FEM-3, and RRFF-FEM-3 as well as their pointwise errors. 

\begin{table}[!htbp]
\centering
    \scriptsize	
    \begin{tabular}{{|P{0.11\linewidth} | P{0.12\linewidth} | P{0.08\linewidth} | P{0.08\linewidth} |
    P{0.08\linewidth} | P{0.08\linewidth} |}}
\hline
& \addstackgap{Distribution} & \addstackgap{$N$} & \addstackgap{$\sigma$} & \addstackgap{$\alpha$} & \addstackgap{$p$} 
\\ \hline \multirow{3}{*}{\addstackgap{\makebox[\linewidth][c]{Advection \RomanNumeral{1}, \RomanNumeral{2}}}} & \multicolumn{1}{c|}{\addstackgap{Gaussian}} & \multicolumn{1}{c|}{5k} & \multicolumn{1}{c|}{0.2} & \multicolumn{1}{c|}{0.01} & \multicolumn{1}{c|}{2}  \\ \cline{2-6}
& \multicolumn{1}{c|}{\addstackgap{Student ($\nu=2$)}} & \multicolumn{1}{c|}{5k} & \multicolumn{1}{c|}{0.02} & \multicolumn{1}{c|}{0.1}& \multicolumn{1}{c|}{2} \\ \cline{2-6}
& \multicolumn{1}{c|}{\addstackgap{Student ($\nu=3$)}} & \multicolumn{1}{c|}{5k} & \multicolumn{1}{c|}{0.02} & \multicolumn{1}{c|}{0.01}& \multicolumn{1}{c|}{2} \\
\hline
 \multirow{3}{*}{\addstackgap{\makebox[\linewidth][c]{Advection \RomanNumeral{3}}}} & \multicolumn{1}{c|}{\addstackgap{Gaussian}} & \multicolumn{1}{c|}{\addstackgap{5k}} & \multicolumn{1}{c|}{\addstackgap{$\sqrt{2 \times 10^{-3}}$}} & \multicolumn{1}{c|}{\addstackgap{0.1}} & \multicolumn{1}{c|}{\addstackgap{2}} \\ \cline{2-6}
& \multicolumn{1}{c|}{\addstackgap{Student ($\nu=2$)}} & \multicolumn{1}{c|}{5k} & \multicolumn{1}{c|}{0.001} & \multicolumn{1}{c|}{0.5}& \multicolumn{1}{c|}{2}\\ \cline{2-6}
& \multicolumn{1}{c|}{\addstackgap{Student ($\nu=3$)}} & \multicolumn{1}{c|}{5k} & \multicolumn{1}{c|}{0.001} & \multicolumn{1}{c|}{0.1}& \multicolumn{1}{c|}{2}\\
\hline
\end{tabular}
\caption{\textbf{Advection \RomanNumeral{1}, \RomanNumeral{2}, \RomanNumeral{3}}: Parameters for the numerical experiments, where $N$ is the number of random features and $\sigma$ is the scale parameter for the RFF, RRFF, RFF-FEM, and RRFF-FEM methods, and $\alpha$ and $p$ are regularization parameters for the RRFF and RRFF-FEM methods.}
\label{advection_param_table}
\end{table}

\begin{figure}[!htbp]
\centering
\subfigure{\includegraphics[width=36mm]{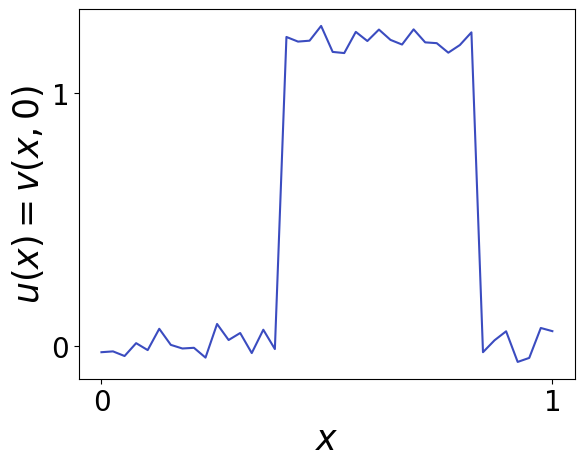}}
\subfigure{\includegraphics[width=36mm]{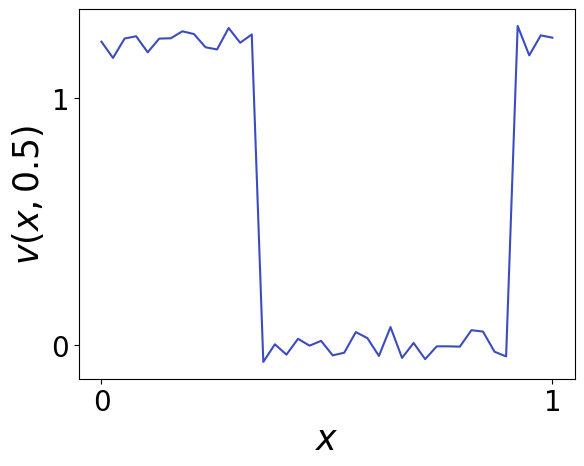}}
\subfigure{\includegraphics[width=36mm]{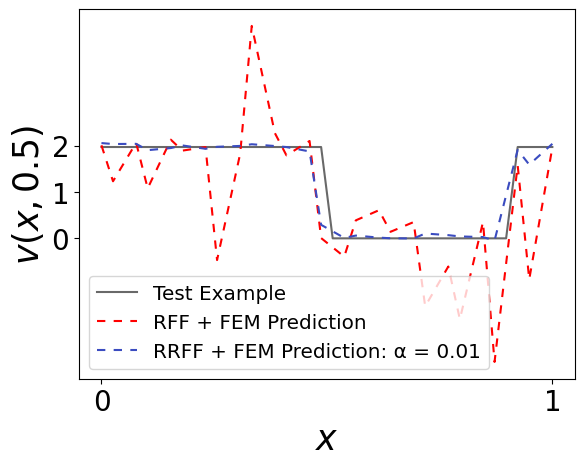}}
\subfigure{\includegraphics[width=37mm]{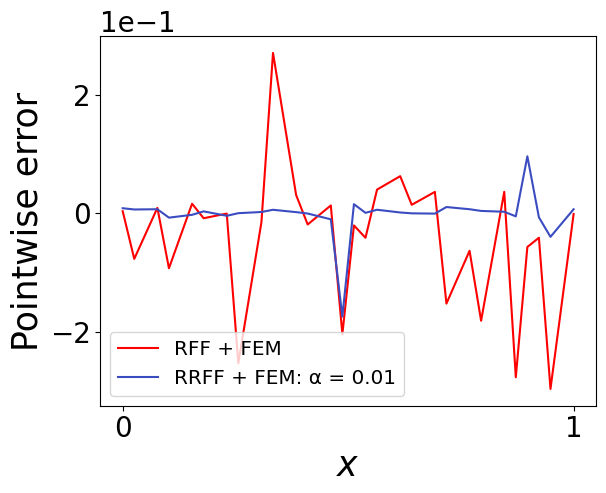}}
\caption{\textbf{Advection \RomanNumeral{1}}: (left to right) An example of (a) training input with 5\% noise, (b) training output with 5\% noise, (c) test example and predictions of $R_\mathcal{V}\circ\hat{f}$ using RFF-FEM-$\infty$ and RRFF-FEM-$\infty$, (d) pointwise errors for predictions of $R_\mathcal{V}\circ\hat{f}$ using RFF-FEM-$\infty$ and RRFF-FEM-$\infty$.}
\label{Fig:AD1_train_recovery_Gaussian}
\end{figure}

\begin{figure}[!htbp]
\centering     
\subfigure{\includegraphics[width=36mm]{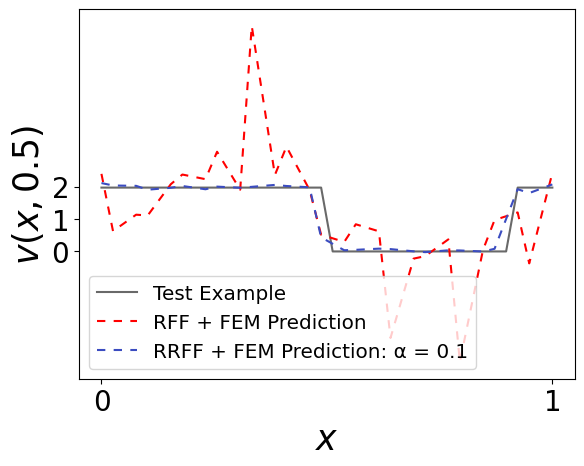}}
\subfigure{\includegraphics[width=37mm]{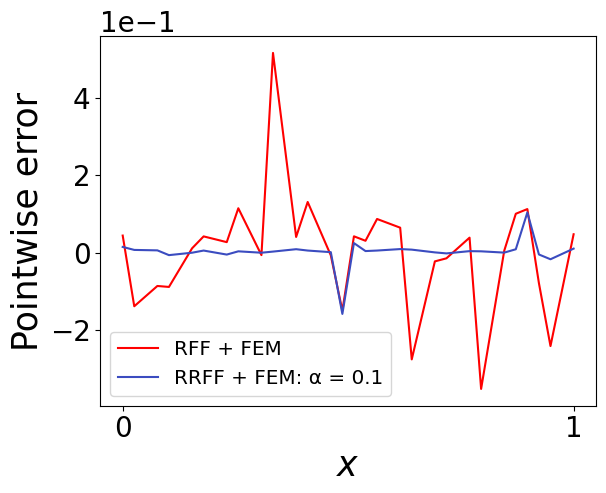}}
\subfigure{\includegraphics[width=36mm]{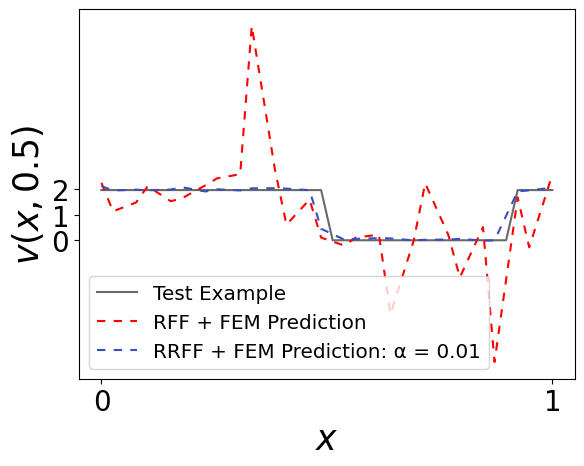}}
\subfigure{\includegraphics[width=39mm]{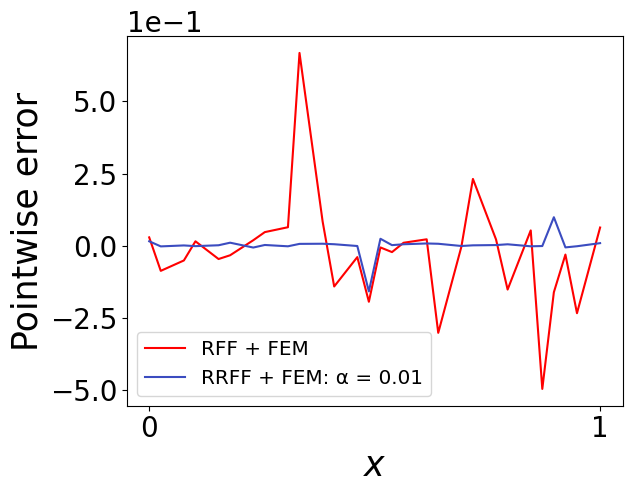}}
\caption{\textbf{Advection \RomanNumeral{1}}: (left to right) (a) test example and predictions of $R_\mathcal{V}\circ\hat{f}$  using RFF-FEM-2 and RRFF-FEM-2, (b) pointwise errors for predictions of $R_\mathcal{V}\circ\hat{f}$  using RFF-FEM-2 and RRFF-FEM-2, (c) test example and predictions of $R_\mathcal{V}\circ\hat{f}$  using RFF-FEM-3 and RRFF-FEM-3, (d) pointwise errors for predictions of $R_\mathcal{V}\circ\hat{f}$  using RFF-FEM-3 and RRFF-FEM-3.}
\label{Fig:AD1_recovery_Student}
\end{figure}

A more complex initial condition was also considered in \cite{lu2022comprehensive}, which takes the form:
\begin{equation*}
    u(x) = h_1\mathbbm{1}_{\{c_1-\frac{b}{2},c_1+\frac{b}{2}\}}+\sqrt{\max{(h_2^2-a^2(x-c_2)^2,0)}}.
\end{equation*}
We will refer to this as Advection \RomanNumeral{2}. A coarse nonuniform grid with 27 grid points is used for the RFF-FEM and RRFF-FEM methods. The function $\hat{f}$ is trained with 1000 samples, and its performance is tested with 800 samples. We assess $R_\mathcal{V}\circ\hat{f}$ on the hold-out 200 samples. Figure \ref{Fig:AD2_train_recovery_Gaussian} plots an example of training input and output with 5\% noise, test example and predictions of $R_\mathcal{V}\circ\hat{f}$ from RFF-FEM-$\infty$ and RRFF-FEM-$\infty$, and the pointwise errors. Figure \ref{Fig:AD2_recovery_Student} shows test examples with approximations of $R_\mathcal{V}\circ\hat{f}$ from RFF-FEM-2, RRFF-FEM-2, RFF-FEM-3, and RRFF-FEM-3 and the associated pointwise errors.

\begin{figure}[!t]
\centering
\subfigure{\includegraphics[width=36mm]{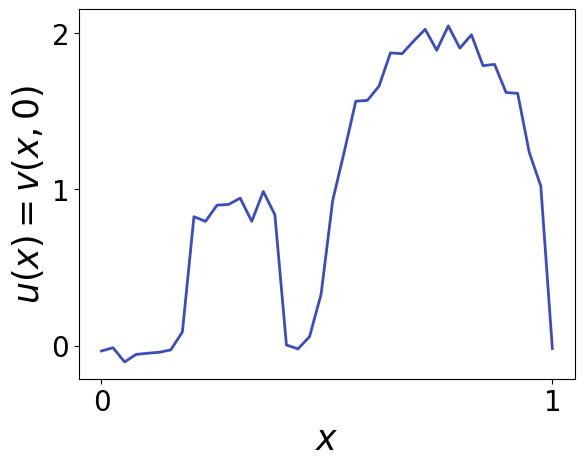}}
\subfigure{\includegraphics[width=36mm]{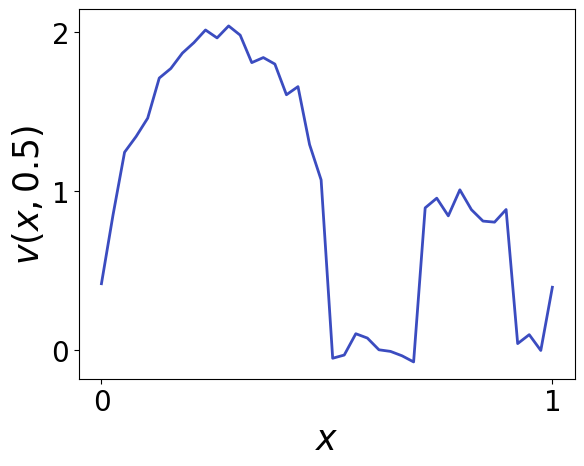}}
\subfigure{\includegraphics[width=36mm]{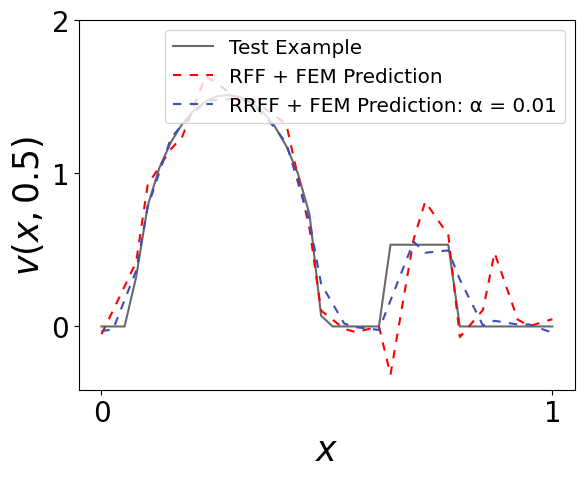}}
\subfigure{\includegraphics[width=39mm]{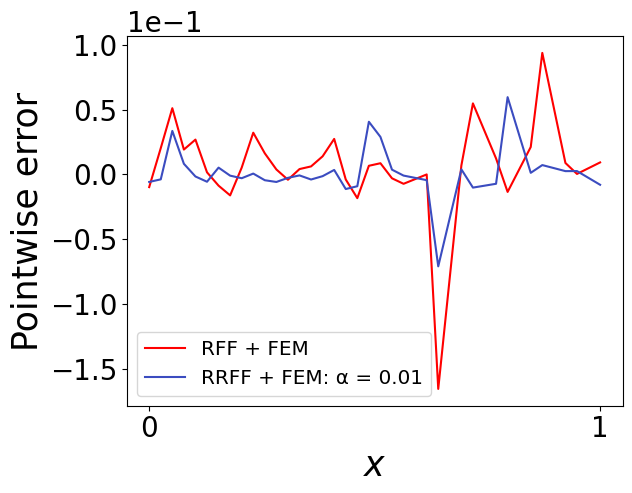}}
\caption{\textbf{Advection \RomanNumeral{2}}: (left to right) An example of (a) training input with 5\% noise, (b) training output with 5\% noise, (c) test example and predictions of $R_\mathcal{V}\circ\hat{f}$ using RFF-FEM-$\infty$ and RRFF-FEM-$\infty$, (d) pointwise errors for predictions of $R_\mathcal{V}\circ\hat{f}$ using RFF-FEM-$\infty$ and RRFF-FEM-$\infty$.}
\label{Fig:AD2_train_recovery_Gaussian}
\end{figure}

\begin{figure}[!t]
\centering     
\subfigure{\includegraphics[width=36mm]{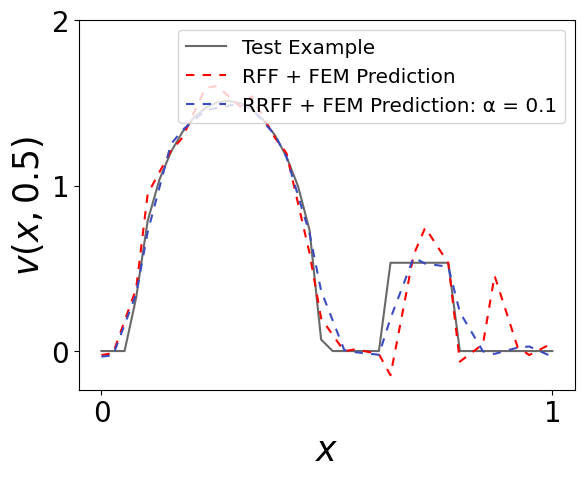}}
\subfigure{\includegraphics[width=39mm]{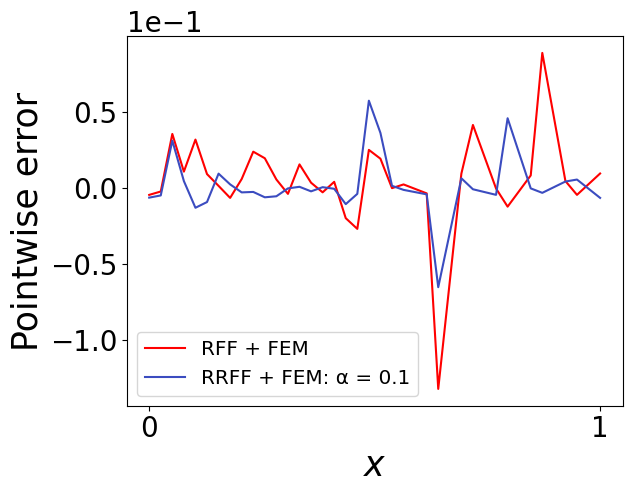}}
\subfigure{\includegraphics[width=36mm]{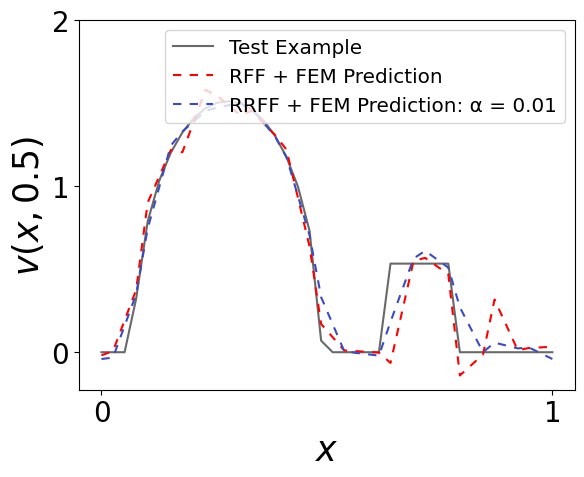}}
\subfigure{\includegraphics[width=39mm]{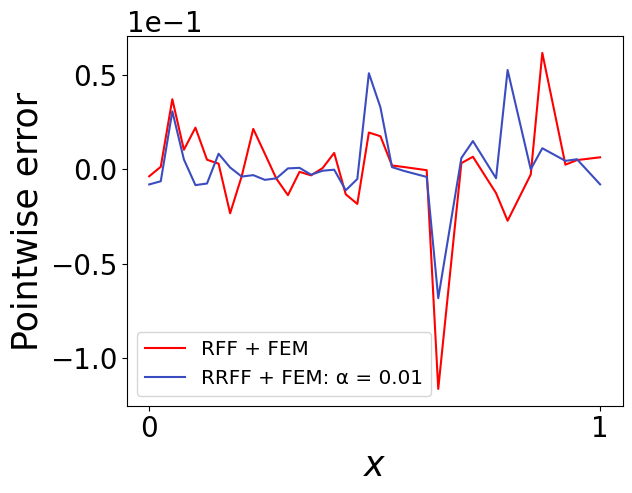}}
\caption{\textbf{Advection \RomanNumeral{2}}: (left to right) (a) test example and predictions of $R_\mathcal{V}\circ\hat{f}$ using RFF-FEM-2 and RRFF-FEM-2, (b) pointwise errors for predictions of $R_\mathcal{V}\circ\hat{f}$ using RFF-FEM-2 and RRFF-FEM-2, (c) test example and predictions of $R_\mathcal{V}\circ\hat{f}$ using RFF-FEM-3 and RRFF-FEM-3, (d) pointwise errors for predictions of $R_\mathcal{V}\circ\hat{f}$ using RFF-FEM-3 and RRFF-FEM-3.}
\label{Fig:AD2_recovery_Student}
\end{figure}

In \cite{dehoop2022costaccuracy}, the initial condition, referred to as Advection \RomanNumeral{3} here, is:
\begin{equation*}
    u(x) = -1 + 2 \mathbbm{1}_{\{\tilde{u_0}\geq 0\}},
\end{equation*}
where $\tilde{u_0}$ is generated from a Gaussian process, $\mathcal{GP}(0,(-\Delta+9\mathbf{I})^{-2})$, $\Delta$ denotes the Laplacian on $D_\mathcal{U} = (0,1)$ subject to periodic boundary conditions, and $\mathbf{I}$ denotes the identity matrix.

For the RFF-FEM and RRFF-FEM methods, a low resolution grid with nonuniform spacing and 134 grid points is employed. In the training phase of $\hat{f}$, 1000 samples are used, and in its testing stage, 800 samples are used. 200 samples are held-out to test the performance of $R_\mathcal{V}\circ\hat{f}$. We show an example of training input with 5\% noise, training output with 5\% noise, test examples and predictions of $R_\mathcal{V}\circ \hat{f}$ using RFF-FEM-2, RRFF-FEM-2, RFF-FEM-3, RRFF-FEM-3, RFF-FEM-$\infty$, and RRFF-FEM-$\infty$, and their respective pointwise errors in Figure \ref{Fig:AD3_train_recovery_Gaussian} and Figure \ref{Fig:AD3_recovery_Student}. In all cases, we note the improved reduction of noise when using the RRFF approach. 

\begin{figure}[!t]
\centering
\subfigure{\includegraphics[width=36mm]{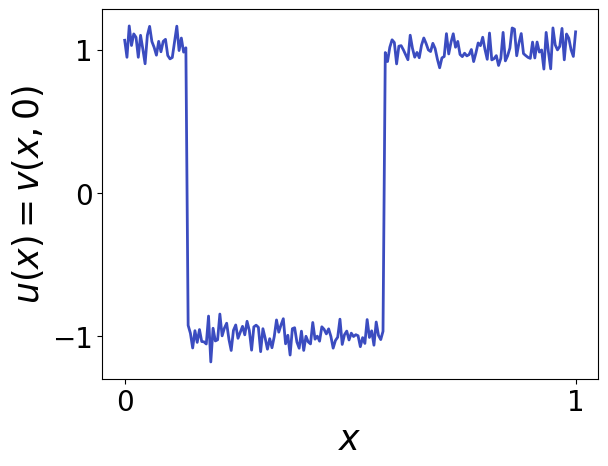}}
\subfigure{\includegraphics[width=36mm]{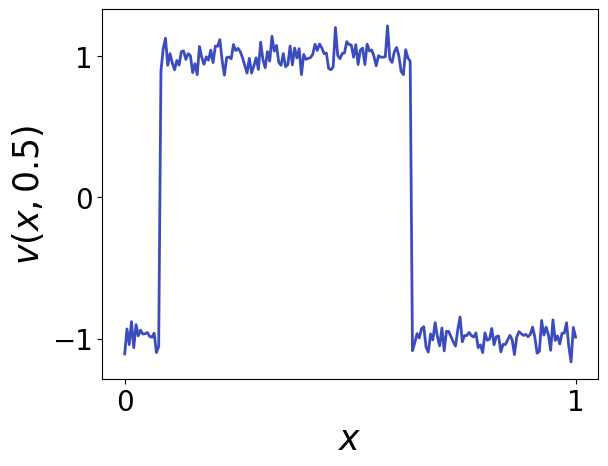}}
\subfigure{\includegraphics[width=36mm]{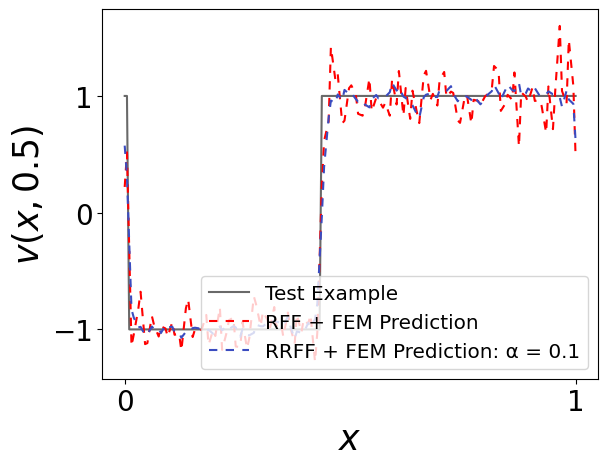}}
\subfigure{\includegraphics[width=37mm]{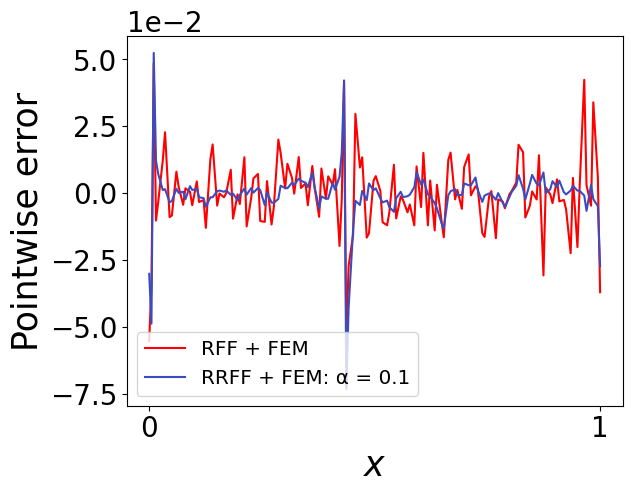}}
\caption{\textbf{Advection \RomanNumeral{3}}: (left to right) An example of (a) training input with 5\% noise, (b) training output with 5\% noise, (c) test example and predictions of $R_\mathcal{V}\circ\hat{f}$ using RFF-FEM-$\infty$ and RRFF-FEM-$\infty$, (d) pointwise errors for predictions of $R_\mathcal{V}\circ\hat{f}$ using RFF-FEM-$\infty$ and RRFF-FEM-$\infty$.}
\label{Fig:AD3_train_recovery_Gaussian}
\end{figure}

\begin{figure}[!htbp]
\centering     
\subfigure{\includegraphics[width=36mm]{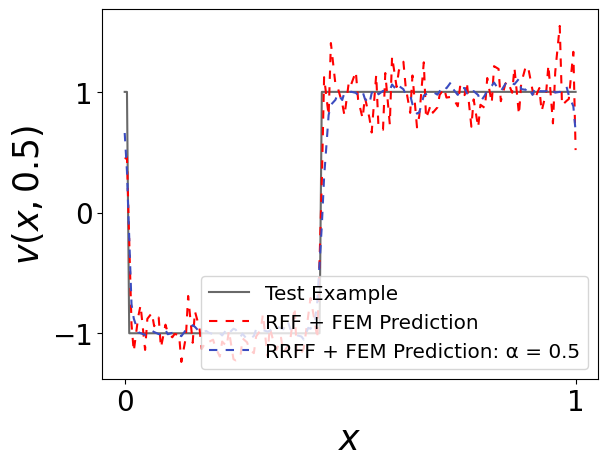}}
\subfigure{\includegraphics[width=37mm]{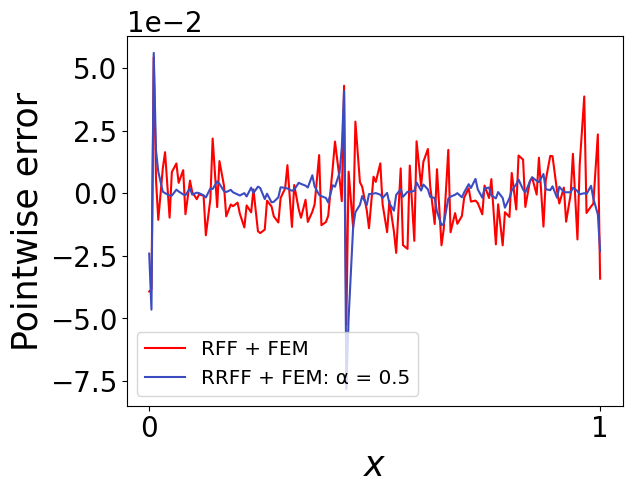}}
\subfigure{\includegraphics[width=36mm]{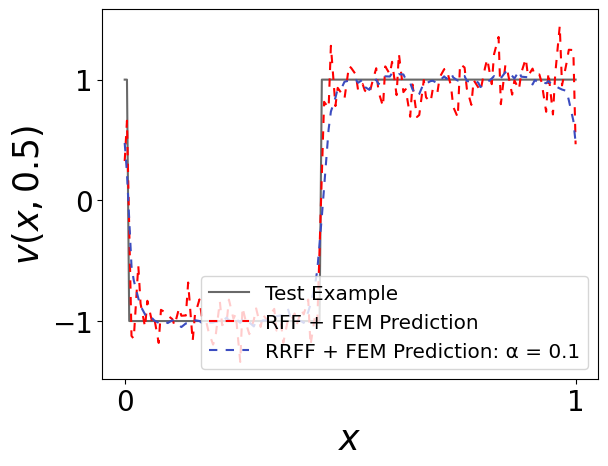}}
\subfigure{\includegraphics[width=36mm]{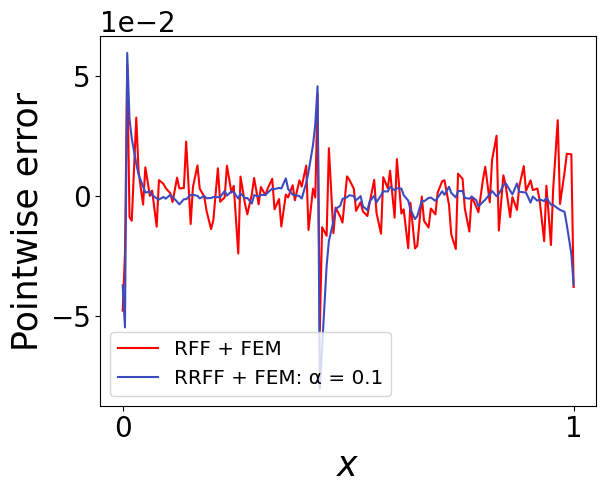}}
\caption{\textbf{Advection \RomanNumeral{3}}: (left to right) (a) test example and predictions of $R_\mathcal{V}\circ\hat{f}$ using RFF-FEM-2 and RRFF-FEM-2, (b) pointwise errors for predictions of $R_\mathcal{V}\circ\hat{f}$ using RFF-FEM-2 and RRFF-FEM-2, (c) test example and predictions of $R_\mathcal{V}\circ\hat{f}$ using RFF-FEM-3 and RRFF-FEM-3, (d) pointwise errors for predictions of $R_\mathcal{V}\circ\hat{f}$ using RFF-FEM-3 and RRFF-FEM-3.}
\label{Fig:AD3_recovery_Student}
\end{figure}

\subsection{Burgers' Equation}
Consider the one-dimensional Burgers' equation:
\begin{equation*}
\begin{aligned}
    \dfrac{\partial w}{\partial t} + w \dfrac{\partial w}{\partial x} &= \mu \dfrac{\partial^2 w}{\partial x^2}, \hspace{0.5cm} &&(x,t) \in (0,1)\times(0,1] \\
    w(x,0) &= u(x), \hspace{0.5cm} &&x\in[0,1]
\end{aligned}
\end{equation*}
with periodic boundary conditions, where we set the viscosity parameter $\mu = 0.1$. We learn the operator  $G: u(x) \mapsto w(x,1)$, where $u(x) = w(x,0)$ is the initial condition and $w(x,1)$ is the solution at time $t=1$. We have $D_\mathcal{U} = D_\mathcal{V} = (0,1)$. As in \cite{lu2022comprehensive}, the initial condition $u(x)$ is generated from a Gaussian process $\mathcal{GP}(0,625(-\Delta+25\mathbf{I})^{-2})$ with $\Delta$ being the Laplacian on $D_\mathcal{U} = (0,1)$ subject to periodic boundary conditions and $\mathbf{I}$ being the identity matrix. For the RFF and RRFF models, the grid uses 128 uniformly spaced points. The training set uses about 1.8k instances as input-output pairs  and 200 instances to test the performance.

For RFF-FEM and RRFF-FEM, a coarser nonuniform grid with 86 grid points was used. Around 1.6k samples are used for training $\hat{f}$, and 200 samples are used to test its performance. The performance of $R_\mathcal{V}\circ\hat{f}$ is tested on the remaining 200 samples. In Table \ref{burgers_param_table}, the parameters for the numerical experiments are shown. Figure \ref{Fig:Burger_train_alpha} shows an example of training input and output corrupted with 5\% noise. We tested a range of values for the regularization parameter $\alpha$ to find the optimal $\alpha$ such that the RRFF method minimizes the test error. Figure \ref{Fig:Burger_train_alpha} includes plots of $\log_{10}(\alpha)$ versus the average relative test error of the RRFF-2, RRFF-3, and RRFF-$\infty$ methods over 20 trials. In all the cases, as $\alpha$ grows, the test error decreases until it reaches a minimum, after which it begins to increase. This behavior indicates that model overfits and learns the noise when there is no regularization, but excessive regularization causes the model to disregard the data and drive the coefficients toward zero, resulting in underfitting. Figure \ref{Fig:Burger_test_pred_Gaussian_Student2} presents examples of test functions and the RFF-2, RRFF-2, RFF-$\infty$, and RRFF-$\infty$ predictions of $\hat{f}$, as well as their pointwise errors. Figure \ref{Fig:Burger_test_pred_Student_recovery_Gaussian} displays plots of a test function and the RFF-3 and RRFF-3 approximations of $\hat{f}$, along with the corresponding pointwise errors. Additionally, this figure shows plots of a test example and predictions of $R_\mathcal{V}\circ \hat{f}$ obtained from RFF-FEM-$\infty$ and RRFF-FEM-$\infty$ with their associated pointwise errors. In Figure \ref{Fig:Burger_recovery_Student}, test functions and predictions of $R_\mathcal{V}\circ\hat{f}$ using the RFF-FEM-2, RRFF-FEM-2, RFF-FEM-3, and RRFF-FEM-3 methods with their respective pointwise errors are shown. Both the RFF-FEM and RRFF-FEM methods produces accurate solutions, with the RRFF approach also reducing the noise. 

\begin{table}[!htbp]
\centering
    \scriptsize	
    \begin{tabular}{{|P{0.13\linewidth} | P{0.08\linewidth} | P{0.08\linewidth} |
    P{0.08\linewidth} | P{0.08\linewidth} |}}
\hline
\addstackgap{Distribution} & \addstackgap{$N$} & \addstackgap{$\sigma$} & \addstackgap{$\alpha$} & \addstackgap{$p$} 
\\ \hline \addstackgap{Gaussian} & 10k & 0.2 & 0.1 & 2  \\ \hline \addstackgap{Student ($\nu=2$)} & 10k & 0.02 & 0.25 & 2 \\ \hline \addstackgap{Student ($\nu=3$)} & 10k & 0.02 & 0.01 & 2 \\
\hline
\end{tabular}
\caption{\textbf{Burgers' Equation}: Parameters for the numerical experiments, where $N$ is the number of random features and $\sigma$ is the scale parameter for the RFF, RRFF, RFF-FEM, and RRFF-FEM methods, and $\alpha$ and $p$ are regularization parameters for the RRFF and RRFF-FEM methods.}
\label{burgers_param_table}
\end{table}

\begin{figure}[!htbp]
\centering
\subfigure{\includegraphics[width=36mm]{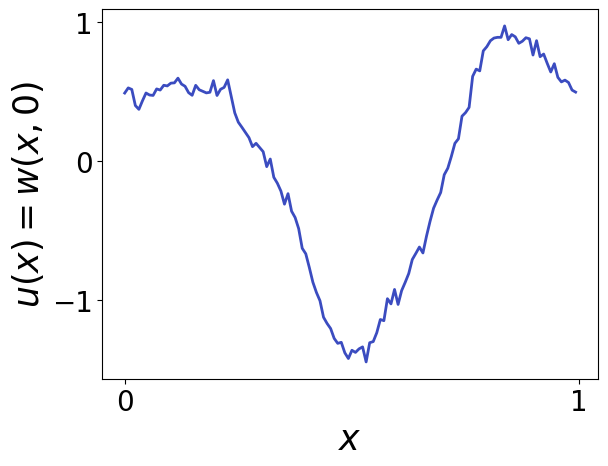}}
\subfigure{\includegraphics[width=36mm]{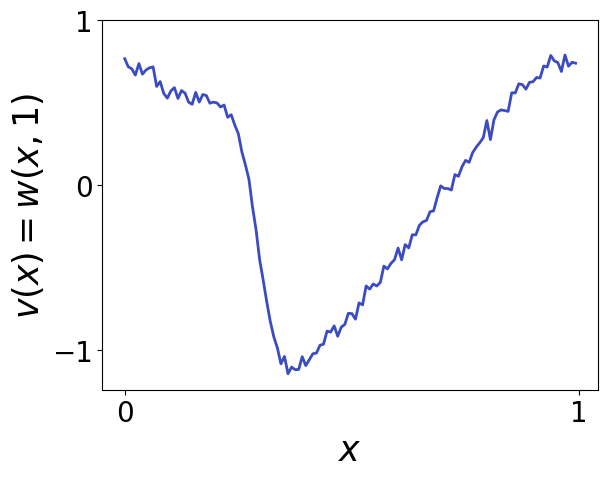}}
\subfigure{\includegraphics[width=34mm]{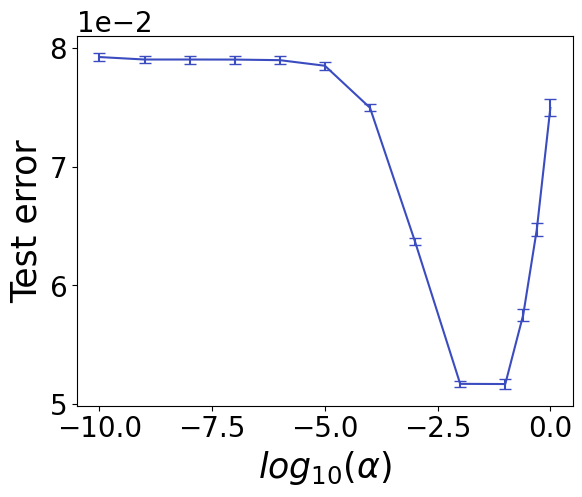}}
\subfigure{\includegraphics[width=36mm]{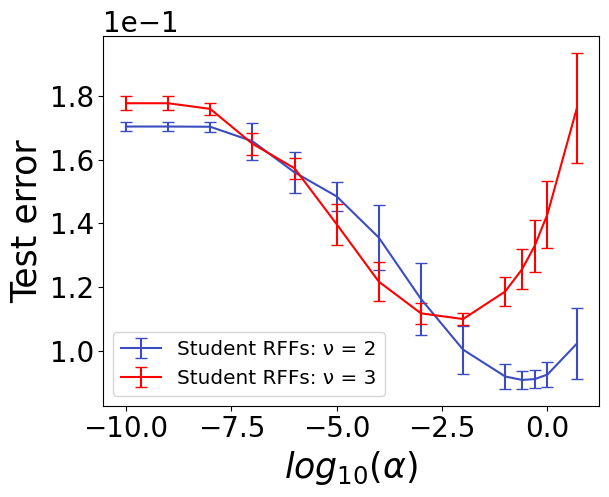}}
\caption{\textbf{Burgers' Equation}: (left to right) An example of (a) training input with 5\% noise, (b) training output with 5\% noise, (c) $\log_{10}(\alpha)$ versus average relative test error for RRFF-$\infty$ over 20 trials, (d) $\log_{10}(\alpha)$ versus average relative test error for RRFF-2 and RRFF-3 over 20 trials.}
\label{Fig:Burger_train_alpha}
\end{figure}

\begin{figure}[!htbp]
\centering     
\subfigure{\includegraphics[width=36mm]{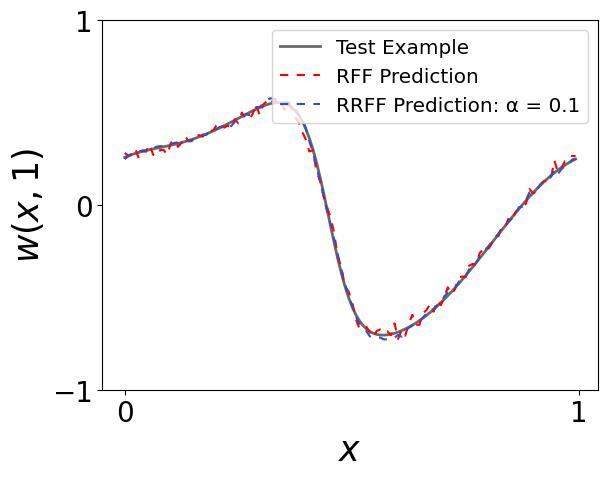}}
\subfigure{\includegraphics[width=36mm]{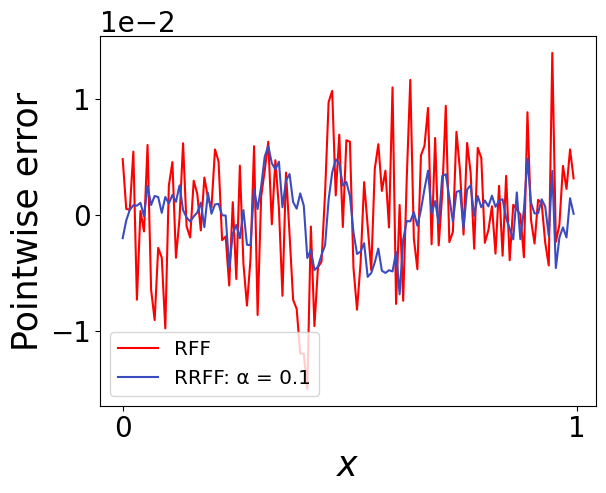}}
\subfigure{\includegraphics[width=36mm]{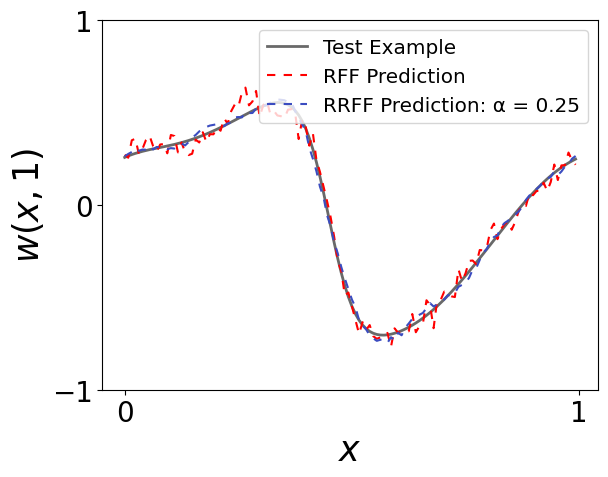}}
\subfigure{\includegraphics[width=36mm]{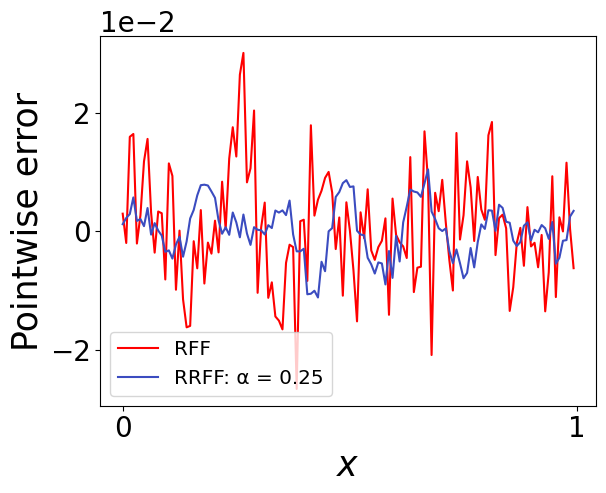}}
\caption{\textbf{Burgers' Equation}: (left to right) (a) test example and RFF-$\infty$ and RRFF-$\infty$ approximations of $\hat{f}$, (b) pointwise errors for RFF-$\infty$ and RRFF-$\infty$ approximations of $\hat{f}$, (c) test example and RFF-2 and RRFF-2 approximations of $\hat{f}$, (d) pointwise errors for RFF-2 and RRFF-2 approximations of $\hat{f}$.}
\label{Fig:Burger_test_pred_Gaussian_Student2}
\end{figure}

\begin{figure}[!htbp]
\centering     
\subfigure{\includegraphics[width=36mm]{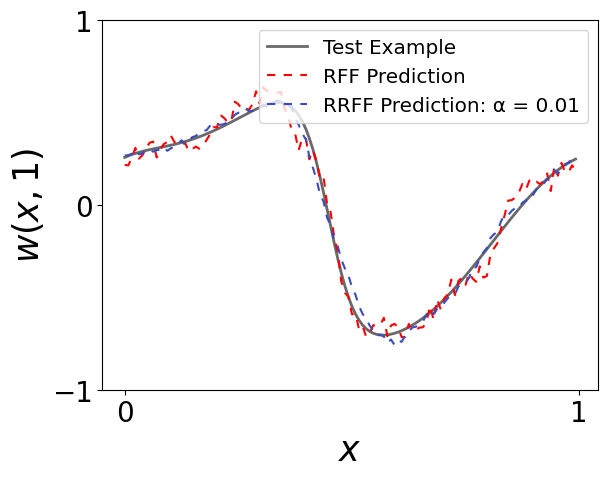}}
\subfigure{\includegraphics[width=36mm]{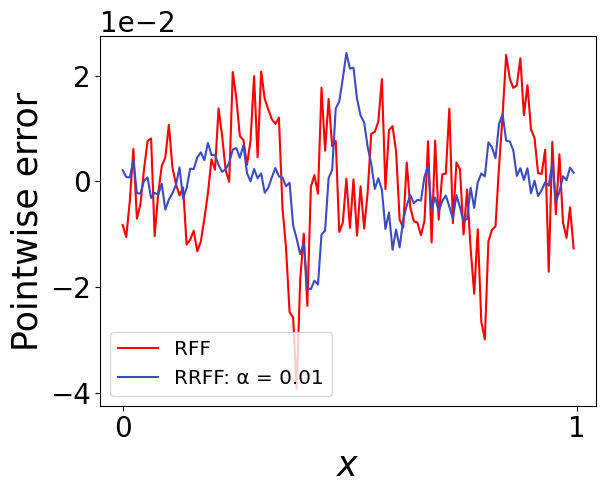}}
\subfigure{\includegraphics[width=36mm]{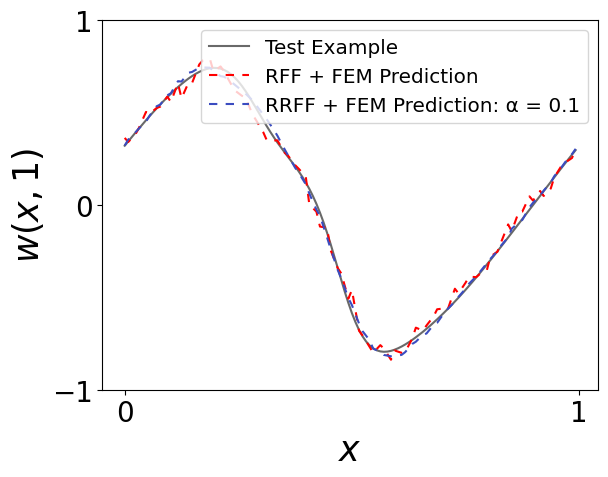}}
\subfigure{\includegraphics[width=36mm]{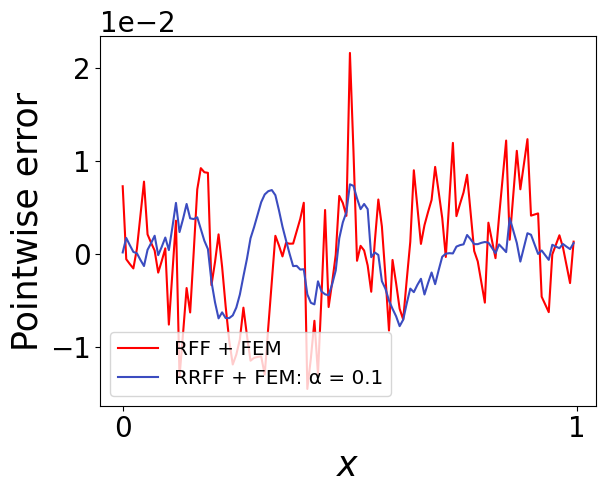}}
\caption{\textbf{Burgers' Equation}: (left to right) (a) test example and RFF-3 and RRFF-3 approximations of $\hat{f}$, (b) pointwise errors for RFF-3 and RRFF-3 approximations of $\hat{f}$, (c) test example and predictions of $R_\mathcal{V}\circ\hat{f}$ using RFF-FEM-$\infty$ and RRFF-FEM-$\infty$, (d) pointwise errors for predictions of $R_\mathcal{V}\circ\hat{f}$ using RFF-FEM-$\infty$ and RRFF-FEM-$\infty$.}
\label{Fig:Burger_test_pred_Student_recovery_Gaussian}
\end{figure}

\begin{figure}[!htbp]
\centering     
\subfigure{\includegraphics[width=36mm]{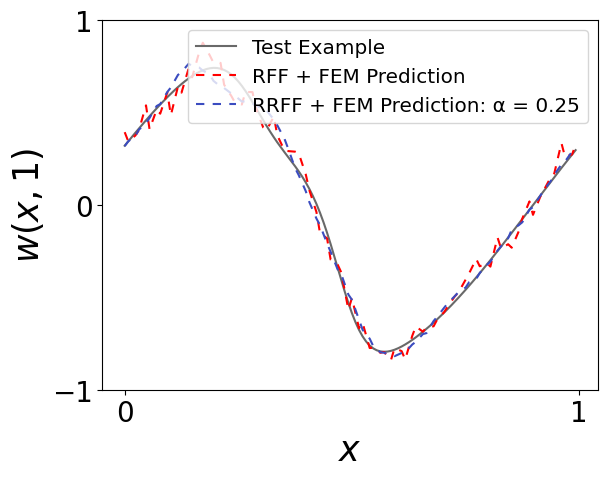}}
\subfigure{\includegraphics[width=36mm]{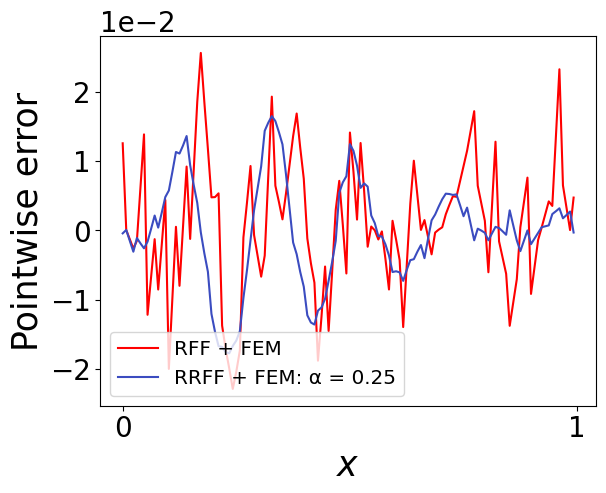}}
\subfigure{\includegraphics[width=36mm]{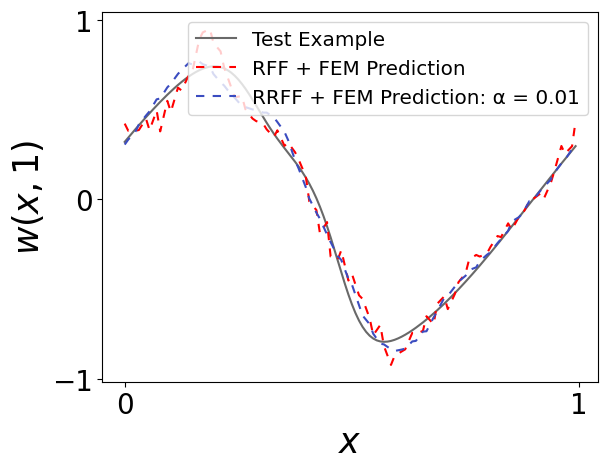}}
\subfigure{\includegraphics[width=36mm]{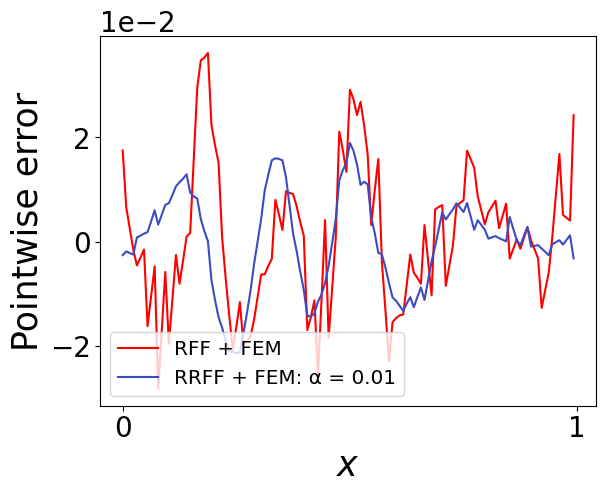}}
\caption{\textbf{Burgers' Equation}: (left to right) (a) test example and predictions of $R_\mathcal{V}\circ\hat{f}$ using RFF-FEM-2 and RRFF-FEM-2, (b) pointwise errors for predictions of $R_\mathcal{V}\circ\hat{f}$ using RFF-FEM-2 and RRFF-FEM-2, (c) test example and predictions of $R_\mathcal{V}\circ\hat{f}$ using RFF-FEM-3 and RRFF-FEM-3, (d) pointwise errors for predictions of $R_\mathcal{V}\circ\hat{f}$ using RFF-FEM-3 and RRFF-FEM-3.}
\label{Fig:Burger_recovery_Student}
\end{figure} 

\subsection{Darcy Flow}
Consider the two-dimensional Darcy flow problem:
\begin{equation*}
\begin{aligned}
    -\text{div }e^u\nabla v &= w, &&\text{in } D \\
    v&=0,&&\text{on } \partial D,
\end{aligned}
\end{equation*}
where $D = (0,1)^2$ and the source term $w$ is fixed. We aim to learn the operator $G: u \mapsto v$, where $u$ is the permeability field and $v$ is the solution. The domains are defined as $D_\mathcal{U} = D_\mathcal{V} = D=(0,1)^2$. As described in \cite{lu2022comprehensive}, the permeability/diffusion coefficient $u$ is sampled by taking $u\sim\log(h(\beta))$, where $\beta=\mathcal{GP}(0,(-\Delta+9\mathbf{I})^{-2})$ is a Gaussian process and $h$ is a binary function mapping positive values to $12$ and negative values to $3$. Hence, $e^u$ is a piecewise constant permeability field. The RFF and RRFF methods use a $29\times 29$ uniform grid for $u$ and $v$, 1000 samples for training, and 200 samples for testing.

A lower resolution nonuniform grid on $(0,1)^2$ with 561 grid points is used for RFF-FEM and RRFF-FEM. We train $\hat{f}$ with 800 samples and test its performance with 200 samples. The performance of $R_\mathcal{V}\circ\hat{f}$ is evaluated on a hold-out set of 200 samples. Table \ref{darcy_param_table} presents the parameters used in the numerical experiments. In Figure \ref{Fig:Darcy_train_alpha}, an example of training input and output with 5\% noise is shown. The figure also includes plots of $\log_{10}(\alpha)$ versus the average relative test errors for the RRFF-2, RRFF-3, and RRFF-$\infty$ methods over 20 runs. For RRFF-$\infty$, there is a clear distinction between the model's overfitting and underfitting regimes for the range of $\alpha$'s shown. However, for RRFF-2 and RRFF-3, as $\alpha$ increases, the test error decreases before reaching a plateau, which indicates that the model will not continue oversmoothing the noise in this range. Figures \ref{Fig:Darcy_test_pred_Gaussian}, \ref{Fig:Darcy_test_pred_Student2}, and \ref{Fig:Darcy_test_pred_Student3} plot examples of test functions, the RFF-$\infty$, RRFF-$\infty$, RFF-2, RRFF-2, RFF-3, and RRFF-3 approximations of $\hat{f}$, respectively, and their pointwise errors. Figures \ref{Fig:Darcy_recovery_Gaussian}, \ref{Fig:Darcy_recovery_Student2}, and \ref{Fig:Darcy_recovery_Student3} present test examples, the RFF-FEM-$\infty$, RRFF-FEM-$\infty$, RFF-FEM-2, RRFF-FEM-2, RFF-FEM-3, and RRFF-FEM-3 predictions of $R_\mathcal{V}\circ\hat{f}$, respectively, alongside the corresponding pointwise errors.

\begin{table}[!htbp]
\centering
    \scriptsize	
    \begin{tabular}{{|P{0.13\linewidth} | P{0.08\linewidth} | P{0.08\linewidth} |
    P{0.08\linewidth} | P{0.08\linewidth} |}}
\hline
\addstackgap{Distribution} & \addstackgap{$N$} & \addstackgap{$\sigma$} & \addstackgap{$\alpha$} & \addstackgap{$p$} 
\\ \hline \addstackgap{Gaussian} & \addstackgap{100k} & \addstackgap{$\sqrt{2\times 10^{-5}}$} & \addstackgap{0.1} & \addstackgap{2}  \\ \hline \addstackgap{Student ($\nu=2$)} & \addstackgap{100k} & \addstackgap{0.02} & \addstackgap{0.5} & \addstackgap{2} \\ \hline \addstackgap{Student ($\nu=3$)} & \addstackgap{100k} & \addstackgap{0.02} & \addstackgap{0.5} & \addstackgap{2} \\
\hline
\end{tabular}
\caption{\textbf{Darcy Flow}: Parameters for the numerical experiments, where $N$ is the number of random features and $\sigma$ is the scale parameter for the RFF, RRFF, RFF-FEM, and RRFF-FEM methods, and $\alpha$ and $p$ are regularization parameters for the RRFF and RRFF-FEM methods.}
\label{darcy_param_table}
\end{table}

\begin{figure}[!htbp]
\centering
\subfigure{\includegraphics[width=66mm]{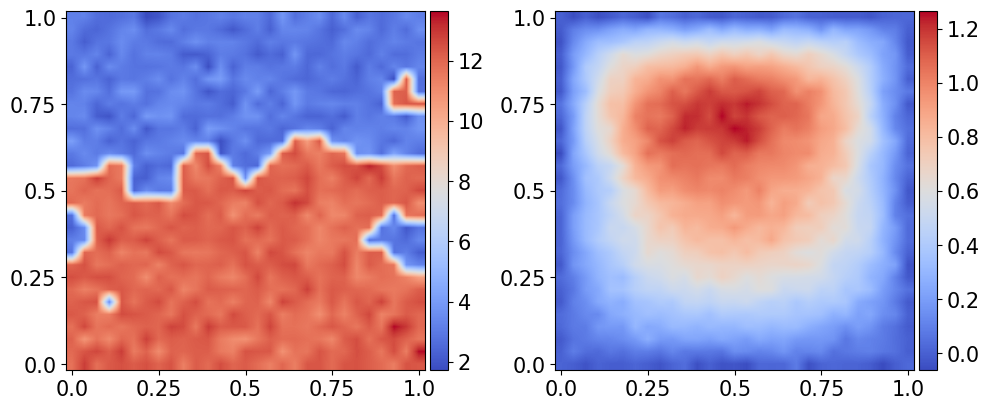}}
\subfigure{\includegraphics[width=36mm]{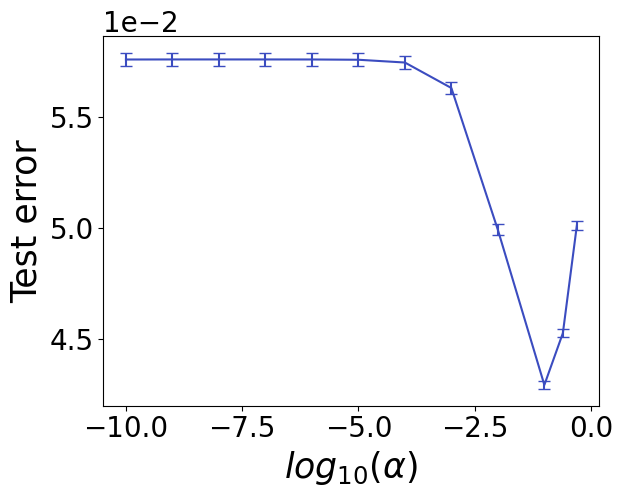}}
\subfigure{\includegraphics[width=36mm]{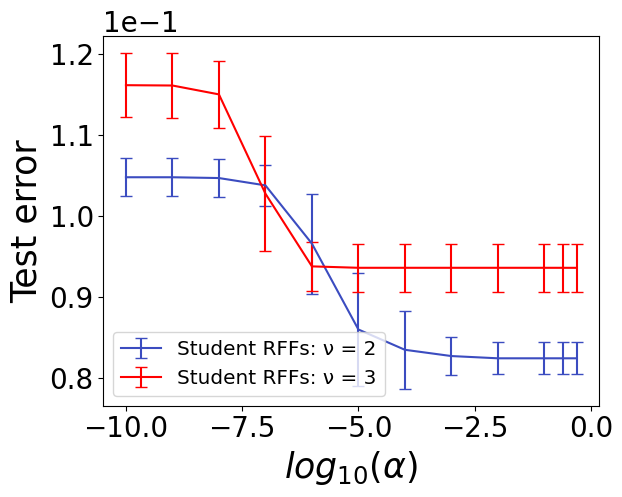}}
\caption{\textbf{Darcy Flow}: (left to right) An example of (a) training input with 5\% noise, (b) training output with 5\% noise, (c) $\log_{10}(\alpha)$ versus average relative test error for RRFF-$\infty$ over 20 trials, (d) $\log_{10}(\alpha)$ versus average relative test error for RRFF-2 and RRFF-3 over 20 trials.}
\label{Fig:Darcy_train_alpha}
\end{figure}

\begin{figure}[!htbp]
\centering     
\subfigure{\includegraphics[width=97.5mm]{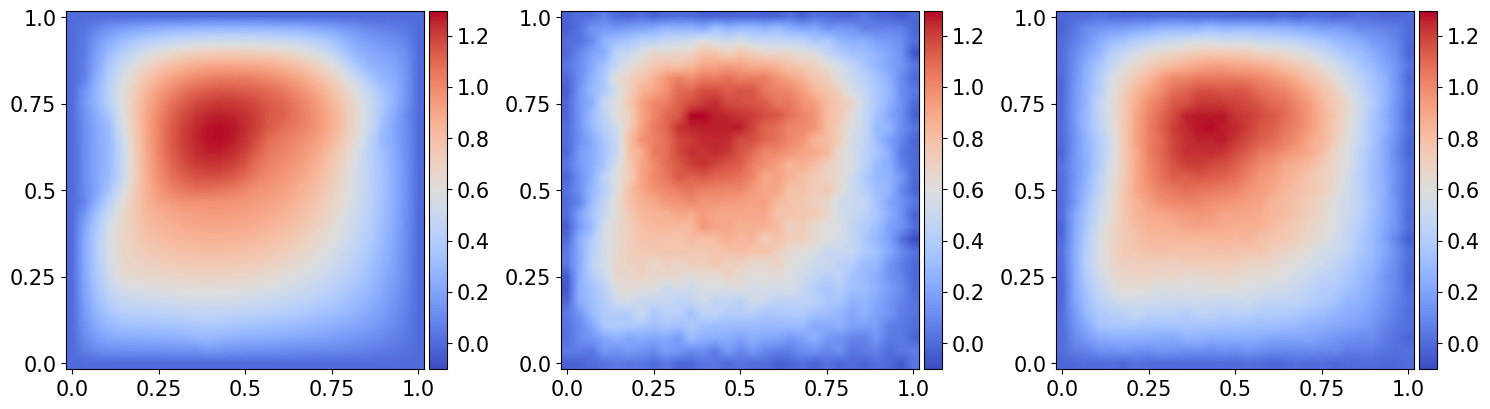}}
\subfigure{\includegraphics[width=65.5mm]{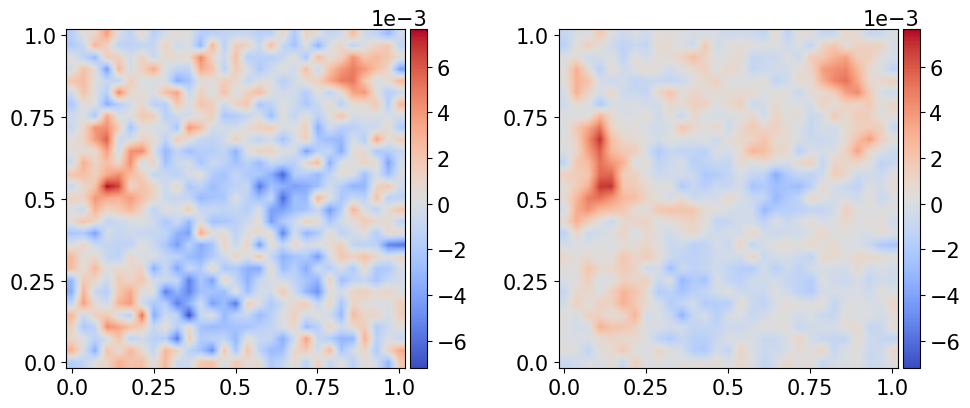}}
\caption{\textbf{Darcy Flow}: (left to right) (a) test example, (b) RFF-$\infty$ approximation of $\hat{f}$, (c) RRFF-$\infty$ approximation of $\hat{f}$, (d) pointwise error for RFF-$\infty$ approximation of $\hat{f}$, (e) pointwise error for RRFF-$\infty$ approximation of $\hat{f}$.}
\label{Fig:Darcy_test_pred_Gaussian}
\end{figure}

\begin{figure}[!htbp]
\centering     
\subfigure{\includegraphics[width=97mm]{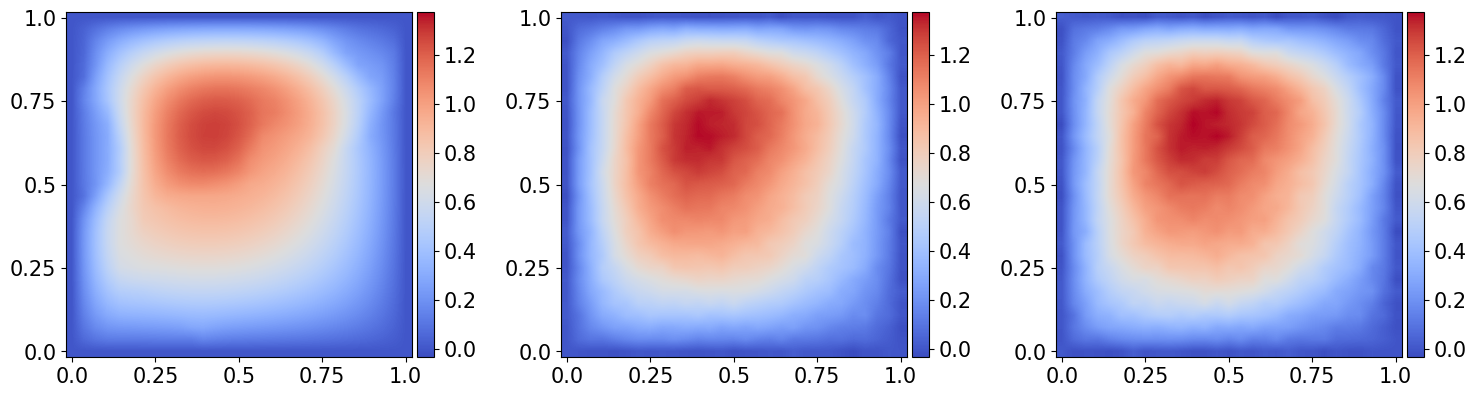}}
\subfigure{\includegraphics[width=66.5mm]{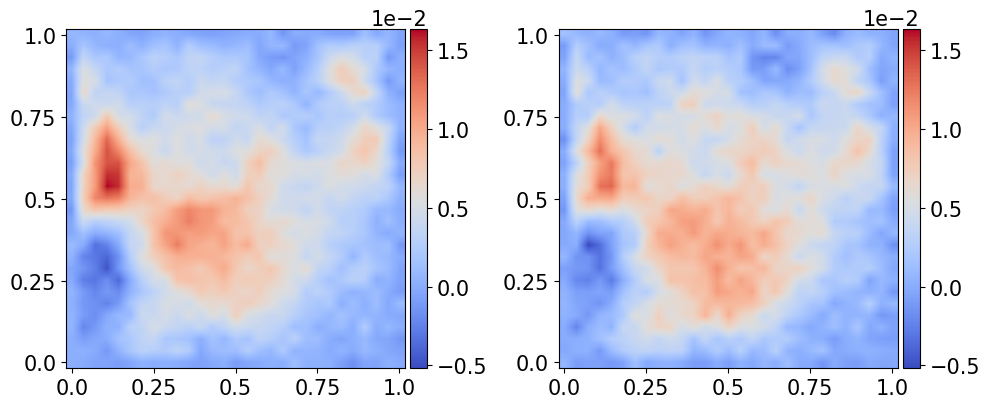}}
\caption{\textbf{Darcy Flow}: (left to right) (a) test example,  (b) RFF-2 approximation of $\hat{f}$, (c) RRFF-2 approximation of $\hat{f}$, (d) pointwise error for RFF-2 approximation of $\hat{f}$, (e) pointwise error for RRFF-2 approximation of $\hat{f}$.}
\label{Fig:Darcy_test_pred_Student2}
\end{figure}

\begin{figure}[!htbp]
\centering     
\subfigure{\includegraphics[width=97mm]{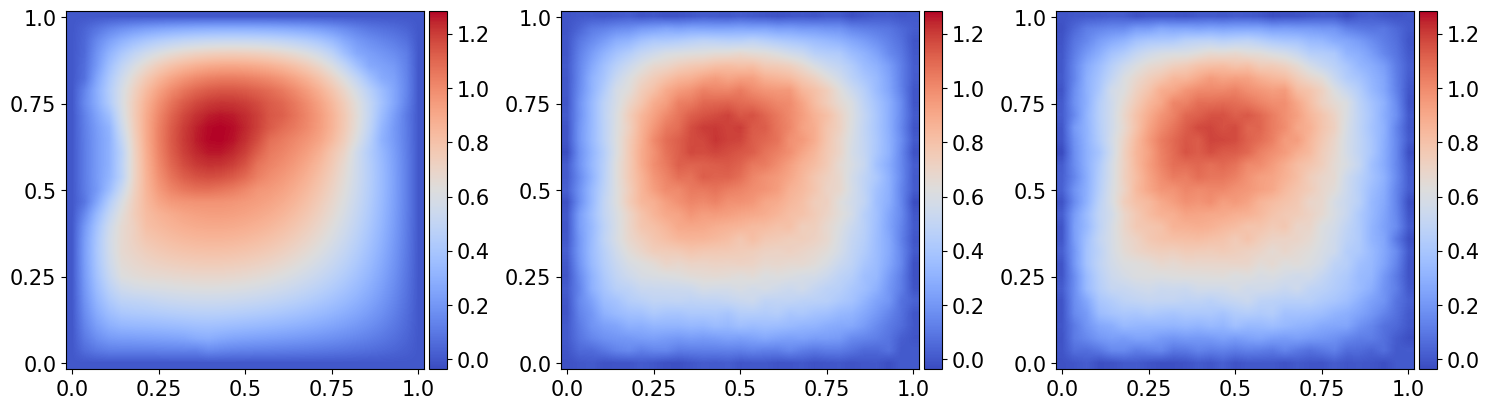}}
\subfigure{\includegraphics[width=66mm]{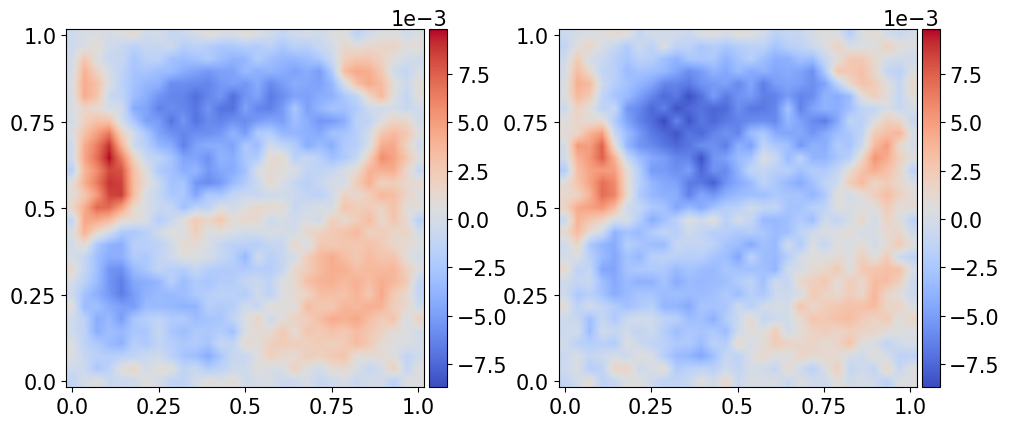}}
\caption{\textbf{Darcy Flow}: (left to right) (a) test example,  (b) RFF-3 approximation of $\hat{f}$, (c) RRFF-3 approximation of $\hat{f}$, (d) pointwise error for RFF-3 approximation of $\hat{f}$, (e) pointwise error for RRFF-3 approximation of $\hat{f}$.}
\label{Fig:Darcy_test_pred_Student3}
\end{figure}

\begin{figure}[!htbp]
\centering     
\subfigure{\includegraphics[width=98.5mm]{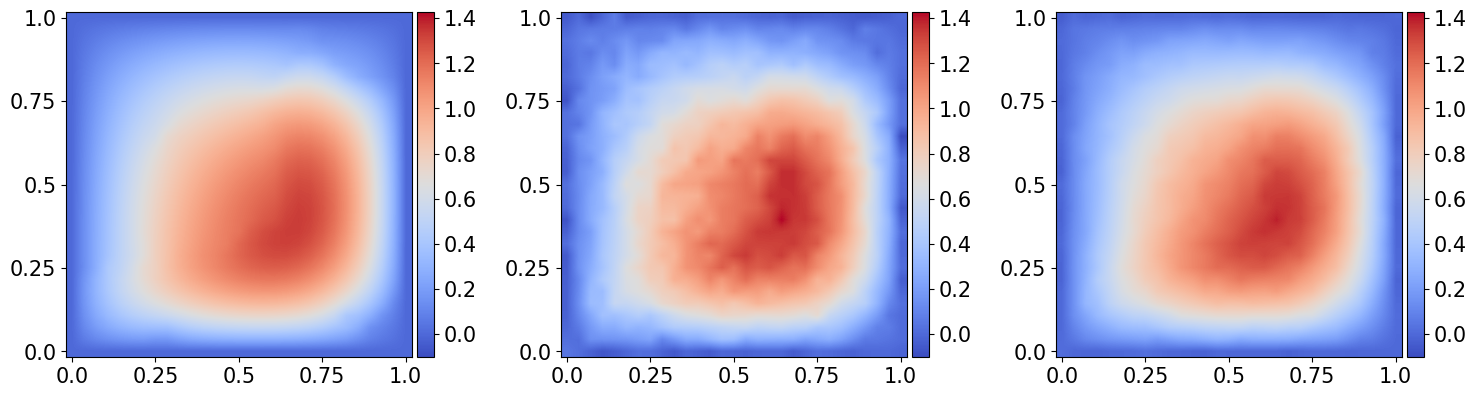}}
\subfigure{\includegraphics[width=64.5mm]{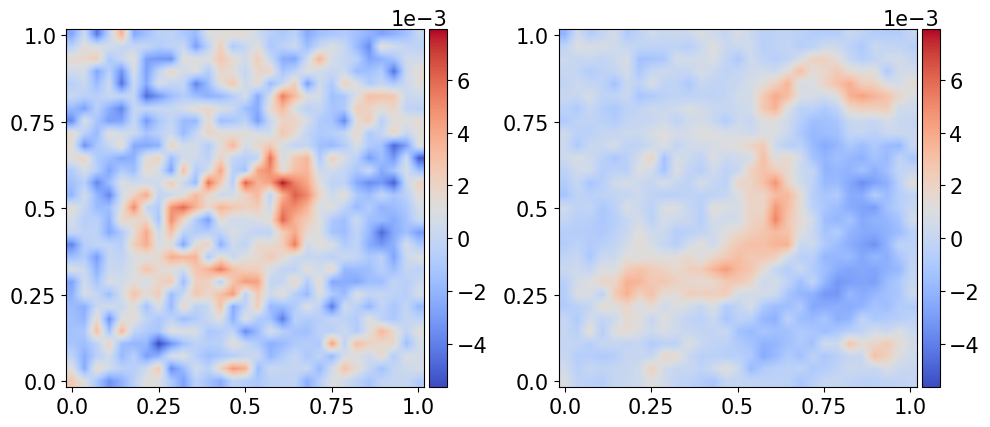}}
\caption{\textbf{Darcy Flow}: (left to right) (a) test example, (b) prediction of $R_\mathcal{V}\circ\hat{f}$ using RFF-FEM-$\infty$, (c) prediction of $R_\mathcal{V}\circ\hat{f}$ using RRFF-FEM-$\infty$, (d) pointwise error for prediction of $R_\mathcal{V}\circ\hat{f}$ using RFF-FEM-$\infty$, (e) pointwise error for prediction of $R_\mathcal{V}\circ\hat{f}$ using RRFF-FEM-$\infty$.}
\label{Fig:Darcy_recovery_Gaussian}
\end{figure}

\begin{figure}[!htbp]
\centering     
\subfigure{\includegraphics[width=98mm]{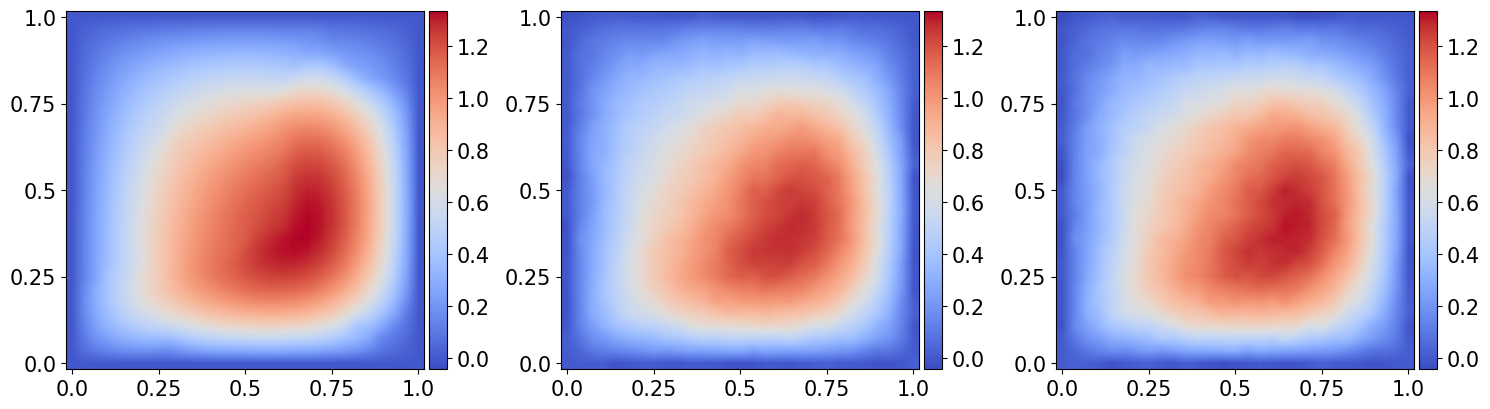}}
\subfigure{\includegraphics[width=65mm]{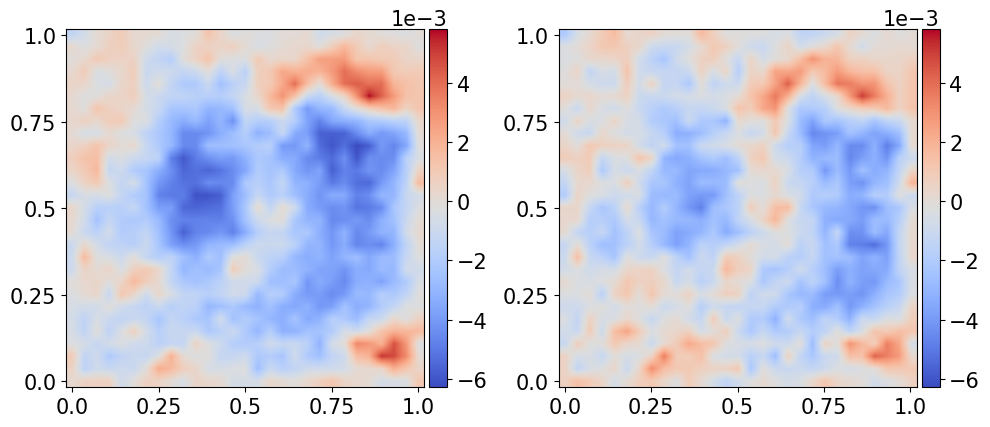}}
\caption{\textbf{Darcy Flow}: (left to right) (a) test example, (b) prediction of $R_\mathcal{V}\circ\hat{f}$ using RFF-FEM-2, (c) prediction of $R_\mathcal{V}\circ\hat{f}$ using RRFF-FEM-2, (d) pointwise error for prediction of $R_\mathcal{V}\circ\hat{f}$ using RFF-FEM-2, (e) pointwise error for prediction of $R_\mathcal{V}\circ\hat{f}$ using RRFF-FEM-2.}
\label{Fig:Darcy_recovery_Student2}
\end{figure}

\begin{figure}[!htbp]
\centering     
\subfigure{\includegraphics[width=98.5mm]{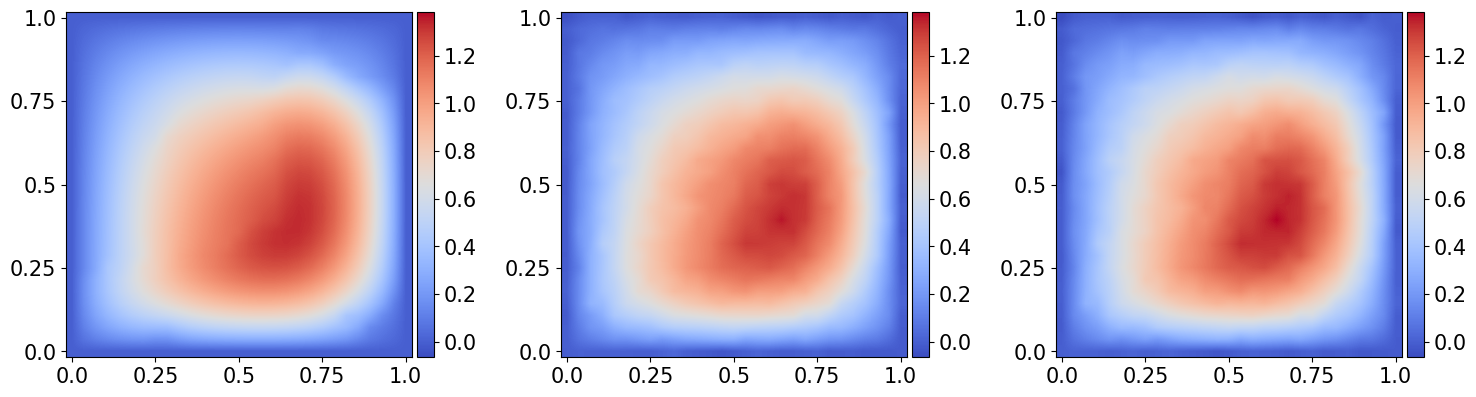}}
\subfigure{\includegraphics[width=64.5mm]{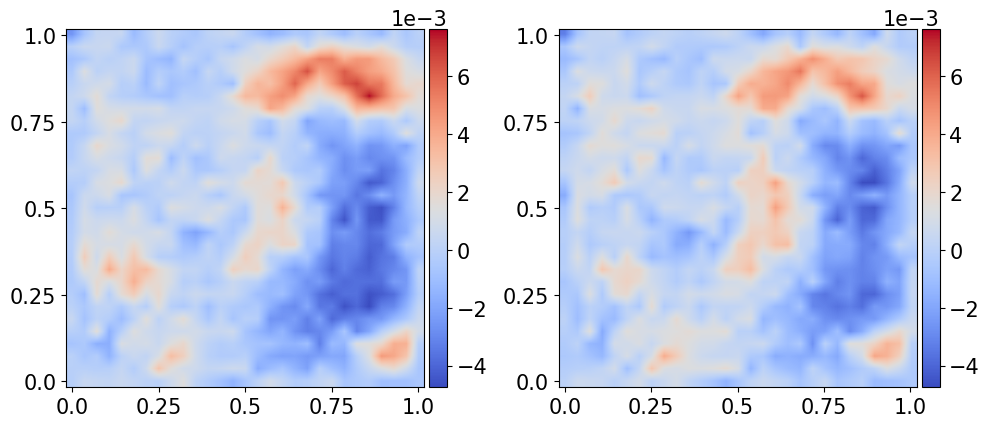}}
\caption{\textbf{Darcy Flow}: (left to right) (a) test example, (b) prediction of $R_\mathcal{V}\circ\hat{f}$ using RFF-FEM-3, (c) prediction of $R_\mathcal{V}\circ\hat{f}$ using RRFF-FEM-3, (d) pointwise error for prediction of $R_\mathcal{V}\circ\hat{f}$ using RFF-FEM-3, (e) pointwise error for prediction of $R_\mathcal{V}\circ\hat{f}$ using RRFF-FEM-3.}
\label{Fig:Darcy_recovery_Student3}
\end{figure}

\subsection{Helmholtz Equation}
\label{helmholtz_eq}
Consider the Helmholtz equation on $D=(0,1)^2$. Given a fixed frequency $\omega$ and wavespeed field $u:D\to\mathbb{R}$, the excitation field $v:D\to\mathbb{R}$ solves the PDE:
\begin{equation}\label{helmholtz}
    \begin{aligned}
        \bigg(-\Delta-\dfrac{\omega^2}{u^2(x)}\bigg)v &= 0, &&\text{in } D\\
        \dfrac{\partial v}{\partial n} &= 0, &&\text{on } \partial D_1\\
        \dfrac{\partial v }{\partial n} &= v_N, &&\text{on } \partial D_2,
    \end{aligned}
\end{equation}
where $\partial D_1 = \{0,1\} \times[0,1] \cup [0,1]\times\{0\}$ and $\partial D_2 = [0,1]\times\{1\}$. As in \cite{dehoop2022costaccuracy}, we take $\omega = 10^3$ and $v_N = \mathbbm{1}_{\{0.35\leq x \leq 0.65\}}$. We learn the operator $G: u\mapsto v$, i.e., the map from the wavespeed field $u$ to the excitation field $v$. The domains of the function spaces are $D_\mathcal{U} = D_\mathcal{V} = D=(0,1)^2$. As detailed in \cite{dehoop2022costaccuracy}, the wavespeed field $u$ has the form $u(x) = 20 + \tanh{(\tilde{u}(x))}$, where $\tilde{u}\sim\mathcal{GP}(0,(-\Delta+9\mathbf{I})^{-2})$, a Gaussian process. A discretized grid of size $101\times 101$ is used for $u$. Samples for the excitation field $v$ are generated by solving (\ref{helmholtz}) using a finite element method on a $101\times 101$ grid. The data was corrupted by 1\% relative Gaussian noise.

For RFF-FEM and RRFF-FEM, a coarser nonuniform grid on $(0,1)^2$ with about 6.8k grid points is utilized. The training set of $\hat{f}$ contains 10k samples, and the test set for $\hat{f}$ has 25k samples. We test the performance of $R_\mathcal{V}\circ\hat{f}$ on the remaining 5k samples. The parameters for the numerical experiments are listed in Table \ref{helmholtz_param_table}. Figure \ref{Fig:Helmholtz_training_data} shows an example of training input and output corrupted with 1\% noise. In Figures \ref{Fig:Helmholtz_recovery_Gaussian}, \ref{Fig:Helmholtz_recovery_Student2}, and \ref{Fig:Helmholtz_recovery_Student3}, plots of test examples, approximations of $R_\mathcal{V}\circ\hat{f}$ using RFF-FEM-$\infty$, RRFF-FEM-$\infty$, RFF-FEM-2, RRFF-FEM-2, RFF-FEM-3, and RRFF-FEM-3, respectively, and their associated pointwise errors are presented.

\begin{table}[!htbp]
\centering
    \scriptsize	
    \begin{tabular}{{|P{0.13\linewidth} | P{0.08\linewidth} | P{0.08\linewidth} |
    P{0.08\linewidth} | P{0.08\linewidth} |}}
\hline
\addstackgap{Distribution} & \addstackgap{$N$} & \addstackgap{$\sigma$} & \addstackgap{$\alpha$} & \addstackgap{$p$} 
\\ \hline \addstackgap{Gaussian} & \addstackgap{15k} & \addstackgap{$\sqrt{2\times 10^{-5}}$} & \addstackgap{0.01} & \addstackgap{2}  \\ \hline \addstackgap{Student ($\nu=2$)} & \addstackgap{15k} & \addstackgap{$10^{-5}$} & \addstackgap{0.01} & \addstackgap{2} \\ \hline \addstackgap{Student ($\nu=3$)} & \addstackgap{15k} & \addstackgap{$10^{-5}$} & \addstackgap{0.01} & \addstackgap{4} \\
\hline
\end{tabular}
\caption{\textbf{Helmholtz Equation}: Parameters for the numerical experiments, where $N$ is the number of random features and $\sigma$ is the scale parameter for the RFF, RRFF, RFF-FEM, and RRFF-FEM methods, and $\alpha$ and $p$ are regularization parameters for the RRFF and RRFF-FEM methods.}
\label{helmholtz_param_table}
\end{table}

\begin{figure}[!htbp]
\centering
\includegraphics[width=65mm]{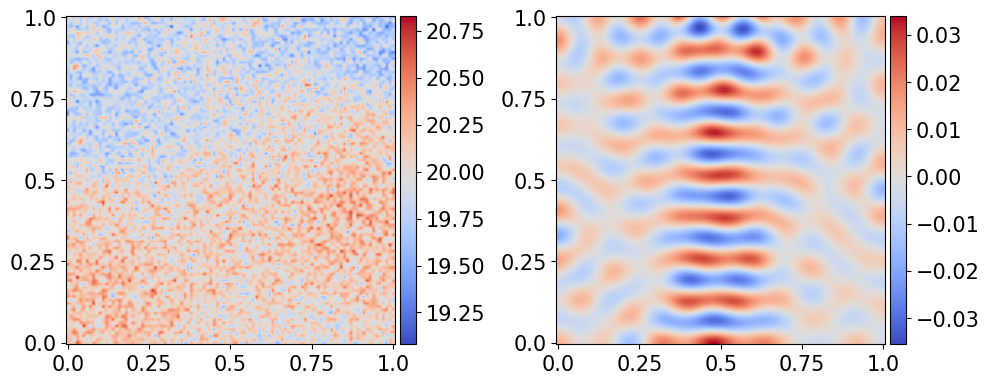}
\caption{\textbf{Helmholtz Equation}: (left to right) An example of (a) training input with 1\% noise, (b) training output with 1\% noise.}
\label{Fig:Helmholtz_training_data}
\end{figure}

\begin{figure}[!htbp]
\centering     
\subfigure{\includegraphics[width=100.5mm]{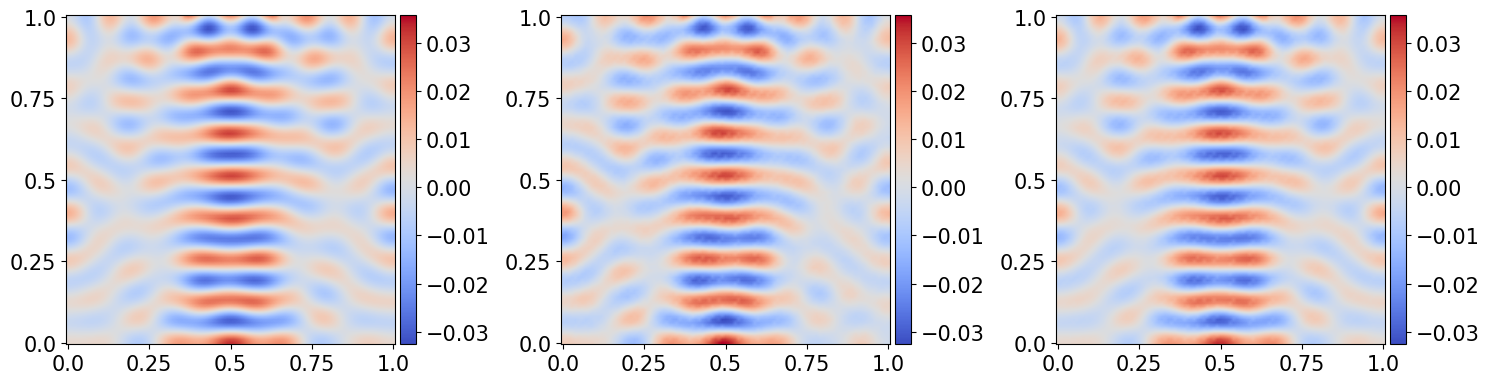}}
\subfigure{\includegraphics[width=62.5mm]{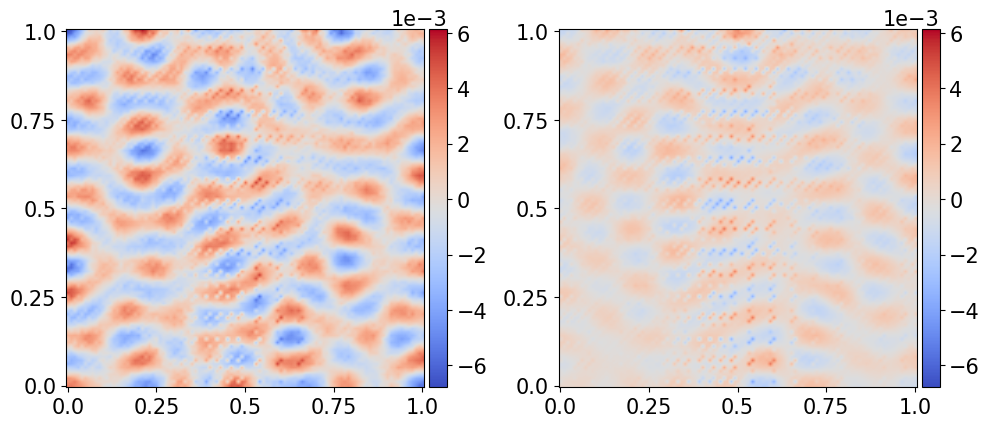}}
\caption{\textbf{Helmholtz Equation}: (left to right) (a) test example, (b) prediction of $R_\mathcal{V}\circ\hat{f}$ using RFF-FEM-$\infty$, (c) prediction of $R_\mathcal{V}\circ\hat{f}$ using RRFF-FEM-$\infty$, (d) pointwise error for prediction of $R_\mathcal{V}\circ\hat{f}$ using RFF-FEM-$\infty$, (e) pointwise error for prediction of $R_\mathcal{V}\circ\hat{f}$ using RRFF-FEM-$\infty$.}
\label{Fig:Helmholtz_recovery_Gaussian}
\end{figure}

\begin{figure}[!htbp]
\centering     
\subfigure{\includegraphics[width=99.5mm]{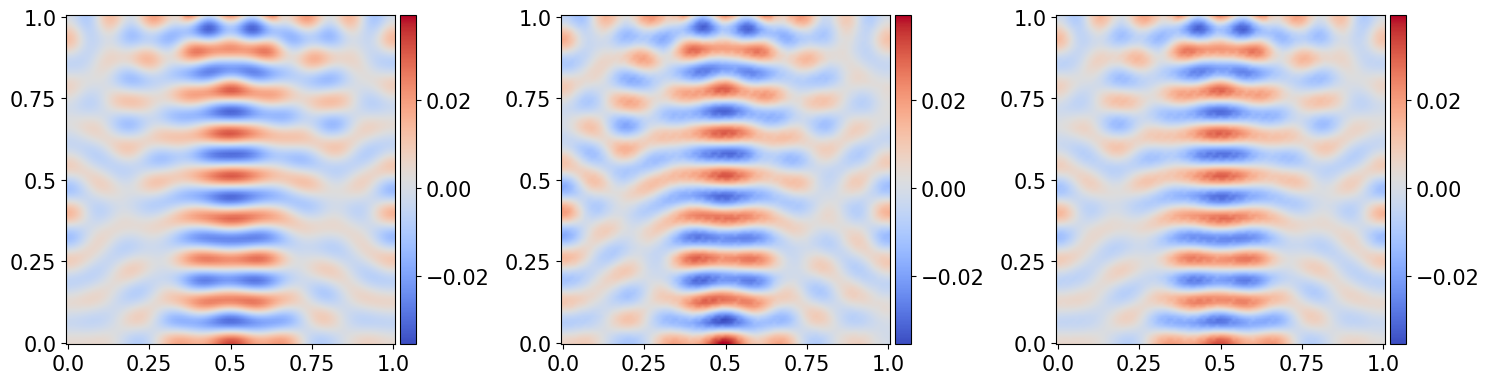}}
\subfigure{\includegraphics[width=63.5mm]{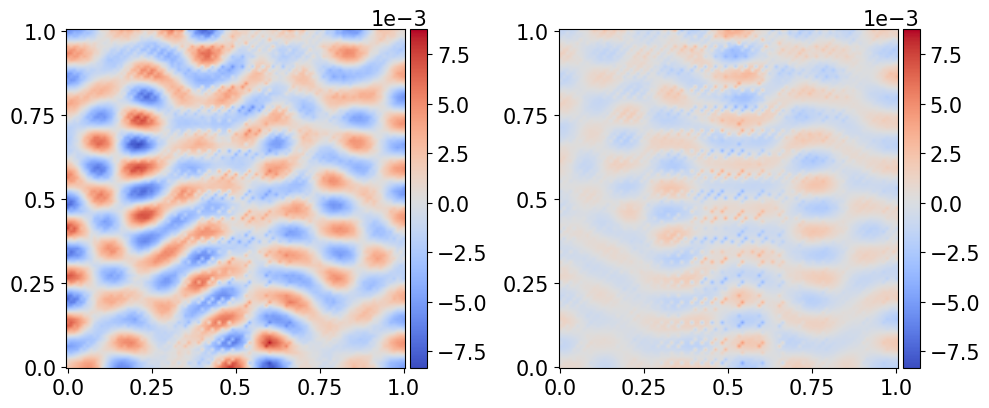}}
\caption{\textbf{Helmholtz Equation}: (left to right) (a) test example, (b) prediction of $R_\mathcal{V}\circ\hat{f}$ using RFF-FEM-2, (c) prediction of $R_\mathcal{V}\circ\hat{f}$ using RRFF-FEM-2, (d) pointwise error for prediction of $R_\mathcal{V}\circ\hat{f}$ using RFF-FEM-2, (e) pointwise error for prediction of $R_\mathcal{V}\circ\hat{f}$ using RRFF-FEM-2.}
\label{Fig:Helmholtz_recovery_Student2}
\end{figure}

\begin{figure}[!htbp]
\centering     
\subfigure{\includegraphics[width=99.5mm]{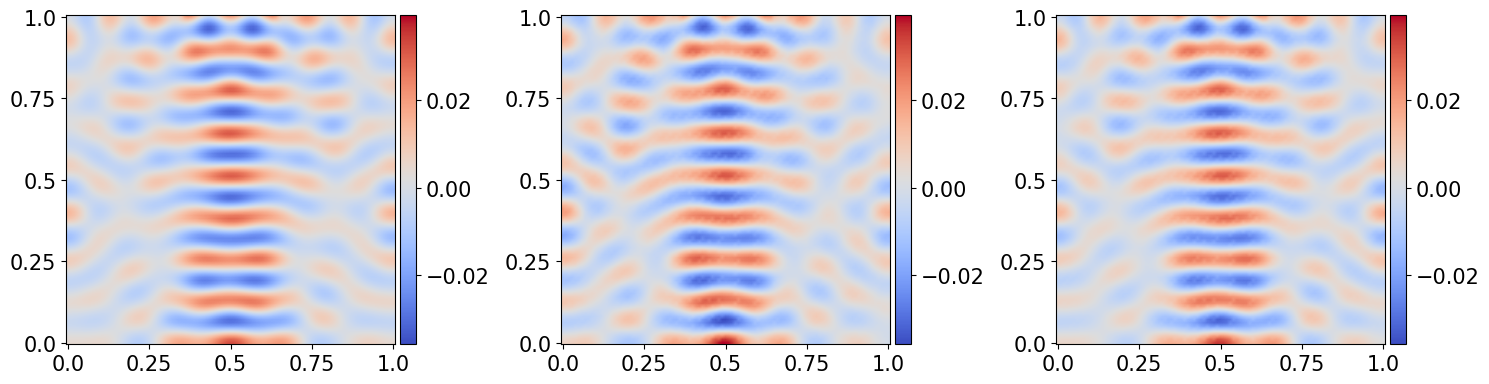}}
\subfigure{\includegraphics[width=63.5mm]{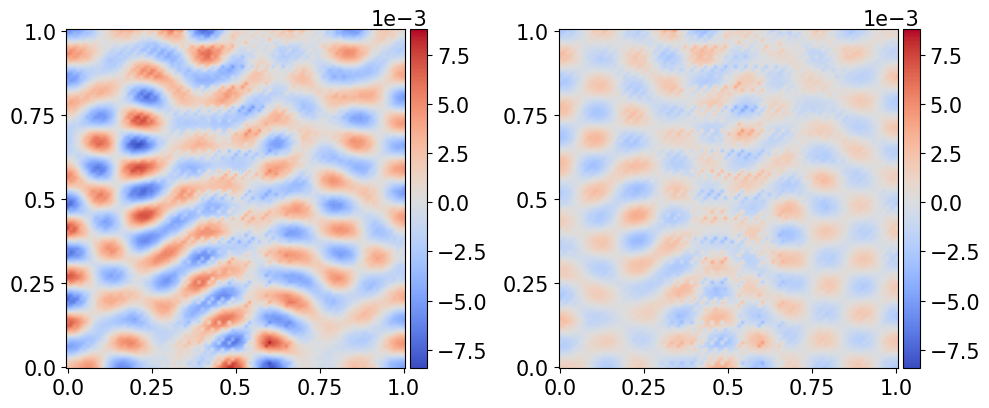}}
\caption{\textbf{Helmholtz Equation}: (left to right) (a) test example, (b) prediction of $R_\mathcal{V}\circ\hat{f}$ using RFF-FEM-3, (c) prediction of $R_\mathcal{V}\circ\hat{f}$ using RRFF-FEM-3, (d) pointwise error for prediction of $R_\mathcal{V}\circ\hat{f}$ using RFF-FEM-3, (e) pointwise error for prediction of $R_\mathcal{V}\circ\hat{f}$ using RRFF-FEM-3.}
\label{Fig:Helmholtz_recovery_Student3}
\end{figure}

\subsection{Navier-Stokes Equation}
\label{NS_section}
Consider the vorticity-stream $(\omega,\psi)$ formulation of the incompressible Navier-Stokes equations on $D$:
\begin{align} \label{navier}
\begin{split}
    &\dfrac{\partial \omega}{\partial t} + (c\cdot \nabla)\omega - \mu\Delta \omega = u\\
    &\omega = -\Delta \psi, \hspace{0.7cm}\int_D\psi=0\\
    &c = \bigg(\dfrac{\partial \psi}{\partial x_2},-\dfrac{\partial\psi}{\partial x_1}\bigg)
\end{split}
\end{align}
with the viscosity $\mu=0.025$, periodic boundary conditions, and fixed initial condition $\omega(\cdot,0)$. Our goal is to learn the operator $G: u \mapsto \omega(\cdot,T)$, which maps the forcing term $u$ to the vorticity field $\omega(\cdot,T)$ at time $t = T$. The domains are $D_\mathcal{U} = D_\mathcal{V} = D = [0,2\pi]^2$. As in \cite{dehoop2022costaccuracy}, the forcing term $u$ is generated from a Gaussian process, $\mathcal{GP}(0,(-\Delta+9\mathbf{I})^{-4})$. The fixed initial condition $\omega(\cdot,0)$ is sampled from the same distribution. We set final time $T = 10$. Equation (\ref{navier}) is solved on a discretized grid of size $64 \times 64$ using a pseudo-spectral method and Crank-Nicolson time integration; see \cite{dehoop2022costaccuracy} for additional implementation details.

For the RFF-FEM and RRFF-FEM methods, a coarser nonuniform grid on $[0,2\pi]^2$ is used with around 2.7k grid points. In the training stage of $\hat{f}$, 10k samples are used, and during its testing process, 25k samples are used. The performance of $R_\mathcal{V}\circ\hat{f}$ is measured using the hold-out 5k samples. Table \ref{NS_param_table} summarizes the parameters in the numerical experiments. We provide an example of training input and output with 5\% noise in Figure \ref{Fig:NS_training_data}. Figures \ref{Fig:NS_recovery_Gaussian}, \ref{Fig:NS_recovery_Student2}, and \ref{Fig:NS_recovery_Student3} show examples of test functions, the predictions of $R_\mathcal{V}\circ\hat{f}$ obtained by RFF-FEM-$\infty$, RRFF-FEM-$\infty$, RFF-FEM-2, RRFF-FEM-2, RFF-FEM-3, and RRFF-FEM-3, respectively, as well as the pointwise errors.

\begin{table}[!htbp]
\centering
    \scriptsize	
    \begin{tabular}{{|P{0.13\linewidth} | P{0.08\linewidth} | P{0.08\linewidth} |
    P{0.08\linewidth} | P{0.08\linewidth} |}}
\hline
\addstackgap{Distribution} & \addstackgap{$N$} & \addstackgap{$\sigma$} & \addstackgap{$\alpha$} & \addstackgap{$p$} 
\\ \hline \addstackgap{Gaussian} & \addstackgap{15k} & \addstackgap{$\sqrt{2\times 10^{-3}}$} & \addstackgap{$10^{-5}$} & \addstackgap{4}  \\ \hline \addstackgap{Student ($\nu=2$)} & \addstackgap{15k} & \addstackgap{$10^{-4}$} & \addstackgap{$10^{-5}$} & \addstackgap{4} \\ \hline \addstackgap{Student ($\nu=3$)} & \addstackgap{15k} & \addstackgap{$10^{-4}$} & \addstackgap{$10^{-5}$} & \addstackgap{4} \\
\hline
\end{tabular}
\caption{\textbf{Navier-Stokes Equation}: Parameters for the numerical experiments, where $N$ is the number of random features and $\sigma$ is the scale parameter for the RFF, RRFF, RFF-FEM, and RRFF-FEM methods, and $\alpha$ and $p$ are regularization parameters for the RRFF and RRFF-FEM methods.}
\label{NS_param_table}
\end{table}

\begin{figure}[!htbp]
\centering
\includegraphics[width=65mm]{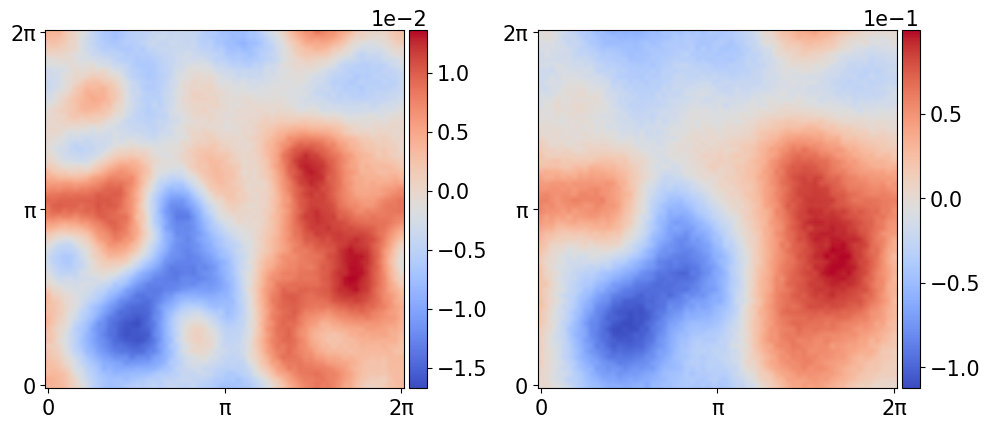}
\caption{\textbf{Navier-Stokes Equation}: (left to right) An example of (a) training input with 5\% noise, (b) training output with 5\% noise.}
\label{Fig:NS_training_data}
\end{figure}

\begin{figure}[!htbp]
\centering     
\subfigure{\includegraphics[width=99mm]{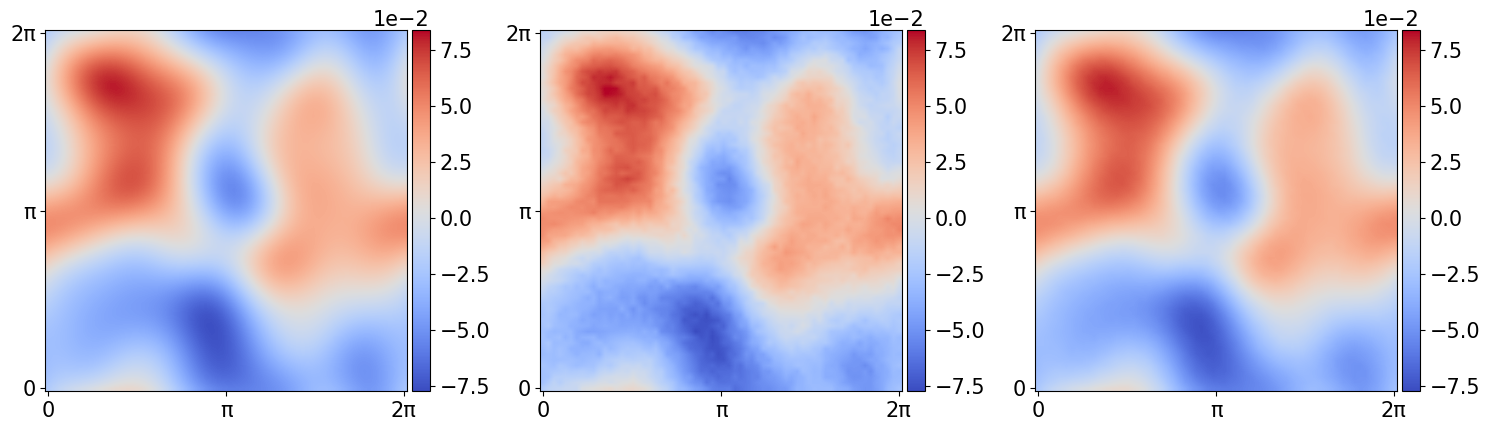}}
\subfigure{\includegraphics[width=64mm]{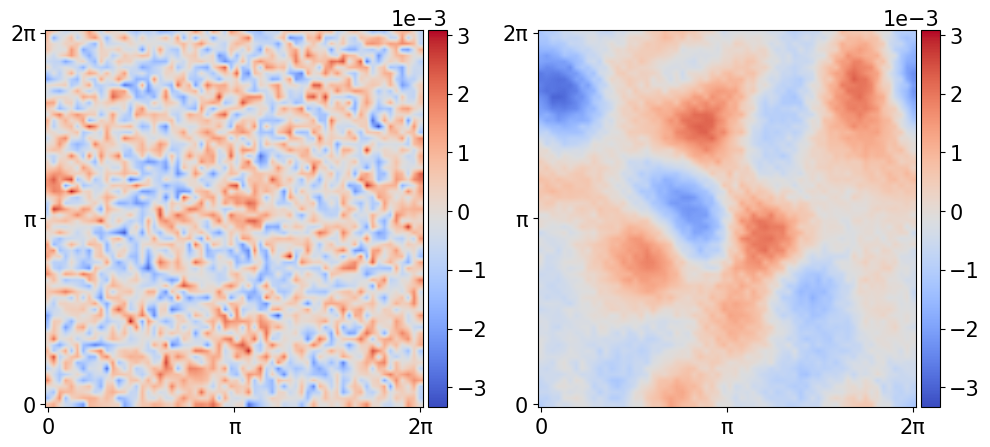}}
\caption{\textbf{Navier-Stokes Equation}: (left to right) (a) test example, (b) prediction of $R_\mathcal{V}\circ\hat{f}$ using RFF-FEM-$\infty$, (c) prediction of $R_\mathcal{V}\circ\hat{f}$ using RRFF-FEM-$\infty$, (d) pointwise error for prediction of $R_\mathcal{V}\circ\hat{f}$ using RFF-FEM-$\infty$, (e) pointwise error for prediction of $R_\mathcal{V}\circ\hat{f}$ using RRFF-FEM-$\infty$.}
\label{Fig:NS_recovery_Gaussian}
\end{figure}

\begin{figure}[!htbp]
\centering     
\subfigure{\includegraphics[width=99mm]{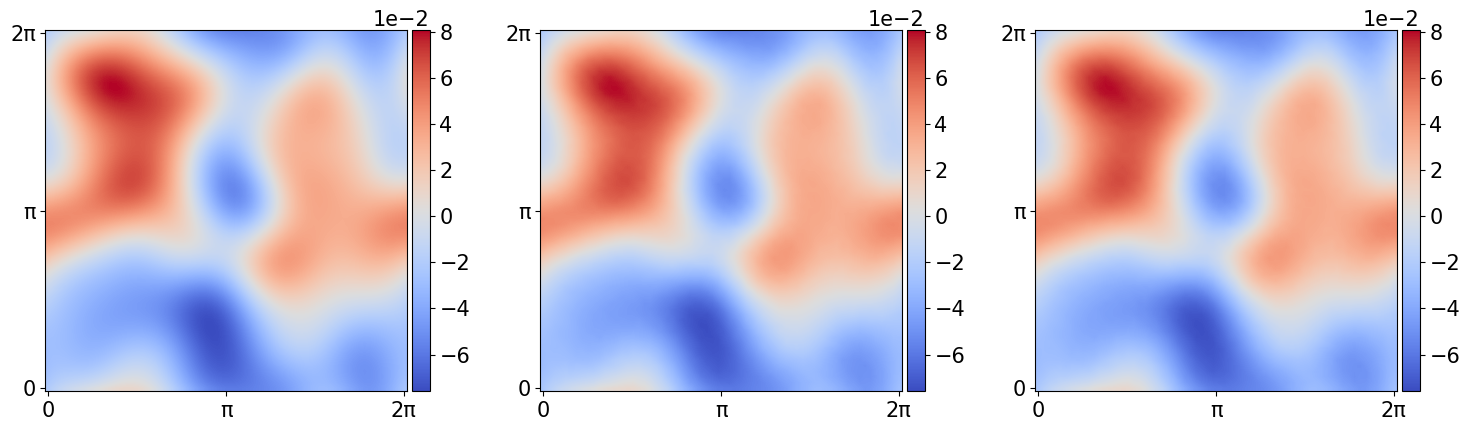}}
\subfigure{\includegraphics[width=64mm]{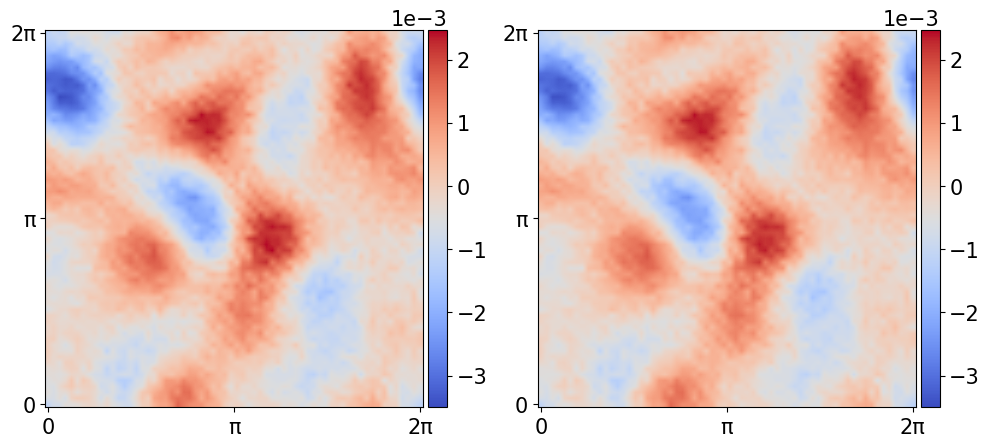}}
\caption{\textbf{Navier-Stokes Equation}: (left to right) (a) test example, (b) prediction of $R_\mathcal{V}\circ\hat{f}$ using RFF-FEM-2, (c) prediction of $R_\mathcal{V}\circ\hat{f}$ using RRFF-FEM-2, (d) pointwise error for prediction of $R_\mathcal{V}\circ\hat{f}$ using RFF-FEM-2, (e) pointwise error for prediction of $R_\mathcal{V}\circ\hat{f}$ using RRFF-FEM-2.}
\label{Fig:NS_recovery_Student2}
\end{figure}

\begin{figure}[!htbp]
\centering     
\subfigure{\includegraphics[width=99mm]{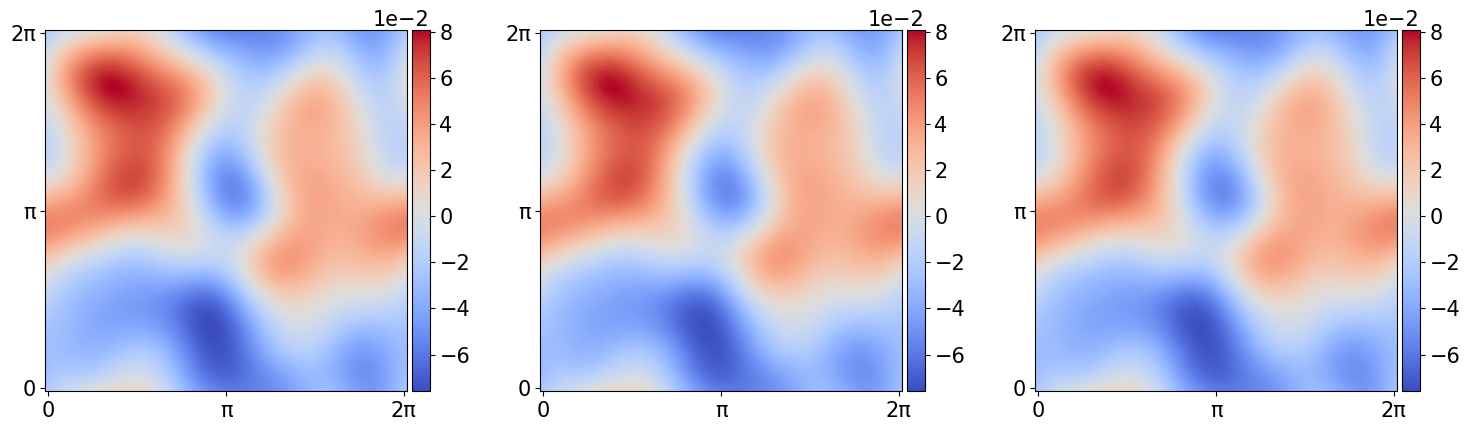}}
\subfigure{\includegraphics[width=64mm]{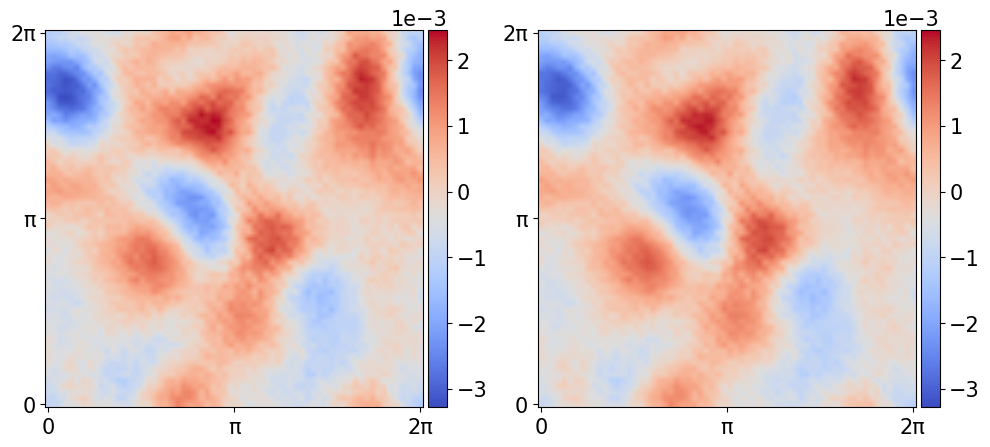}}
\caption{\textbf{Navier-Stokes Equation}: (left to right) (a) test example, (b) prediction of $R_\mathcal{V}\circ\hat{f}$ using RFF-FEM-3, (c) prediction of $R_\mathcal{V}\circ\hat{f}$ using RRFF-FEM-3, (d) pointwise error for prediction of $R_\mathcal{V}\circ\hat{f}$ using RFF-FEM-3, (e) pointwise error for prediction of $R_\mathcal{V}\circ\hat{f}$ using RRFF-FEM-3.}
\label{Fig:NS_recovery_Student3}
\end{figure}

\subsection{Structural Mechanics}
The governing equation of the displacement vector $w$ in an elastic solid undergoing infinitesimal
deformations on $D = (0,1)^2$ is:
    \begin{align*}
        \nabla \cdot \sigma = 0, \hspace{0.5cm}&\text{in } D \\
        w = \bar{w},\hspace{0.5cm} &\text{on } \Gamma_w\\
        \sigma \cdot n = u, \hspace{0.5cm} &\text{on } \Gamma_u
    \end{align*}
where $\sigma$ is the Cauchy stress tensor, $\bar{w}$ is the prescribed displacement on $\Gamma_w$, and $u$ is the surface traction on $\Gamma_u$. The boundary $\partial D$ is split into $\Gamma_u = [0,1]\times1$ and its complement $\Gamma_w$ with outward unit normal vector $n$. We want to learn the operator $G: u \mapsto v$, the map between the surface traction $u$ and the von Mises stress field $v$. Here we have $D_\mathcal{U} = (0,1)$ and $D_\mathcal{V}=(0,1)^2$. The load $u$ is sampled from the Gaussian process $\mathcal{GP}(100,400^2(-\Delta+9\mathbf{I})^{-1})$, where $\Delta$ is the Laplacian on $D_\mathcal{U} = (0,1)$ subject to homogeneous Neumann boundary conditions on the space of zero-mean functions and $\mathbf{I}$ is the identity. Samples for $v$ are obtained using the NNFEM library, a finite element code \cite{xu2021learningconstitutiverelationssymmetricpositivedefinite,huang2020learningconstitutiverelationsindirectobservations}. A total of 40k samples are generated. 

In the RFF-FEM and RRFF-FEM models, a grid on $(0,1)$ is nonuniformly discretized with 28 grid points for the input function, and a grid on $(0,1)^2$ is also nonuniformly discretized with 784 grid points for the output function. We train $\hat{f}$ with 20k samples and test its performance with 15k samples. We measure the performance of $R_\mathcal{V}\circ\hat{f}$ on the 5k samples that remain. The parameters for the numerical experiments are provided in Table \ref{structural_param_table}. An example of training input and output corrupted with 5\% noise is presented in Figure \ref{Fig:Structural_training_data}. Plots of test examples, the approximations of $R_\mathcal{V}\circ\hat{f}$ from the RFF-FEM-$\infty$, RRFF-FEM-$\infty$, RFF-FEM-2, RRFF-FEM-2, RFF-FEM-3, and RRFF-FEM-3 methods, and the corresponding pointwise errors are shown in Figures \ref{Fig:Structural_recovery_Gaussian}, \ref{Fig:Structural_recovery_Student2}, and \ref{Fig:Structural_recovery_Student3}.

\begin{table}[!htbp]
\centering
    \scriptsize	
    \begin{tabular}{{|P{0.13\linewidth} | P{0.08\linewidth} | P{0.08\linewidth} |
    P{0.08\linewidth} | P{0.08\linewidth} |}}
\hline
\addstackgap{Distribution} & \addstackgap{$N$} & \addstackgap{$\sigma$} & \addstackgap{$\alpha$} & \addstackgap{$p$} 
\\ \hline \addstackgap{Gaussian} & \addstackgap{25k} & \addstackgap{$\sqrt{2\times 10^{-6}}$} & \addstackgap{0.5} & \addstackgap{2}  \\ \hline \addstackgap{Student ($\nu=2$)} & \addstackgap{25k} & \addstackgap{$10^{-3}$} & \addstackgap{0.5} & \addstackgap{2} \\ \hline \addstackgap{Student ($\nu=3$)} & \addstackgap{25k} & \addstackgap{$10^{-3}$} & \addstackgap{0.5} & \addstackgap{2} \\
\hline
\end{tabular}
\caption{\textbf{Structural Mechanics}: Parameters for the numerical experiments, where $N$ is the number of random features and $\sigma$ is the scale parameter for the RFF, RRFF, RFF-FEM, and RRFF-FEM methods, and $\alpha$ and $p$ are regularization parameters for the RRFF and RRFF-FEM methods.}
\label{structural_param_table}
\end{table}

\begin{figure}[!htbp]
\centering
\subfigure{\includegraphics[width=37mm]{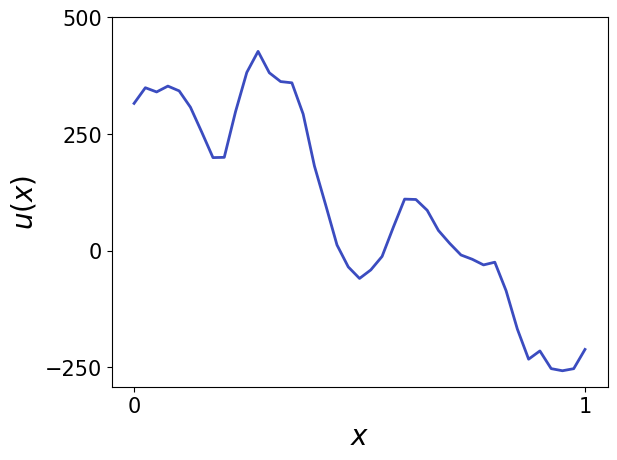}}
\subfigure{\includegraphics[width=33mm]{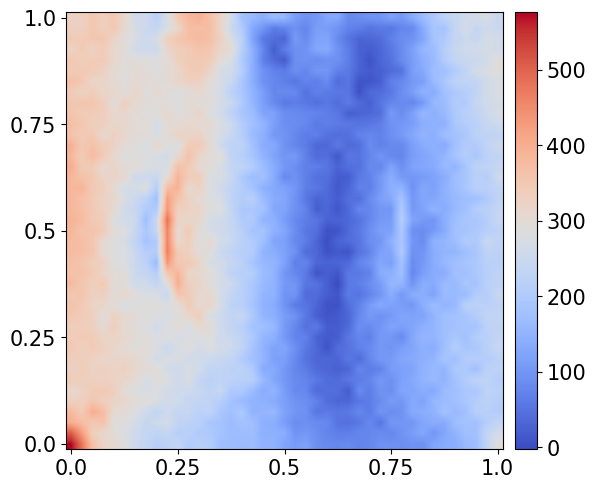}}
\caption{\textbf{Structural Mechanics}: (left to right) An example of (a) training input with 5\% noise, (b) training output with 5\% noise.}
\label{Fig:Structural_training_data}
\end{figure}

\begin{figure}[!htbp]
\centering     
\subfigure{\includegraphics[width=98mm]{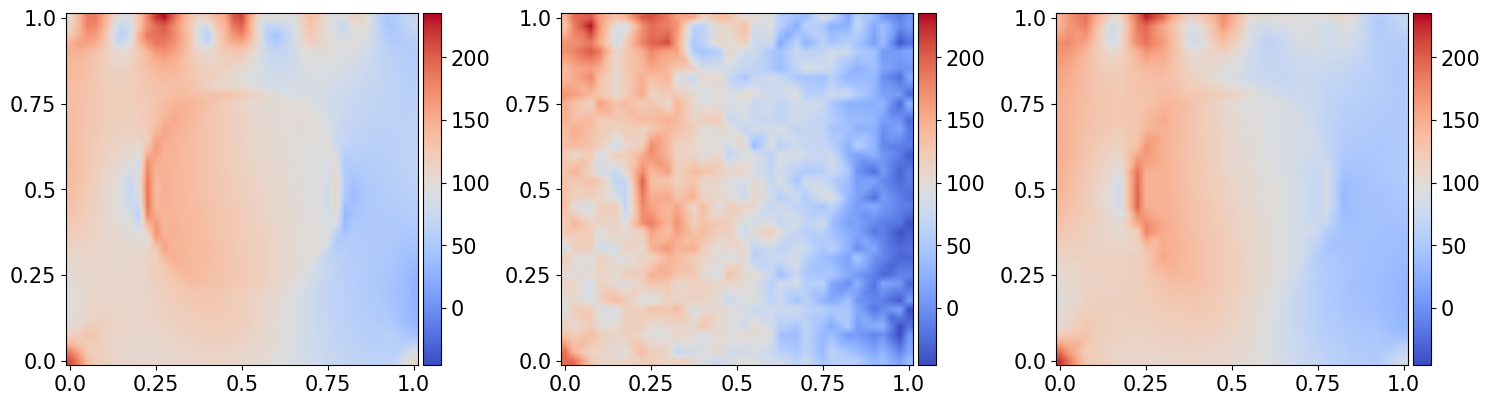}}
\subfigure{\includegraphics[width=65mm]{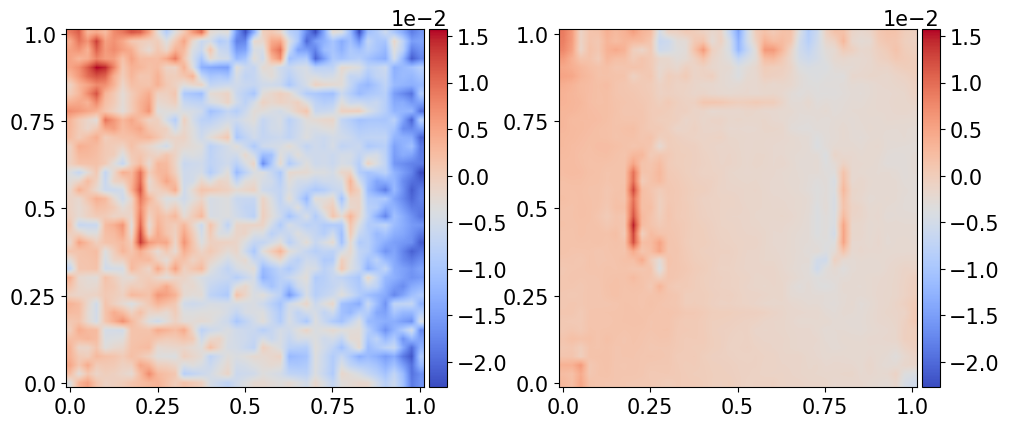}}
\caption{\textbf{Structural Mechanics}: (left to right) (a) test example, (b) prediction of $R_\mathcal{V}\circ\hat{f}$ using RFF-FEM-$\infty$, (c) prediction of $R_\mathcal{V}\circ\hat{f}$ using RRFF-FEM-$\infty$, (d) pointwise error for prediction of $R_\mathcal{V}\circ\hat{f}$ using RFF-FEM-$\infty$, (e) pointwise error for prediction of $R_\mathcal{V}\circ\hat{f}$ using RRFF-FEM-$\infty$.}
\label{Fig:Structural_recovery_Gaussian}
\end{figure}

\begin{figure}[!htbp]
\centering     
\subfigure{\includegraphics[width=100.5mm]{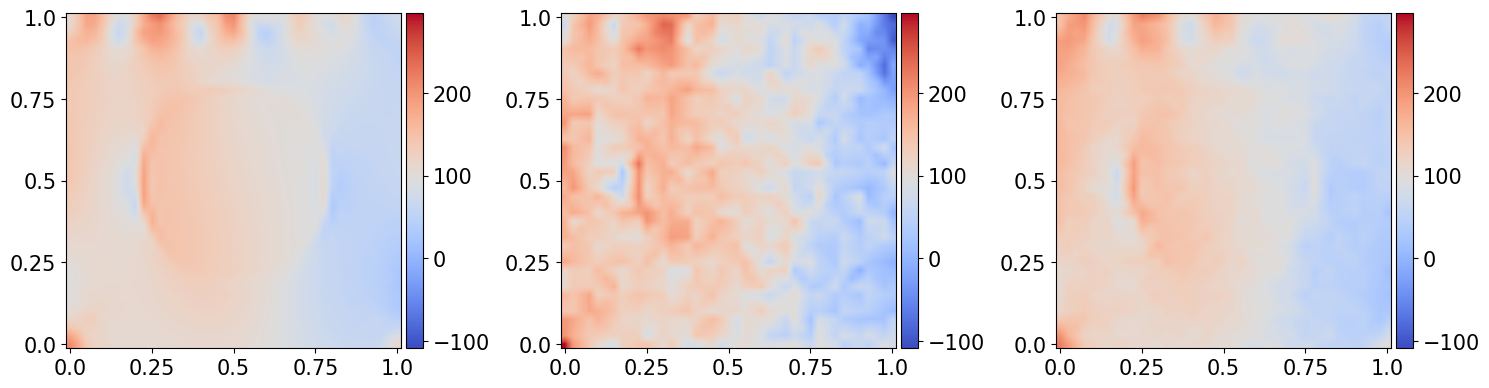}}
\subfigure{\includegraphics[width=62.5mm]{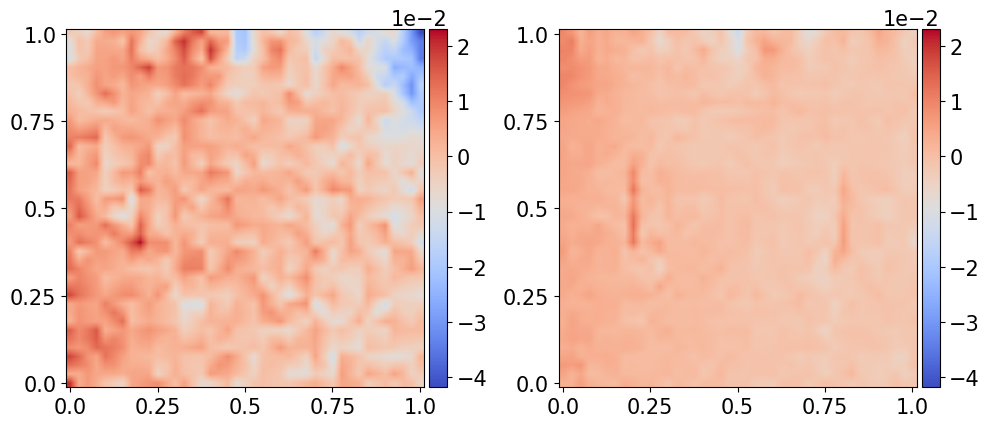}}
\caption{\textbf{Structural Mechanics}: (left to right) (a) test example, (b) prediction of $R_\mathcal{V}\circ\hat{f}$ using RFF-FEM-2, (c) prediction of $R_\mathcal{V}\circ\hat{f}$ using RRFF-FEM-2, (d) pointwise error for prediction of $R_\mathcal{V}\circ\hat{f}$ using RFF-FEM-2, (e) pointwise error for prediction of $R_\mathcal{V}\circ\hat{f}$ using RRFF-FEM-2.}
\label{Fig:Structural_recovery_Student2}
\end{figure}

\begin{figure}[!htbp]
\centering     
\subfigure{\includegraphics[width=98mm]{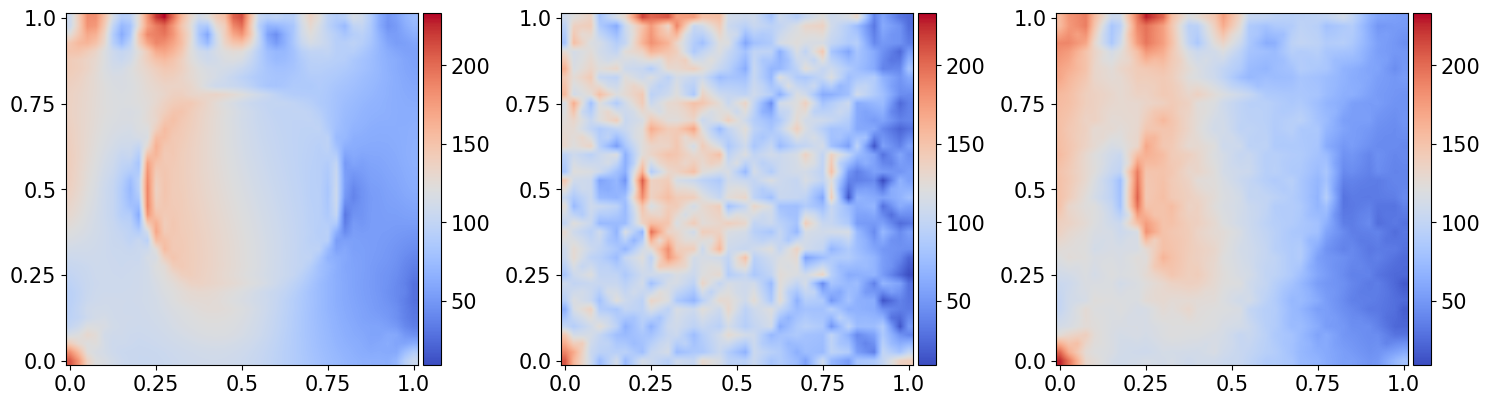}}
\subfigure{\includegraphics[width=65mm]{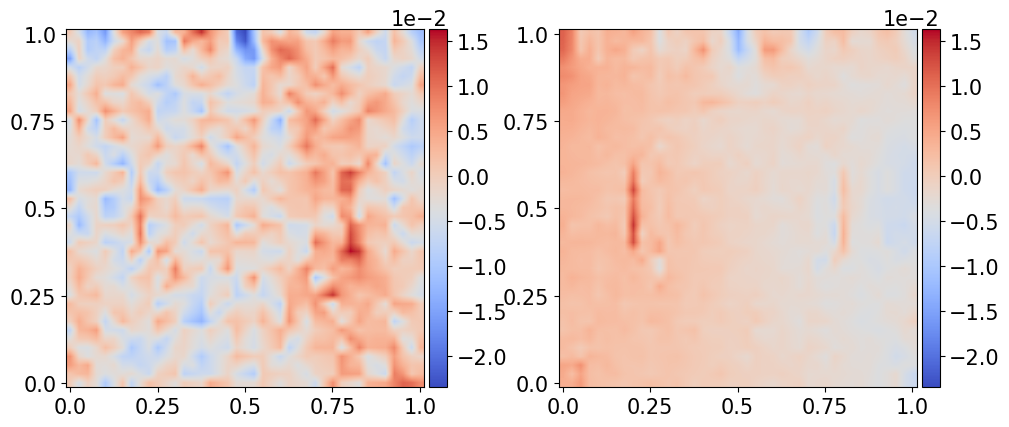}}
\caption{\textbf{Structural Mechanics}: (left to right) (a) test example, (b) prediction of $R_\mathcal{V}\circ\hat{f}$ using RFF-FEM-3, (c) prediction of $R_\mathcal{V}\circ\hat{f}$ using RRFF-FEM-3, (d) pointwise error for prediction of $R_\mathcal{V}\circ\hat{f}$ using RFF-FEM-3, (e) pointwise error for prediction of $R_\mathcal{V}\circ\hat{f}$ using RRFF-FEM-3.}
\label{Fig:Structural_recovery_Student3}
\end{figure}

\subsection{Discussion of Results}
In Table \ref{f_hat_table}, we report the average relative test errors and training times of the RFF and RRFF methods for learning $\hat{f}$ over 20 trials for a set of benchmark PDEs. We observe that in all cases, RRFF consistently outperforms RFF with respect to test accuracy and training times. Note also that the regularizer in the RRFF methods improves the conditioning of the random feature matrix, so training using linear iterative methods is faster. RRFF-$\infty$ achieves consistently lower error among all the models. One potential reason could be because the Student's $t$ distribution with $\nu=\infty$ corresponds to the Gaussian distribution, which may be a better prior for these examples. 

We provide comparisons to kernel methods used to learn $\hat{f}$ for Burgers' equation and Darcy flow in Table \ref{kernel_table}. The radial basis function (RBF) and Mat\'ern kernels are used, and the relative test errors and training times are reported for these methods. For these benchmark PDEs, our RRFF models have test errors comparable to those of kernel methods while requiring significantly lower training times. In particular, RRFF-$\infty$ outperforms the RBF kernel method for Burgers' equation, and both the RFF-$\infty$ and RRFF-$\infty$ methods outperform the RBF kernel method for Darcy flow. Additionally, comparisons of neural operator methods are found in Table \ref{deeponet_fno_table}. The relative test errors of DeepONet and FNO for learning $\hat{f}$ in the Navier-Stokes and structural mechanics problems are cited from \cite{lu2022comprehensive,dehoop2022costaccuracy,batlle2023kernelmethodscompetitiveoperator,liao2025cauchyrandomfeaturesoperator}. As a reference, the data in these examples are noise-free. 

Table \ref{R_v_f_hat_table} presents the average relative test errors of the RFF-FEM and RRFF-FEM predictions for $R_v \circ \hat{f}$ over 20 runs, where $\hat{f}$ is learned using RFF and RRFF and $R_v$ is obtained from finite element interpolation. Across all cases, RRFF-FEM achieves better performance than RFF-FEM with lower relative test errors.

\begin{table}[!htbp]
    \centering\scriptsize	
    \begin{tabular}{{|P{0.075\linewidth} | P{0.06\linewidth} | P{0.09\linewidth} | P{0.09\linewidth} |
    P{0.09\linewidth} | P{0.09\linewidth} |
    P{0.09\linewidth} | P{0.09\linewidth} |}}
\hline
& & \addstackgap{RFF-$\infty$} & \addstackgap{RRFF-$\infty$} & \addstackgap{RFF-2} & \addstackgap{RRFF-2} & \addstackgap{RFF-3} & \addstackgap{RRFF-3} \\
 \hline
  \multirow{2}{*}{\addstackgap{\shortstack{Advection\\ \RomanNumeral{1}}}} & \multicolumn{1}{P{1cm}|}{\addstackgap{\shortstack{Relative\\ Error}}} & \multicolumn{1}{c|}{8.79} & \multicolumn{1}{c|}{4.18} & \multicolumn{1}{c|}{15.5} & \multicolumn{1}{c|}{4.46} & \multicolumn{1}{c|}{15.2} &\multicolumn{1}{c|}{4.53} \\ \cline{2-8}
& \multicolumn{1}{P{1cm}|}{\addstackgap{\shortstack{Training\\ Time}}} & \multicolumn{1}{c|}{0.45} & \multicolumn{1}{c|}{0.34} & \multicolumn{1}{c|}{0.45}& \multicolumn{1}{c|}{0.34}& \multicolumn{1}{c|}{0.45}& \multicolumn{1}{c|}{0.34}\\
\hline
 \multirow{2}{*}{\addstackgap{\shortstack{Advection\\ \RomanNumeral{2}}}} & \multicolumn{1}{P{1cm}|}{\addstackgap{\shortstack{Relative\\ Error}}} & \multicolumn{1}{c|}{10.0} & \multicolumn{1}{c|}{4.48} & \multicolumn{1}{c|}{15.5} & \multicolumn{1}{c|}{4.56} & \multicolumn{1}{c|}{14.2} &\multicolumn{1}{c|}{4.53} \\ \cline{2-8}
& \multicolumn{1}{P{1cm}|}{\addstackgap{\shortstack{Training\\ Time}}} & \multicolumn{1}{c|}{0.45} & \multicolumn{1}{c|}{0.34} & \multicolumn{1}{c|}{0.44}& \multicolumn{1}{c|}{0.34}& \multicolumn{1}{c|}{0.44}& \multicolumn{1}{c|}{0.34}\\
\hline
 \multirow{2}{*}{\addstackgap{\shortstack{Advection\\ \RomanNumeral{3}}}} & \multicolumn{1}{P{1cm}|}{\addstackgap{\shortstack{Relative\\ Error}}} & \multicolumn{1}{c|}{23.2} & \multicolumn{1}{c|}{14.9} & \multicolumn{1}{c|}{30.9} & \multicolumn{1}{c|}{15.6} & \multicolumn{1}{c|}{27.3} &\multicolumn{1}{c|}{17.4} \\ \cline{2-8}
& \multicolumn{1}{P{1cm}|}{\addstackgap{\shortstack{Training\\ Time}}} & \multicolumn{1}{c|}{0.44} & \multicolumn{1}{c|}{0.33} & \multicolumn{1}{c|}{0.44}& \multicolumn{1}{c|}{0.33}& \multicolumn{1}{c|}{0.44}& \multicolumn{1}{c|}{0.33}\\
\hline
 \multirow{2}{*}{Burgers'} & \multicolumn{1}{P{1cm}|}{\addstackgap{\shortstack{Relative\\ Error}}} & \multicolumn{1}{c|}{7.92} & \multicolumn{1}{c|}{5.17} & \multicolumn{1}{c|}{17.0} & \multicolumn{1}{c|}{9.08} & \multicolumn{1}{c|}{17.8} &\multicolumn{1}{c|}{11.0} \\ \cline{2-8}
& \multicolumn{1}{P{1cm}|}{\addstackgap{\shortstack{Training\\ Time}}} & \multicolumn{1}{c|}{2.31} & \multicolumn{1}{c|}{1.79} & \multicolumn{1}{c|}{2.32}& \multicolumn{1}{c|}{1.81}& \multicolumn{1}{c|}{2.33}& \multicolumn{1}{c|}{1.81}\\
\hline
 \multirow{2}{*}{Darcy} & \multicolumn{1}{P{1cm}|}{\addstackgap{\shortstack{Relative\\ Error}}} & \multicolumn{1}{c|}{5.76} & \multicolumn{1}{c|}{4.30} & \multicolumn{1}{c|}{10.5} & \multicolumn{1}{c|}{8.24}& \multicolumn{1}{c|}{11.6}& \multicolumn{1}{c|}{9.36}\\ \cline{2-8}
& \multicolumn{1}{P{1cm}|}{\addstackgap{\shortstack{Training\\ Time}}} & \multicolumn{1}{c|}{2.58} & \multicolumn{1}{c|}{2.12} & \multicolumn{1}{c|}{2.56}& \multicolumn{1}{c|}{2.11}& \multicolumn{1}{c|}{2.56}& \multicolumn{1}{c|}{2.11}\\
\hline
 \multirow{2}{*}{Helmholtz} & \multicolumn{1}{P{1cm}|}{\addstackgap{\shortstack{Relative\\ Error}}} & \multicolumn{1}{c|}{28.1} & \multicolumn{1}{c|}{13.1} & \multicolumn{1}{c|}{37.7} & \multicolumn{1}{c|}{14.0}& \multicolumn{1}{c|}{36.7}& \multicolumn{1}{c|}{18.1}\\ \cline{2-8}
& \multicolumn{1}{P{1cm}|}{\addstackgap{\shortstack{Training\\ Time}}} & \multicolumn{1}{c|}{24.20} & \multicolumn{1}{c|}{22.98} & \multicolumn{1}{c|}{24.55}& \multicolumn{1}{c|}{23.26}& \multicolumn{1}{c|}{23.75}& \multicolumn{1}{c|}{22.65}\\
\hline
 \multirow{2}{*}{\addstackgap{\shortstack{Navier \\Stokes}}} & \multicolumn{1}{P{1cm}|}{\addstackgap{\shortstack{Relative\\ Error}}} & \multicolumn{1}{c|}{8.13} & \multicolumn{1}{c|}{5.01} & \multicolumn{1}{c|}{5.68} & \multicolumn{1}{c|}{5.64}& \multicolumn{1}{c|}{5.58}& \multicolumn{1}{c|}{5.52}\\ \cline{2-8}
& \multicolumn{1}{P{1cm}|}{\addstackgap{\shortstack{Training\\ Time}}} & \multicolumn{1}{c|}{16.60} & \multicolumn{1}{c|}{15.28} & \multicolumn{1}{c|}{14.56}& \multicolumn{1}{c|}{13.39}& \multicolumn{1}{c|}{14.66}& \multicolumn{1}{c|}{13.51}\\
\hline
 \multirow{2}{*}{\addstackgap{\shortstack{Structural \\Mechanics}}} & \multicolumn{1}{P{1cm}|}{\addstackgap{\shortstack{Relative\\ Error}}} & \multicolumn{1}{c|}{28.7} & \multicolumn{1}{c|}{5.86} & \multicolumn{1}{c|}{34.7} & \multicolumn{1}{c|}{13.2}& \multicolumn{1}{c|}{34.6}& \multicolumn{1}{c|}{7.86}\\ \cline{2-8}
& \multicolumn{1}{P{1cm}|}{\addstackgap{\shortstack{Training\\ Time}}} & \multicolumn{1}{c|}{27.01} & \multicolumn{1}{c|}{23.42} & \multicolumn{1}{c|}{27.20}& \multicolumn{1}{c|}{23.97}& \multicolumn{1}{c|}{26.93}& \multicolumn{1}{c|}{23.62}\\
\hline
\end{tabular}
\caption{\textbf{Comparison of RFF and RRFF}: Average relative test errors (\%) and average training times (seconds) of RFF and RRFF methods for $\hat{f}$ over 20 trials. }
\label{f_hat_table}
\end{table}

\begin{table}
    \centering \scriptsize	
    \begin{tabular}{{|P{0.075\linewidth} | P{0.06\linewidth} | P{0.1\linewidth} | P{0.12\linewidth} |
    P{0.12\linewidth} |}}
\hline
& & \addstackgap{RBF Kernel} & \addstackgap{\shortstack{Mat\'ern Kernel\\($\nu=2$)}} & \addstackgap{\shortstack{Mat\'ern Kernel\\($\nu = 3$)}}
\\ \hline
 \multirow{2}{*}{Burgers'} & \multicolumn{1}{P{1cm}|}{\addstackgap{\shortstack{Relative\\ Error}}} & \multicolumn{1}{c|}{7.65} & \multicolumn{1}{c|}{4.27} & \multicolumn{1}{c|}{4.75}\\ \cline{2-5}
& \multicolumn{1}{P{1cm}|}{\addstackgap{\shortstack{Training\\ Time}}} & \multicolumn{1}{c|}{16.48} & \multicolumn{1}{c|}{25.01} & \multicolumn{1}{c|}{23.55}\\
\hline
 \multirow{2}{*}{Darcy} & \multicolumn{1}{P{1cm}|}{\addstackgap{\shortstack{Relative\\ Error}}} & \multicolumn{1}{c|}{7.86} & \multicolumn{1}{c|}{4.76} & \multicolumn{1}{c|}{5.86} \\ \cline{2-5}
& \multicolumn{1}{P{1cm}|}{\addstackgap{\shortstack{Training\\ Time}}} & \multicolumn{1}{c|}{27.39} & \multicolumn{1}{c|}{72.47} & \multicolumn{1}{c|}{33.58}\\
\hline
\end{tabular}
\caption{\textbf{Comparison of Kernel Methods}: Relative test errors (\%) and training times (seconds) of radial basis function (RBF) and Mat\'ern kernel methods for $\hat{f}$. }
\label{kernel_table}
\end{table}

\begin{table}
    \centering \scriptsize
    \begin{tabular}{|P{0.08\linewidth} | P{0.08\linewidth} | P{0.08\linewidth} | }
\hline
& \addstackgap{DeepONet} & \addstackgap{FNO} \\
 \hline
\addstackgap{\shortstack{Navier\\ Stokes}} &  3.63 &  0.26 \\
\hline
\addstackgap{\shortstack{Structural\\ Mechanics}} & 5.20 &  4.76 \\
\hline
\end{tabular}
\caption{\textbf{Comparison of DeepONet and FNO}: Relative test errors (\%) of DeepONet and FNO for $\hat{f}$ for the noiseless problem cited from \cite{lu2022comprehensive,dehoop2022costaccuracy,batlle2023kernelmethodscompetitiveoperator,liao2025cauchyrandomfeaturesoperator}.}
\label{deeponet_fno_table}
\end{table}

\begin{table}[!htbp]
    \centering \scriptsize
    \begin{tabular}{|P{0.09\linewidth} | P{0.11\linewidth} | P{0.12\linewidth} |
    P{0.11\linewidth} | P{0.11\linewidth} |
    P{0.11\linewidth} | P{0.11\linewidth} |}
\hline
& \addstackgap{RFF-FEM-$\infty$} & \addstackgap{RRFF-FEM-$\infty$} & \addstackgap{RFF-FEM-2} & \addstackgap{RRFF-FEM-2} & \addstackgap{RFF-FEM-3} & \addstackgap{RRFF-FEM-3} \\
 \hline
\addstackgap{\shortstack{Advection\\ \RomanNumeral{1}}} & 41.7 &  18.8 & 44.9 & 18.6 & 34.9 & 18.5\\
 \hline
\addstackgap{\shortstack{Advection\\ \RomanNumeral{2}}} & 45.6 &  21.3 & 42.9 & 21.0 & 35.1 & 20.9\\
 \hline
\addstackgap{\shortstack{Advection\\ \RomanNumeral{3}}} & 25.6 &  16.2 & 29.1 & 17.4 & 24.9 & 19.3\\
 \hline
 \addstackgap{Burgers'} & 8.57 & 6.14 & 17.3 & 10.0 & 17.8 & 12.3\\
\hline
 \addstackgap{Darcy} & 6.11 & 4.75 & 8.20 & 6.47 & 8.38 & 6.86 \\
\hline
  \addstackgap{Helmholtz} & 24.0 & 14.3 & 24.4 & 14.8 & 24.5 & 17.3 \\
\hline \addstackgap{\shortstack{Navier\\ Stokes}} &  5.55 & 5.32 &  5.69 &  5.65 &  5.62 &  5.58 \\
\hline
\addstackgap{\shortstack{Structural\\ Mechanics}} & 33.0 & 7.53 & 39.0 & 11.4 & 39.2 & 8.18 \\
\hline
\end{tabular}
\caption{\textbf{Comparison of RFF-FEM and RRFF-FEM}: Average relative test errors (\%) of RFF-FEM and RRFF-FEM for $R_\mathcal{V} \circ \hat{f}$ over 20 trials.}
\label{R_v_f_hat_table}
\end{table}

\section{Conclusion}
\label{discussion}

In this work, we introduced the regularized random Fourier feature approach for operator learning in order to improve robustness and efficiency of kernel operator learning in the presence of noisy data. By incorporating frequency-weighted Tikhonov regularization and feature weights drawn from multivariate Student’s $t$ distributions, our approach generalizes prior random feature methods while ensuring well-conditioned feature matrices with high probability. The theoretical analysis establishes concentration bounds on the singular values of the random feature matrix and supports stable estimation when the number of features $N$ scales like $m \log m$, where $m$ is the number of training samples. Extensive numerical experiments on a range of benchmark PDEs, including advection, Burgers’, Darcy flow, Helmholtz, Navier–Stokes, and structural mechanics problems, demonstrated that RRFF and RRFF–FEM consistently outperform their unregularized counterparts. Comparisons against kernel-based and neural operator baselines showed that RRFF methods achieve comparable accuracy at substantially reduced computational cost. A potential future direction is to consider adaptive schemes for selecting random feature frequencies based on data-driven criteria.

\section*{Acknowledgments}
XY and HS were partially supported by the National Science Foundation under the grant DMS-2331033.

\FloatBarrier
\bibliographystyle{plain}
\bibliography{citations}
\section*{Appendix A: Noise}
\label{appendix:noise}
In the test cases in Section \ref{numerical_experiments}, we set the relative noise to $p\times100\%$ by enforcing:
\begin{equation*}
    \tilde{\mathbf{u}}_j = \mathbf{u}_j + p\dfrac{\|\mathbf{u}_j\|_2}{\|\boldsymbol{\epsilon}_{\mathbf{u}_j}\|_2}\boldsymbol{\epsilon}_{\mathbf{u}_j} \text{ and } \tilde{\mathbf{v}}_j = \mathbf{v}_j + p\dfrac{\|\mathbf{v}_j\|_2}{\|\boldsymbol{\epsilon}_{\mathbf{v}_j}\|_2}\boldsymbol{\epsilon}_{\mathbf{v}_j},
\end{equation*}
where $\boldsymbol{\epsilon}_{\mathbf{u}_j} \sim N(0,\mathbf{I}_n)$ and $\boldsymbol{\epsilon}_{\mathbf{v}_j} \sim N(0,\mathbf{I}_m)$ for all $j \in [M]$.

\section*{Appendix B: Numerical Experiments}
We provide additional numerical experiments for the following PDEs.

\subsection*{Advection Equations \RomanNumeral{1}, \RomanNumeral{2}, and \RomanNumeral{3}}

For the RFF and RRFF methods for Advection \RomanNumeral{1}, the grid is uniformly discretized to 40 points, and we use 1000 samples each for training and testing the models. In Figures \ref{Fig:AD1_test_pred_Gaussian_Student2} and \ref{Fig:AD1_test_pred_Student3}, test examples, RFF-$\infty$, RRFF-$\infty$, RFF-2, RRFF-2, RFF-3, and RRFF-3 predictions of $\hat{f}$, along with their pointwise errors are shown. For Advection \RomanNumeral{2}, RFF and RRFF use a uniform grid with 40 gridpoints, and the sizes of the training and test datasets are both 1000. Figures \ref{Fig:AD2_test_pred_Gaussian_Student2} and \ref{Fig:AD2_test_pred_Student3} show test examples, RFF-$\infty$, RRFF-$\infty$, RFF-2, RRFF-2, RFF-3, and RRFF-3 appoximations of $\hat{f}$, plus their pointwise errors. In Advection \RomanNumeral{3}, we use a uniform grid of size 200 for the RFF and RRFF methods, 1000 samples for training the models, and another 1000 samples for testing the models. In Figures \ref{Fig:AD3_test_pred_Gaussian_Student2} and \ref{Fig:AD3_test_pred_Student3}, plots of test examples, RFF-$\infty$, RRFF-$\infty$, RFF-2, RRFF-2, RFF-3, and RRFF-3 predictions of $\hat{f}$, and their pointwise errors are presented.

\begin{figure}[H]
\centering
\subfigure{\includegraphics[width=36mm]{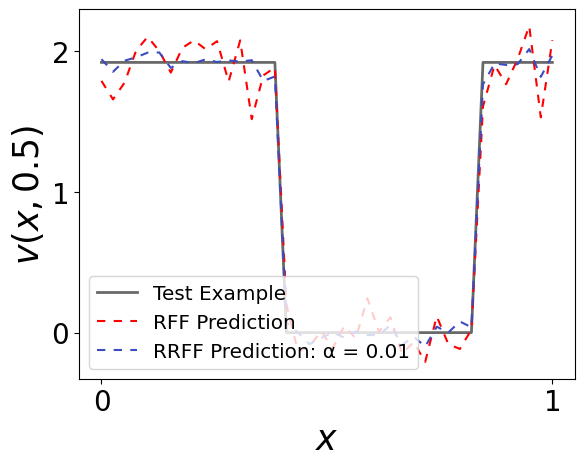}}
\subfigure{\includegraphics[width=37mm]{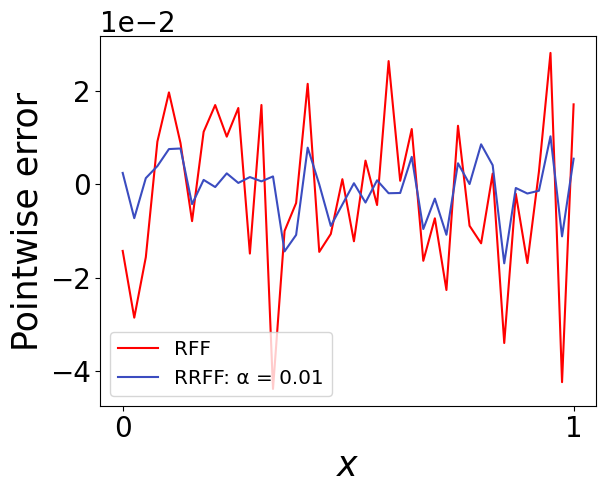}}
\subfigure{\includegraphics[width=36mm]{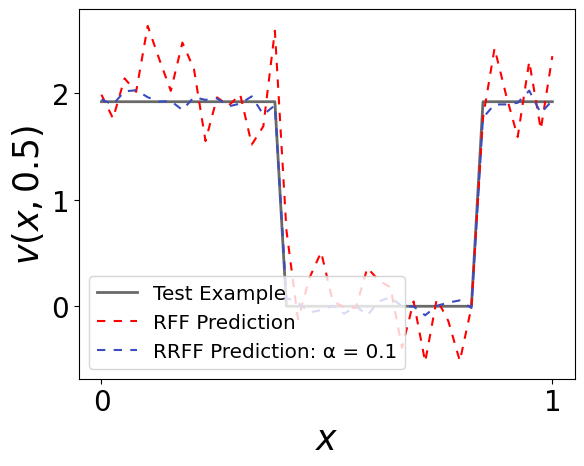}}
\subfigure{\includegraphics[width=39mm]{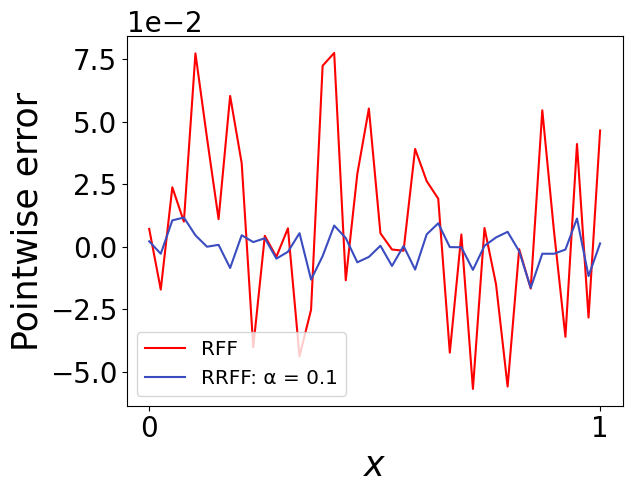}}
\caption{\textbf{Advection \RomanNumeral{1}}: (left to right) (a) test example and RFF-$\infty$ and RRFF-$\infty$ approximations of $\hat{f}$, (b) pointwise errors for RFF-$\infty$ and RRFF-$\infty$ approximations of $\hat{f}$, (c) test example and RFF-2 and RRFF-2 approximations of $\hat{f}$, (d) pointwise errors for RFF-2 and RRFF-2 approximations of $\hat{f}$.}
\label{Fig:AD1_test_pred_Gaussian_Student2}
\end{figure}

\begin{figure}[H]
\centering     
\subfigure{\includegraphics[width=36mm]{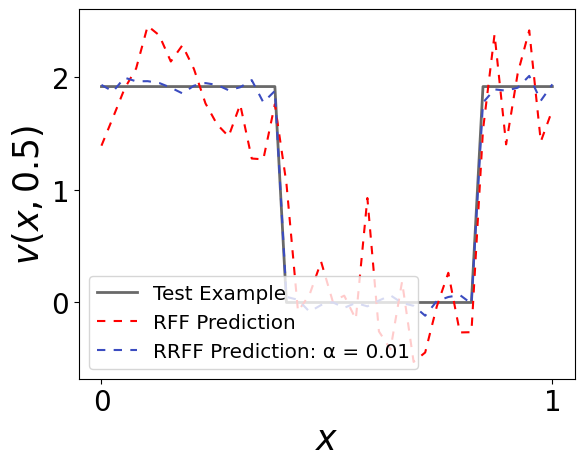}}
\subfigure{\includegraphics[width=39mm]{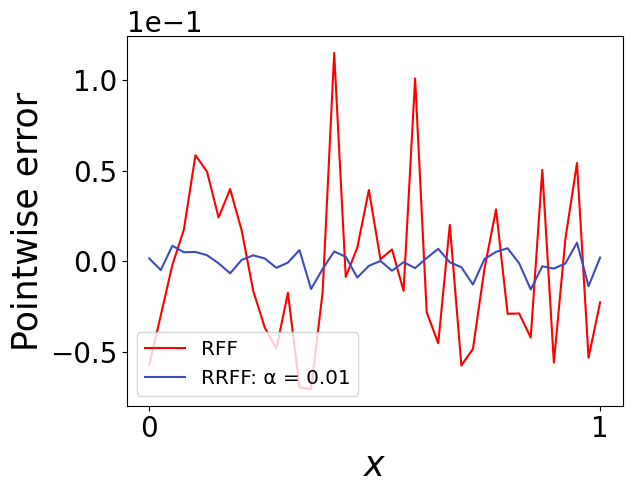}}
\caption{\textbf{Advection \RomanNumeral{1}}: (left to right) (a) test example and RFF-3 and RRFF-3 approximations of $\hat{f}$, (b) pointwise errors for RFF-3 and RRFF-3 approximations of $\hat{f}$.}
\label{Fig:AD1_test_pred_Student3}
\end{figure}

\begin{figure}[H]
\centering
\subfigure{\includegraphics[width=36mm]{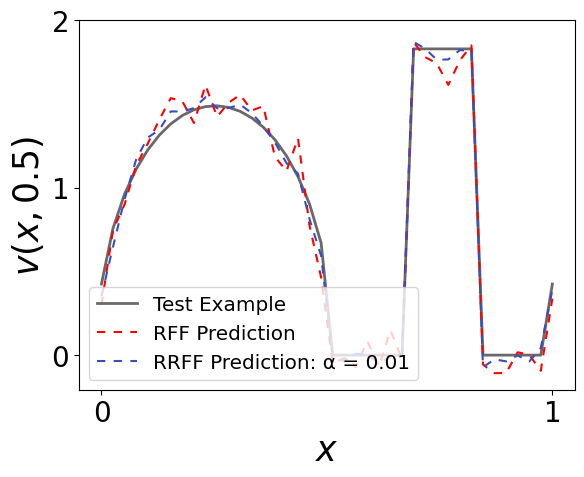}}
\subfigure{\includegraphics[width=37mm]{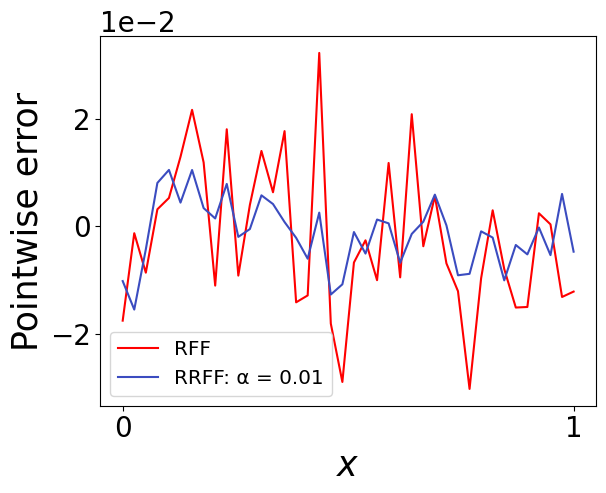}}
\subfigure{\includegraphics[width=36mm]{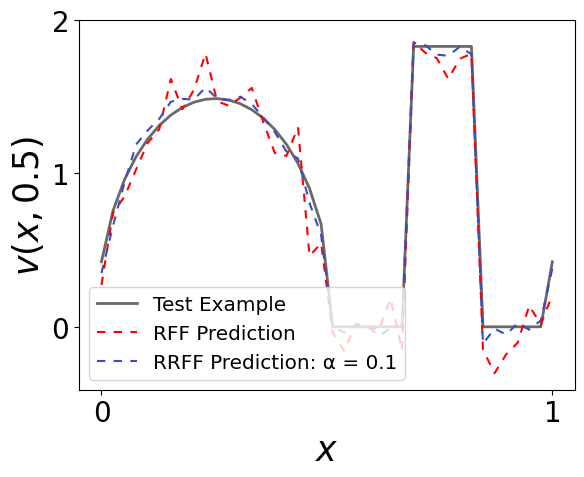}}
\subfigure{\includegraphics[width=37mm]{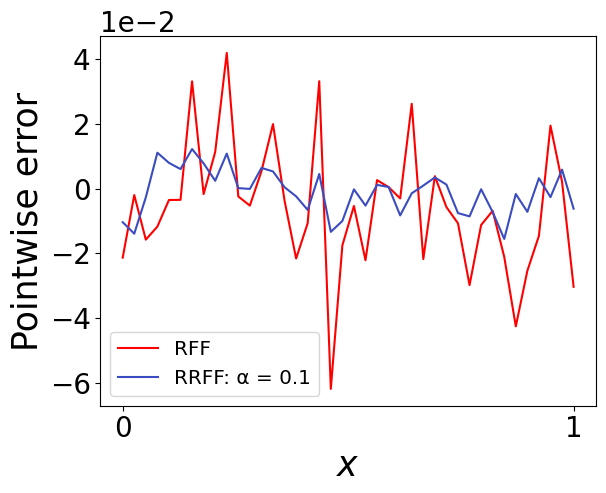}}
\caption{\textbf{Advection \RomanNumeral{2}}: (left to right) (a) test example and RFF-$\infty$ and RRFF-$\infty$ approximations of $\hat{f}$, (b) pointwise errors for RFF-$\infty$ and RRFF-$\infty$ approximations of $\hat{f}$, (c) test example and RFF-2 and RRFF-2 approximations of $\hat{f}$, (d) pointwise errors for RFF-2 and RRFF-2 approximations of $\hat{f}$.}
\label{Fig:AD2_test_pred_Gaussian_Student2}
\end{figure}

\begin{figure}[H]
\centering     
\subfigure{\includegraphics[width=36mm]{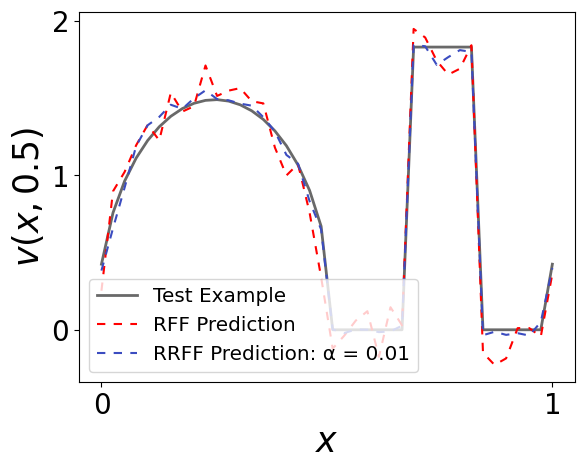}}
\subfigure{\includegraphics[width=37mm]{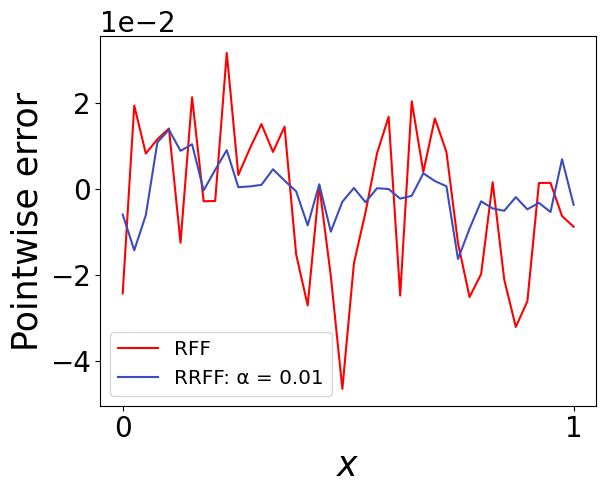}}
\caption{\textbf{Advection \RomanNumeral{2}}: (left to right) (a) test example and RFF-3 and RRFF-3 approximations of $\hat{f}$, (b) pointwise errors for RFF-3 and RRFF-3 approximations of $\hat{f}$.}
\label{Fig:AD2_test_pred_Student3}
\end{figure}

\begin{figure}[H]
\centering
\subfigure{\includegraphics[width=36mm]{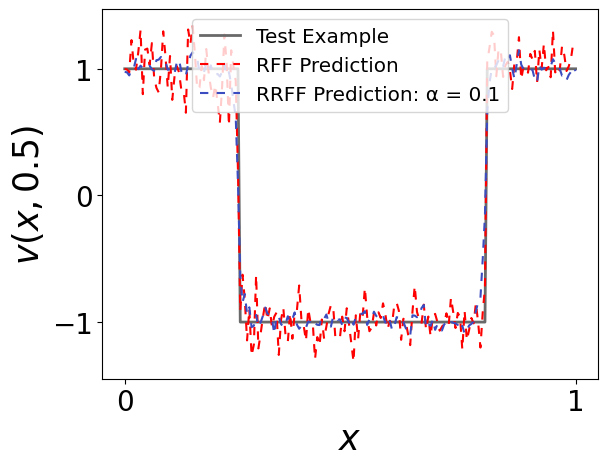}}
\subfigure{\includegraphics[width=37mm]{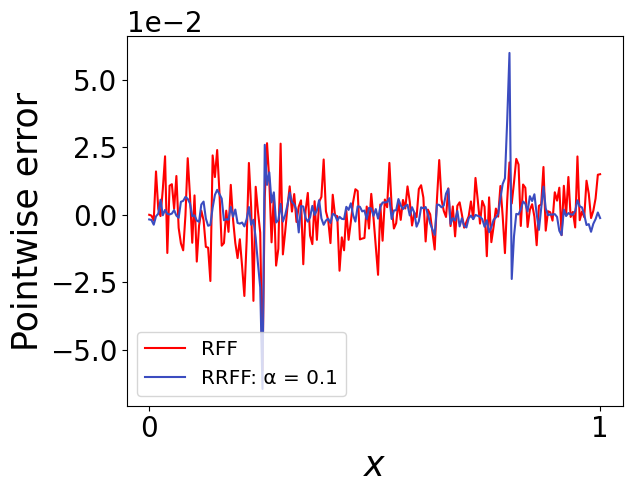}}
\subfigure{\includegraphics[width=36mm]{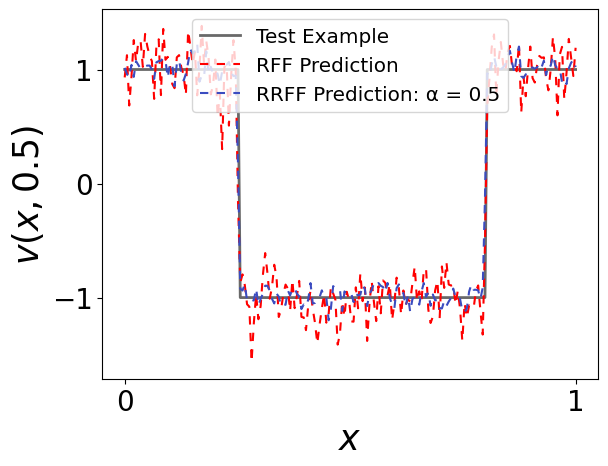}}
\subfigure{\includegraphics[width=36mm]{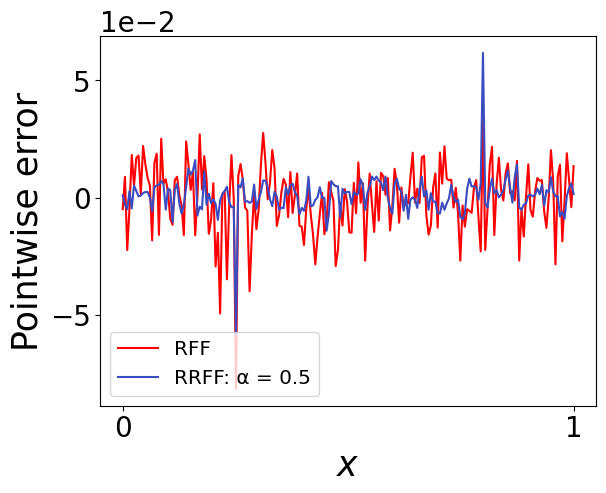}}
\caption{\textbf{Advection \RomanNumeral{3}}: (left to right) (a) test example and RFF-$\infty$ and RRFF-$\infty$ approximations of $\hat{f}$, (b) pointwise errors for RFF-$\infty$ and RRFF-$\infty$ approximations of $\hat{f}$, (c) test example and RFF-2 and RRFF-2 approximations of $\hat{f}$, (d) pointwise errors for RFF-2 and RRFF-2 approximations of $\hat{f}$.}
\label{Fig:AD3_test_pred_Gaussian_Student2}
\end{figure}

\begin{figure}[H]
\centering
\subfigure{\includegraphics[width=36mm]{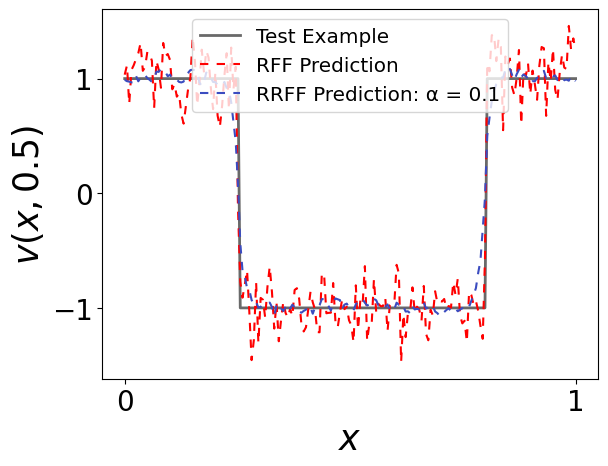}}
\subfigure{\includegraphics[width=36mm]{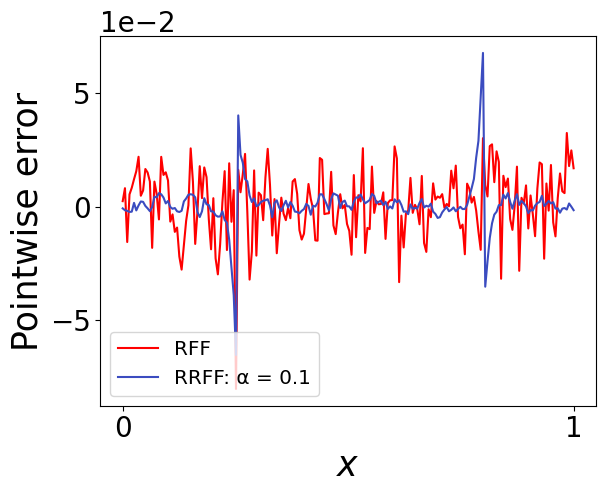}}
\caption{\textbf{Advection \RomanNumeral{3}}: (left to right) (a) test example and RFF-3 and RRFF-3 approximations of $\hat{f}$, (b) pointwise errors for RFF-3 and RRFF-3 approximations of $\hat{f}$.}
\label{Fig:AD3_test_pred_Student3}
\end{figure}

\subsection*{Helmholtz Equation}
We use 10k instances to train the RFF and RRFF models and 30k instances to test the performance. Recall from Section \ref{helmholtz_eq} that a discretized grid of size $101 \times 101$ is used for $u$ and $v$. Figures \ref{Fig:Helmholtz_test_pred_Gaussian}, \ref{Fig:Helmholtz_test_pred_Student2}, and \ref{Fig:Helmholtz_test_pred_Student3} show test examples, RFF-$\infty$, RRFF-$\infty$, RFF-2, RRFF-2, RFF-3, and RRFF-3 predictions of $\hat{f}$, alongside their pointwise errors.

\begin{figure}[H]
\centering     
\subfigure{\includegraphics[width=100.5mm]{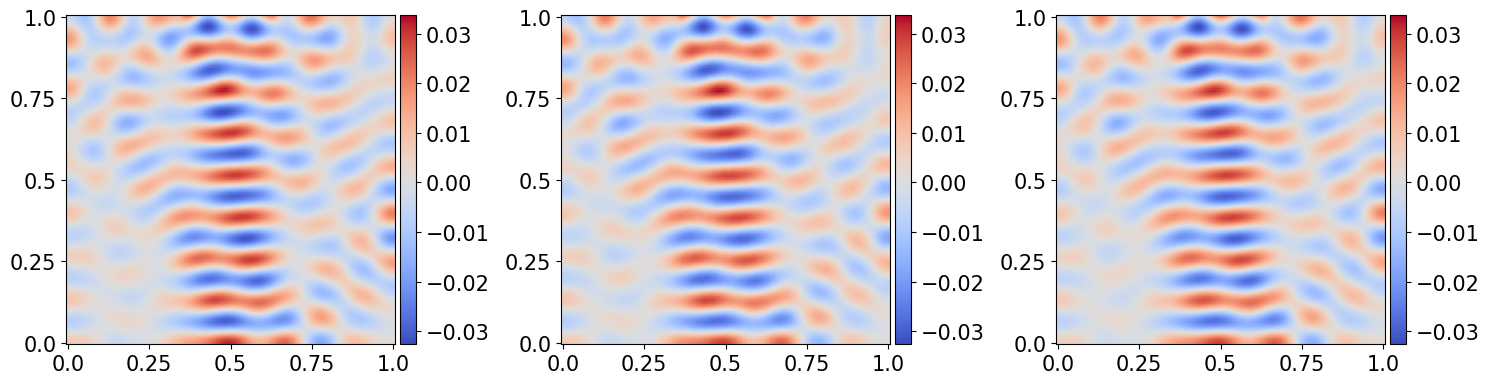}}
\subfigure{\includegraphics[width=62.5mm]{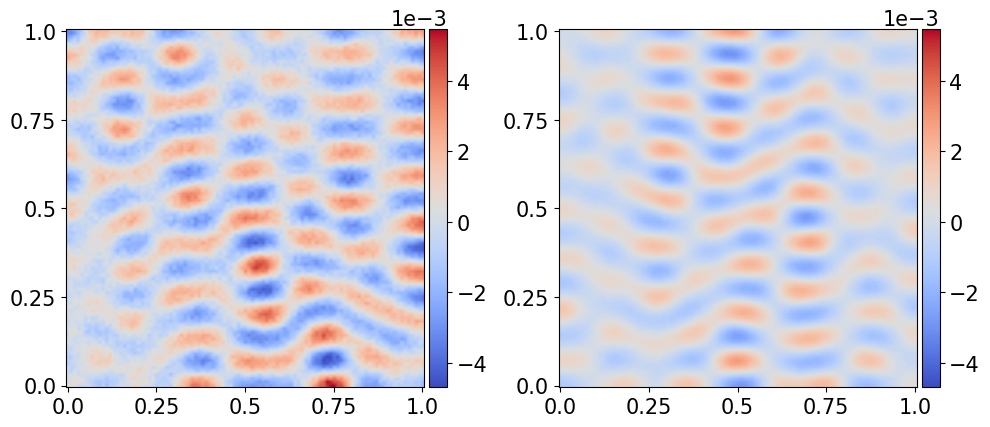}}
\caption{\textbf{Helmholtz Equation}: (left to right) (a) test example, (b) RFF-$\infty$ approximation of $\hat{f}$, (c) RRFF-$\infty$ approximation of $\hat{f}$, (d) pointwise error for RFF-$\infty$ approximation of $\hat{f}$, (e) pointwise error for RRFF-$\infty$ approximation of $\hat{f}$.}
\label{Fig:Helmholtz_test_pred_Gaussian}
\end{figure}

\begin{figure}[H]
\centering     
\subfigure{\includegraphics[width=99.5mm]{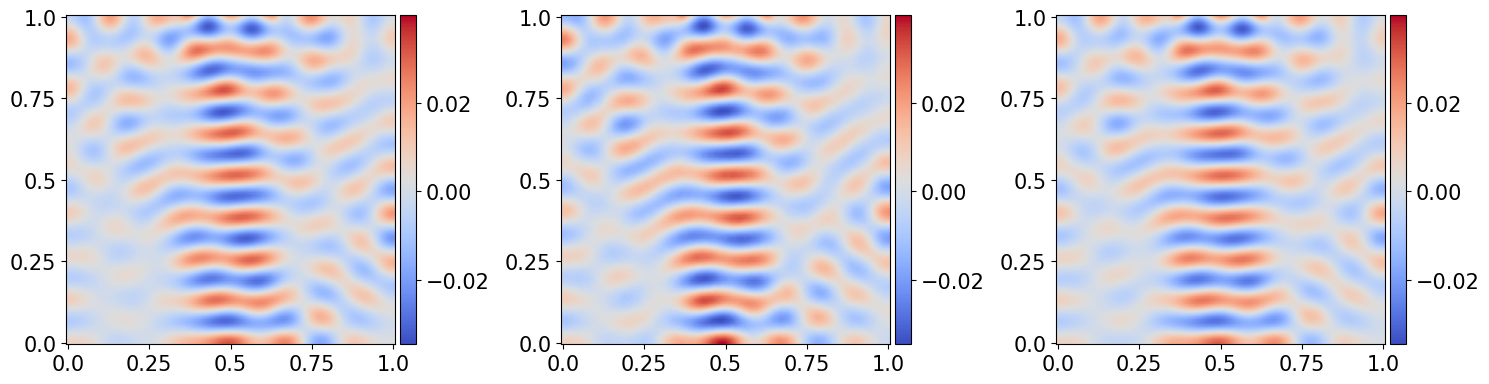}}
\subfigure{\includegraphics[width=63.5mm]{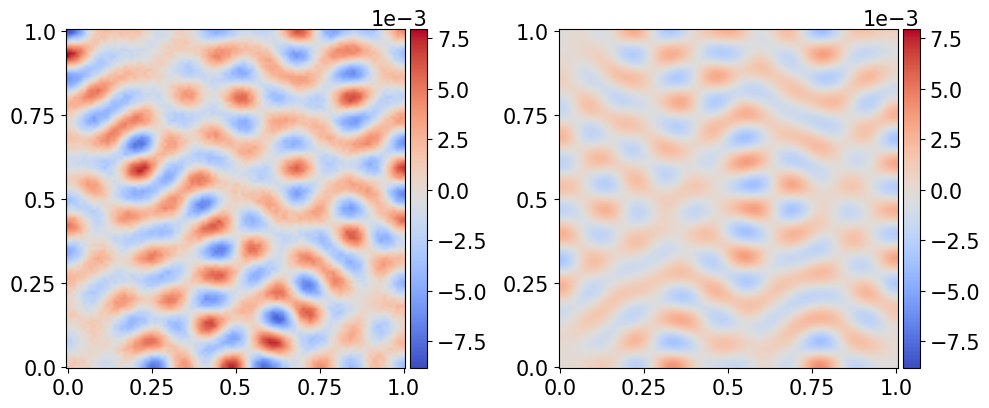}}
\caption{\textbf{Helmholtz Equation}: (left to right) (a) test example, (b) RFF-2 approximation of $\hat{f}$, (c) RRFF-2 approximation of $\hat{f}$, (d) pointwise error for RFF-2 approximation of $\hat{f}$, (e) pointwise error for RRFF-2 approximation of $\hat{f}$.}
\label{Fig:Helmholtz_test_pred_Student2}
\end{figure}

\begin{figure}[H]
\centering     
\subfigure{\includegraphics[width=99mm]{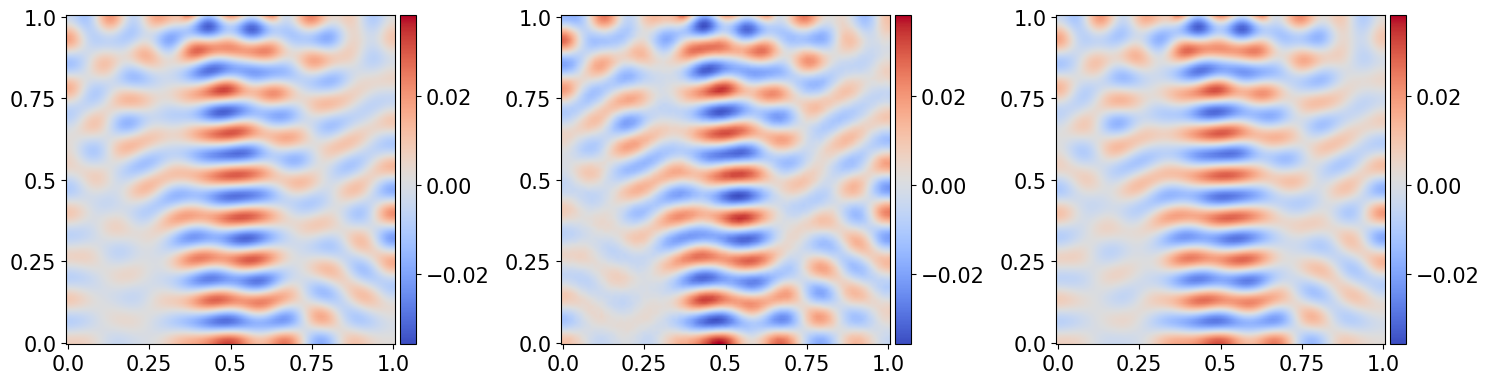}}
\subfigure{\includegraphics[width=64mm]{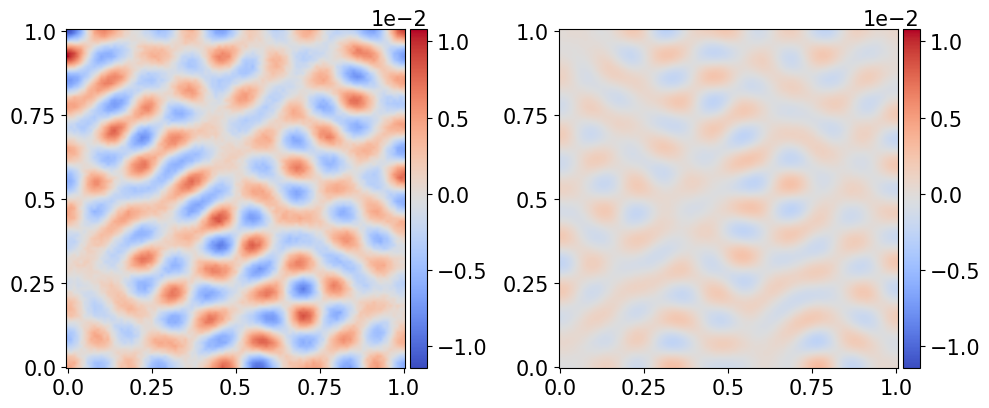}}
\caption{\textbf{Helmholtz Equation}: (left to right) (a) test example, (b) RFF-3 approximation of $\hat{f}$, (c) RRFF-3 approximation of $\hat{f}$, (d) pointwise error for RFF-3 approximation of $\hat{f}$, (e) pointwise error for RRFF-3 approximation of $\hat{f}$.}
\label{Fig:Helmholtz_test_pred_Student3}
\end{figure}

\subsection*{Navier-Stokes Equation}
In the numerical experiments for RFF and RRFF, we use 10k training samples and 30k test samples. From Section \ref{NS_section}, recall that a grid on $[0,2\pi]^2$ of size $64\times 64$ is used. In Figures \ref{Fig:NS_test_pred_Gaussian}, \ref{Fig:NS_test_pred_Student2}, and \ref{Fig:NS_test_pred_Student3}, plots of test examples, RFF-$\infty$, RRFF-$\infty$, RFF-2, RRFF-2, RFF-3, and RRFF-3 approximations of $\hat{f}$, and their pointwise errors are shown.
\begin{figure}[H]
\centering     
\subfigure{\includegraphics[width=100mm]{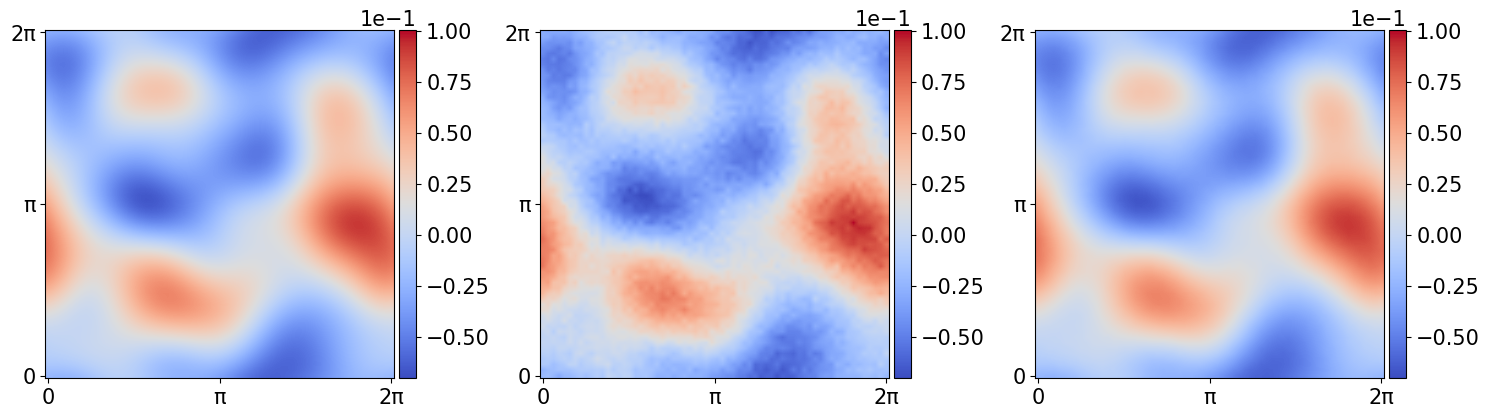}}
\subfigure{\includegraphics[width=63mm]{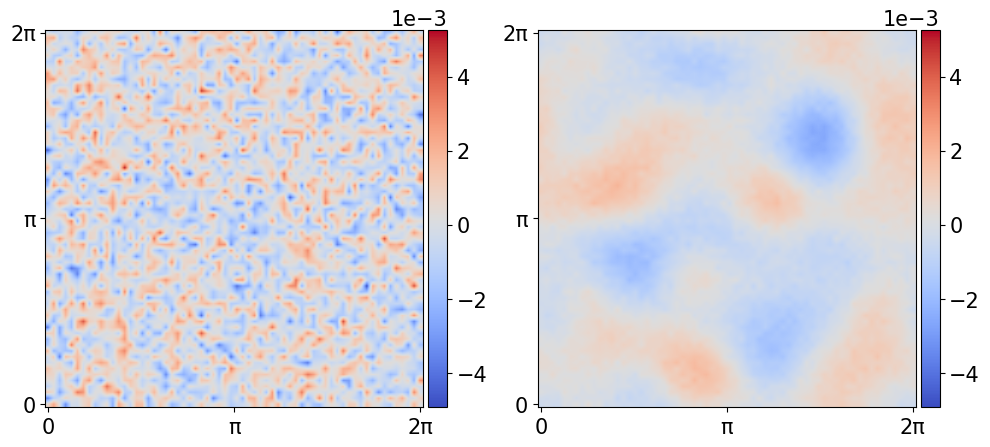}}
\caption{\textbf{Navier-Stokes Equation}: (left to right) (a) test example, (b) RFF-$\infty$ approximation of $\hat{f}$, (c) RRFF-$\infty$ approximation of $\hat{f}$, (d) pointwise error for RFF-$\infty$ approximation of $\hat{f}$, (e) pointwise error for RRFF-$\infty$ approximation of $\hat{f}$.}
\label{Fig:NS_test_pred_Gaussian}
\end{figure}

\begin{figure}[H]
\centering     
\subfigure{\includegraphics[width=99mm]{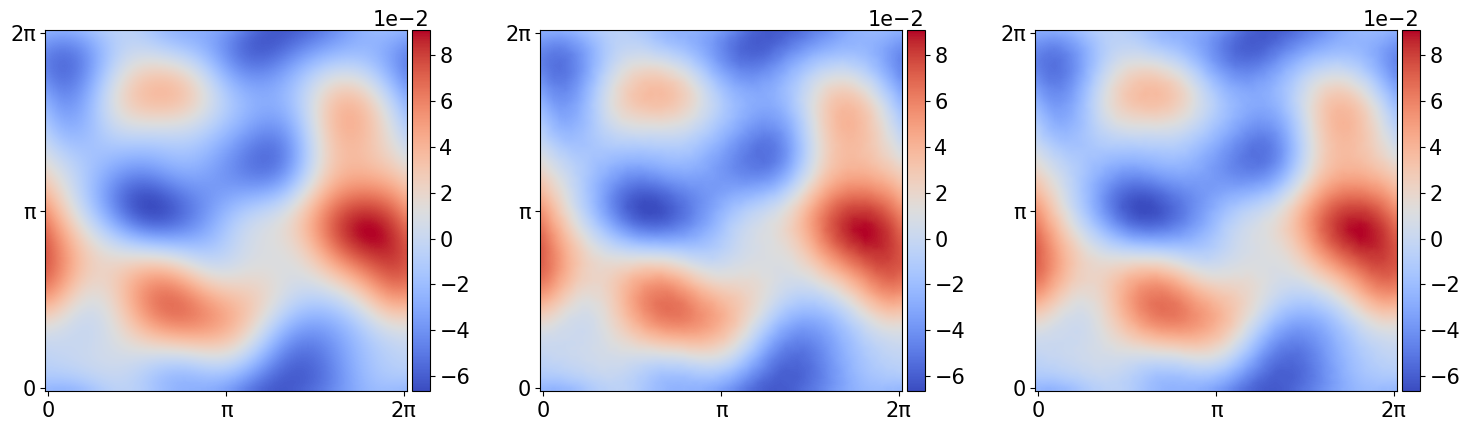}}
\subfigure{\includegraphics[width=64mm]{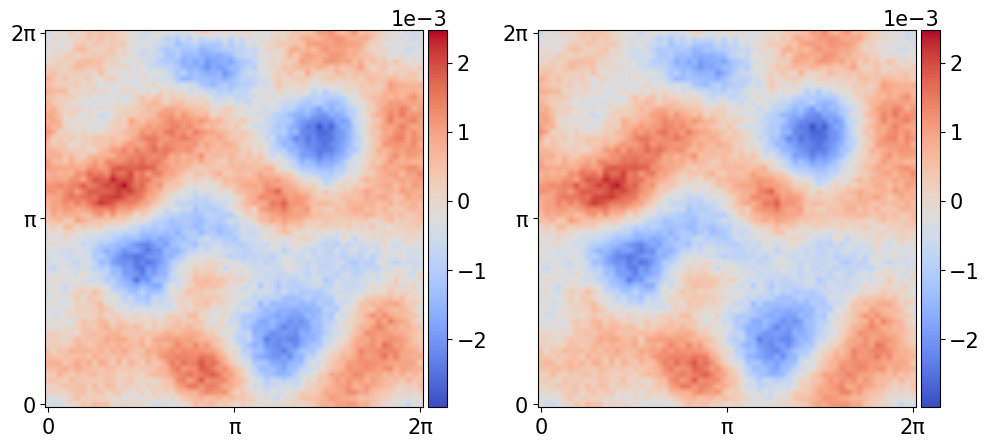}}
\caption{\textbf{Navier-Stokes Equation}: (left to right) (a) test example, (b) RFF-2 approximation of $\hat{f}$, (c) RRFF-2 approximation of $\hat{f}$, (d) pointwise error for RFF-2 approximation of $\hat{f}$, (e) pointwise error for RRFF-2 approximation of $\hat{f}$.}
\label{Fig:NS_test_pred_Student2}
\end{figure}

\begin{figure}[H]
\centering     
\subfigure{\includegraphics[width=99mm]{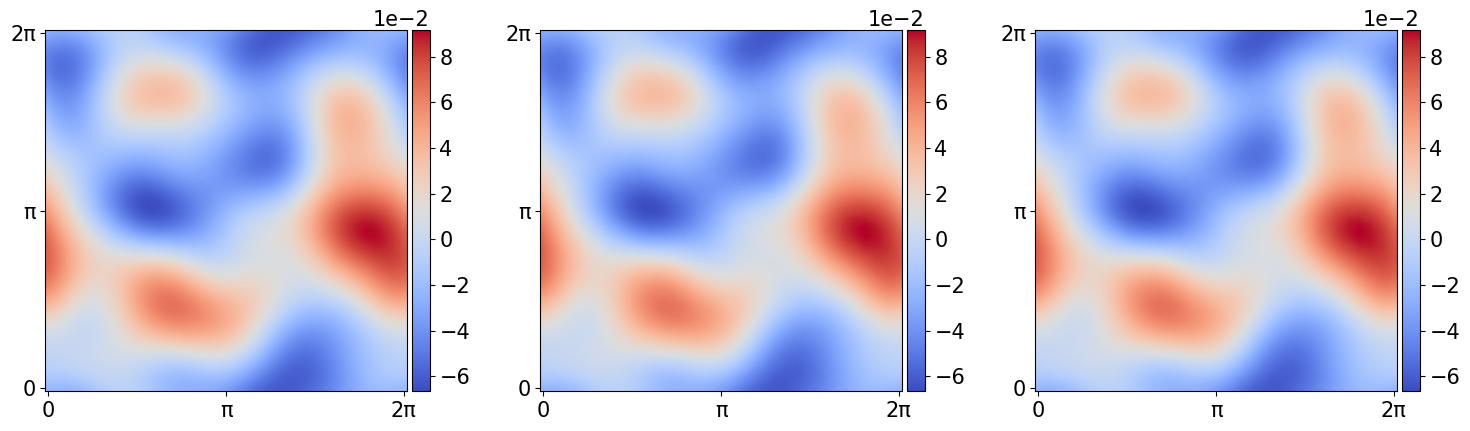}}
\subfigure{\includegraphics[width=64mm]{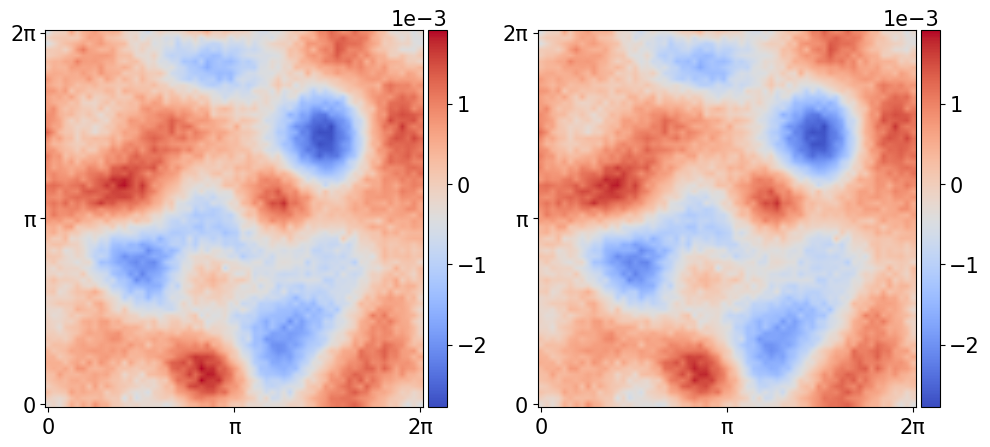}}
\caption{\textbf{Navier-Stokes Equation}: (left to right) (a) test example, (b) RFF-3 approximation of $\hat{f}$, (c) RRFF-3 approximation of $\hat{f}$, (d) pointwise error for RFF-3 approximation of $\hat{f}$, (e) pointwise error for RRFF-3 approximation of $\hat{f}$.}
\label{Fig:NS_test_pred_Student3}
\end{figure}

\subsection*{Structural Mechanics}
The one-dimensional load $u$ is discretized on a grid of size 41, and the two-dimensional von Mises stress field $v$ is discretized on a grid of size $41\times 41$ using radial basis function interpolation. We use 20k samples for training RFF and RRFF and 20k samples for testing its performance. Figures \ref{Fig:Structural_test_pred_Gaussian}, \ref{Fig:Structural_test_pred_Student2}, and \ref{Fig:Structural_test_pred_Student3} present examples of test functions, RFF-$\infty$, RRFF-$\infty$, RFF-2, RRFF-2, RFF-3, and RRFF-3 predictions of $\hat{f}$, plus their pointwise errors.
\begin{figure}[H]
\centering     
\subfigure{\includegraphics[width=98mm]{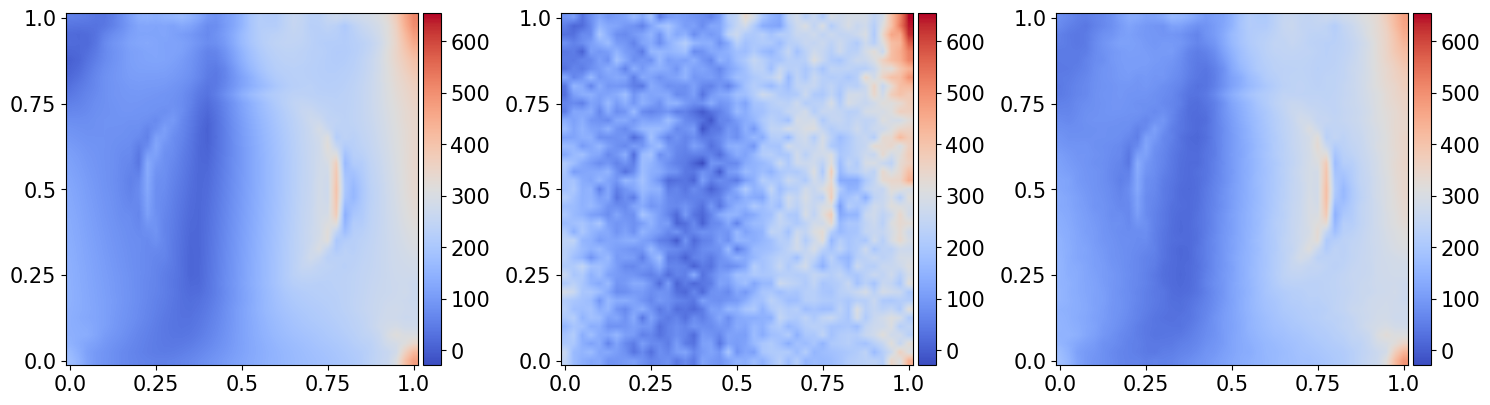}}
\subfigure{\includegraphics[width=65mm]{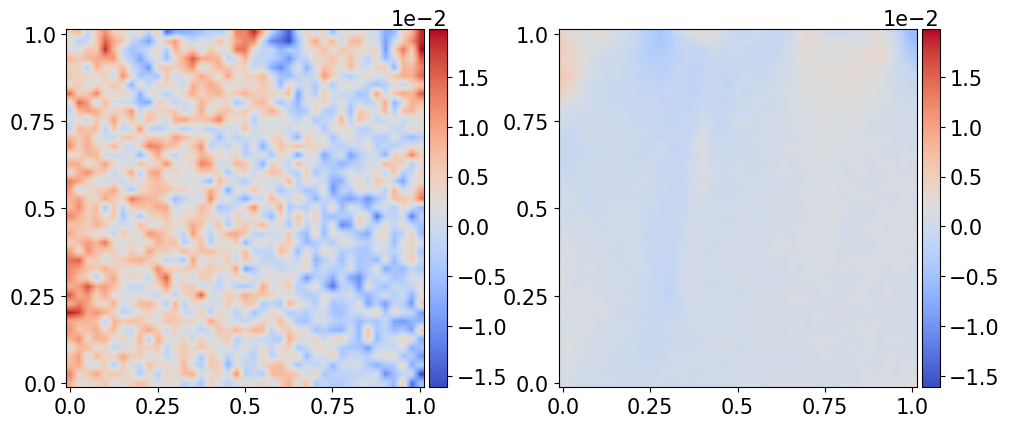}}
\caption{\textbf{Structural Mechanics}: (left to right) (a) test example, (b) RFF-$\infty$ approximation of $\hat{f}$, (c) RRFF-$\infty$ approximation of $\hat{f}$, (d) pointwise error for RFF-$\infty$ approximation of $\hat{f}$, (e) pointwise error for RRFF-$\infty$ approximation of $\hat{f}$.}
\label{Fig:Structural_test_pred_Gaussian}
\end{figure}

\begin{figure}[H]
\centering     
\subfigure{\includegraphics[width=98.5mm]{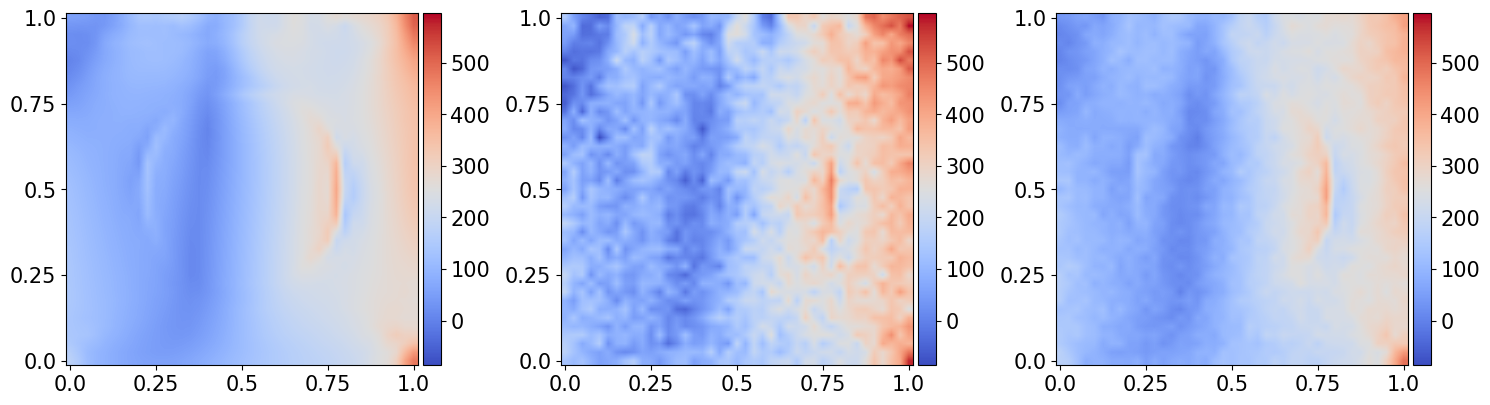}}
\subfigure{\includegraphics[width=64.5mm]{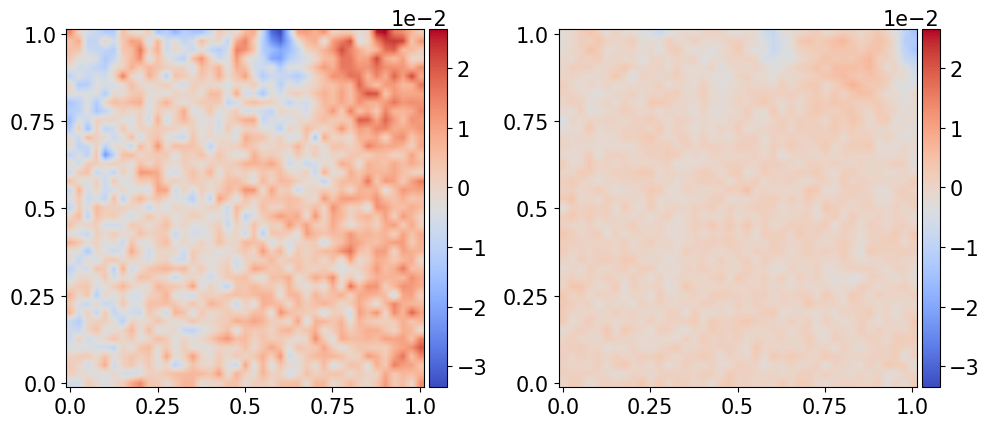}}
\caption{\textbf{Structural Mechanics}: (left to right) (a) test example, (b) RFF-2 approximation of $\hat{f}$, (c) RRFF-2 approximation of $\hat{f}$, (d) pointwise error for RFF-2 approximation of $\hat{f}$, (e) pointwise error for RRFF-2 approximation of $\hat{f}$.}
\label{Fig:Structural_test_pred_Student2}
\end{figure}

\begin{figure}[H]
\centering     
\subfigure{\includegraphics[width=98.5mm]{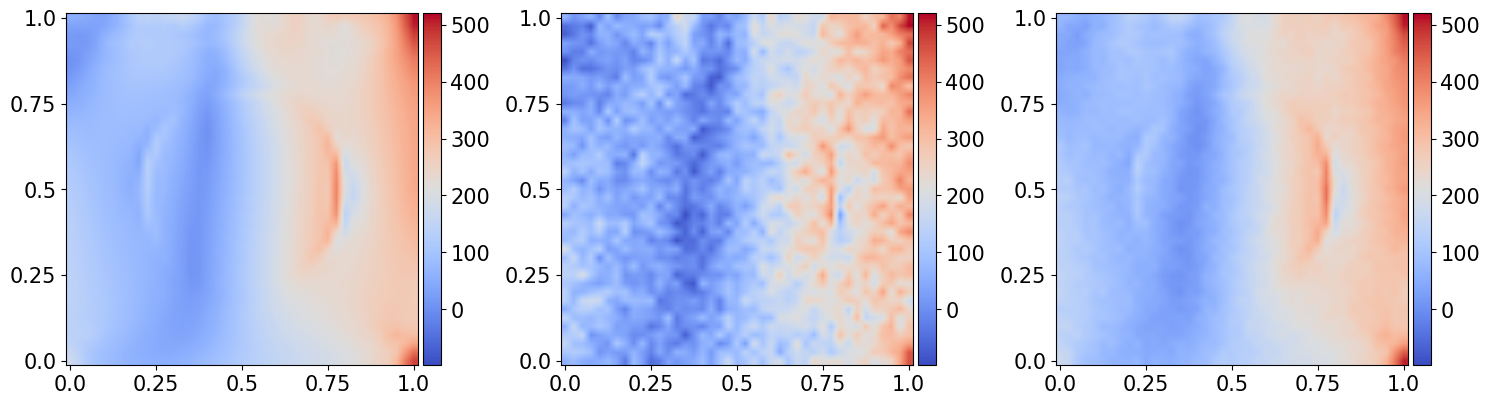}}
\subfigure{\includegraphics[width=64.5mm]{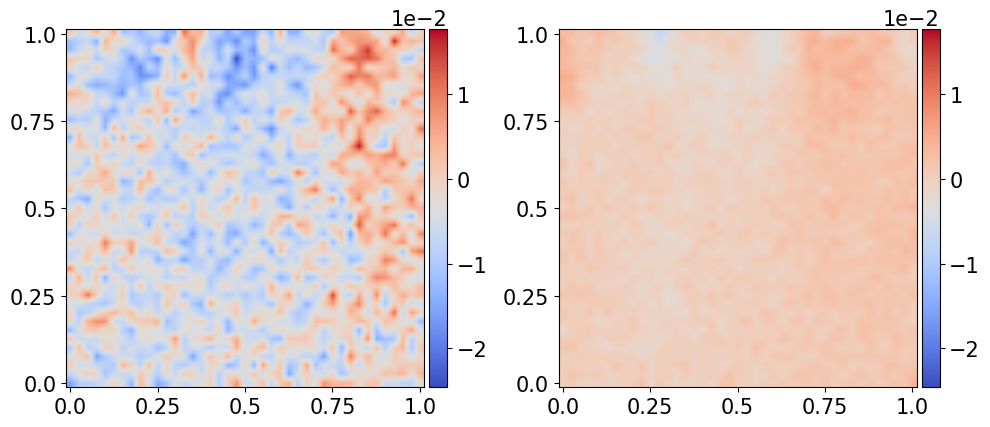}}
\caption{\textbf{Structural Mechanics}: (left to right) (a) test example, (b) RFF-3 approximation of $\hat{f}$, (c) RRFF-3 approximation of $\hat{f}$, (d) pointwise error for RFF-3 approximation of $\hat{f}$, (e) pointwise error for RRFF-3 approximation of $\hat{f}$.}
\label{Fig:Structural_test_pred_Student3}
\end{figure}

\end{document}